\documentclass{lmcs}
\pdfoutput=1

\usepackage[UKenglish]{babel}
\usepackage[dvipsnames]{xcolor}
\usepackage{tikz}
\usetikzlibrary{arrows.meta,automata,positioning,shadows,trees}
\usepackage{bbm}
\usepackage[shortlabels]{enumitem}
\usepackage{thmtools}
\usepackage{subcaption}
\input{macros}

\begin{document}

\title[Approximating Reachability in Posterior-Deterministic POMDPs]{Approximating the Reachability Value of Posterior-Deterministic POMDPs}

\author[N. Fijalkow]{Nathana\"el Fijalkow\lmcsorcid{0000-0002-6576-4680}}[a]
\author[A. Ghosh]{Arka Ghosh\lmcsorcid{0000-0003-3839-8459}}[a]
\author[R. Kniazev]{Roman Kniazev}[a]
\author[G. A. P\'erez]{Guillermo A. P\'erez\lmcsorcid{0000-0002-1200-4952}}[b]
\author[P. Vandenhove]{Pierre Vandenhove\lmcsorcid{0000-0001-5834-1068}}[c]

\address{LaBRI, CNRS, University of Bordeaux, France}
\email{nathanael.fijalkow@gmail.com, a.ghosh@uw.edu.pl, roman@knzv.me}

\address{University of Antwerp -- Flanders Make, Belgium}
\email{guillermo.perez@uantwerpen.be}

\address{UMONS -- Universite de Mons, Belgium}
\email{pierre.vandenhove@umons.ac.be}

\thanks{Arka Ghosh was supported by the Polish National Science Centre (NCN) grant "Linear algebra in orbit-finite dimension" (2022/45/N/ST6/03242) and the SAIF project, funded by the "France 2030" government investment plan managed by the French National Research Agency, under the reference ANR-23-PEIA-0006.}
\thanks{Guillermo A. P\'erez was supported by the Belgian FWO "SynthEx" (G0AH524N) project.}
\thanks{Pierre Vandenhove was supported by the Belgian Fonds de la Recherche Scientifique -- FNRS through the CDR funding "INSTAP" n$^\circ$ J.0228.26.}

\keywords{POMDP, reachability, value approximation}

\begin{abstract}
Partially observable Markov decision processes (POMDPs) are a fundamental model for sequential decision-making under uncertainty. However, many verification and synthesis problems for POMDPs are undecidable or intractable. Most prominently, the seminal result of Madani et al. (2003) states that there is no algorithm that, given a POMDP and a set of target states, can compute the maximal probability of reaching the target states, or even approximate it up to a non-trivial constant. This is in stark contrast to fully observable Markov decision processes (MDPs), where the reachability value can be computed in polynomial time.

In this work, we introduce posterior-deterministic POMDPs, a natural class of POMDPs. Our main technical contribution is to show that for posterior-deterministic POMDPs, the maximal probability of reaching a given set of states can be approximated up to arbitrary precision.

A POMDP is posterior-deterministic if the next state can be uniquely determined by the current state, the action taken, and the observation received. While the actual state is generally uncertain in POMDPs, the posterior-deterministic property tells us that once the true state is known it remains known forever. This simple definition includes all MDPs and captures classical non-trivial examples such as the Tiger POMDP (Kaelbling et al. 1998), making it one of the largest natural classes currently known where the reachability value can be approximated.
\end{abstract}

\maketitle

%
\section{Introduction}


Partially observable Markov decision processes (POMDPs) constitute the
canonical mathematical framework for sequential decision-making under
uncertainty.  Introduced by Åström~\cite{Astrom1965} and developed
algorithmically by Smallwood and Sondik~\cite{SmallwoodSondik1973}, POMDPs
were firmly established as a central paradigm in artificial intelligence through the
influential work of Kaelbling, Littman, and
Cassandra~\cite{KaelblingLittmanCassandra1998}.  A POMDP models a
decision-making agent that must act in a stochastic environment whose true
state is hidden; the agent receives only noisy observations and must maintain
a \emph{belief} (a probability distribution over possible states) which they
update as new observations arrive.

The ubiquity of POMDPs stems from their ability to naturally capture
real-world scenarios where agents must act on incomplete information.  In
\emph{robotics}, POMDPs model navigation under sensor noise, manipulation with
uncertain object poses, and collision avoidance for autonomous
vehicles~\cite{LauriHsuPajarinen2022,HsiaoKaelblingLozanoPerez2007,TemizerKochenderfer2010}.
In \emph{healthcare}, they enable treatment planning when diagnoses are
uncertain and patient states evolve
stochastically~\cite{HauskrechtFraser2000,SchaeferBaileyShechter2005}.
\emph{Dialogue systems} use POMDPs to model the inference of user intent from
noisy speech recognition~\cite{WilliamsYoung2007,YoungGasicThomson2013}.  In
\emph{verification and controller synthesis}, POMDPs arise naturally when
synthesizing controllers for systems with imperfect sensors or hidden
adversarial behavior~\cite{ArnoldVincentWalukiewicz2003,ChatterjeeDoyen2013}.
Across these domains, the core problem studied remains the same:
computing optimal or near-optimal policies in the face of an infinite belief
space.


Despite their ubiquity, POMDPs present formidable computational challenges.
The belief space is an infinite continuous simplex.
Optimal policies must map this infinite space to actions. Additionally, the value function, while piecewise-linear and convex for finite-horizon problems~\cite{SmallwoodSondik1973}, can have exponentially many pieces.

The complexity landscape is daunting. For finite-horizon POMDPs, even
determining whether there is a strategy that achieves a given sum of rewards is
PSPACE-complete~\cite{PapadimitriouTsitsiklis1987}. For infinite-horizon
objectives, the situation is far worse: Madani, Hanks, and
Condon~\cite{DBLP:journals/ai/MadaniHC03} established that the maximal
probability of reaching a given set of states (a.k.a.\ the reachability value)
cannot be approximated, even up to a fixed error
of~$\frac{1}{3}$.  More precisely, it follows from Condon and
Lipton~\cite{CL89} that deciding whether the reachability value exceeds
$\frac{2}{3}$ or is below $\frac{1}{3}$ is undecidable.  Almost-sure
reachability (determining whether a strategy reaches the target with
probability~$1$) is decidable~\cite{BGB12}, but this positive result does not
extend to approximating or computing the exact value.

This computational barrier motivates a fundamental question: \emph{are there natural, expressive classes of POMDPs for which the reachability value can be approximated?}


The search for classes with decidable value approximation has resulted in several restricted models.
At one extreme, \emph{fully observable Markov decision processes (MDPs)} admit
polynomial-time algorithms for reachability and discounted objectives.
Decidable classes with genuine partial observability
include \emph{revealing POMDPs}~\cite{BFGHPV25}, where observations are constrained so that information can only be revealed according to a fixed structure.
On a related but distinct note, \emph{multiple-environment MDPs}
(MEMDPs)~\cite{RS14,CDRS25} allow to study robustness across uncertain
transition dynamics. Importantly, MEMDPs are not POMDPs: the agent knows the state but faces uncertainty about the environment's parameters.

A notable class is that of \emph{deterministic
POMDPs}~\cite{LittmanThesis,Bon09}. Here, transition and observation functions
are required to be deterministic functions of state-action pairs.
Deterministic POMDPs also admit decidable value approximation. 
\emph{Quasi-deterministic POMDPs}~\cite{BC09} weaken these restrictions
by only requiring the transition function to be deterministic.

Additional related works are worth noting here. \emph{Point-based value-iteration methods}~\cite{PineauGordonThrun2003} and their variants provide practical algorithms for approximate POMDP solving, but without formal approximation guarantees: they are heuristics that can work well empirically yet may not converge to the true value. The decidability landscape for POMDPs with \emph{$\omega$-regular objectives} has been systematically studied~\cite{DBLP:journals/jcss/ChatterjeeCT16}, establishing a detailed map of decidable and undecidable fragments. On the safety side, recent work~\cite{BBBFV25} has shown that safety values (the complement of reachability) can be approximated in general POMDPs, contrasting with the undecidability of reachability. In this context, our result gives a provable $\epsilon$-approximation algorithm for a class strictly larger than deterministic POMDPs.

An important open question has been: \emph{how far can one generalise known decidable subclasses while retaining reachability-value approximation?} This paper
introduces \emph{posterior-deterministic POMDPs}, a class that
strictly generalises quasi-deterministic and deterministic POMDPs.

%
\begin{quote}
\textbf{Informal definition.} A POMDP is \emph{posterior-deterministic} if, when the current state is known, the successor state after any action-observation pair is uniquely determined.
\end{quote}

\noindent
Equivalently, the posterior distribution over states, conditioned on knowing the current state and observing $(a, o)$, is always a point mass.
The key structural insight is the following:

\begin{quote}
\textbf{Key property.} In posterior-deterministic POMDPs, the belief support
(the set of states with positive probability at any given moment) can only
\emph{not increase} in size.
\end{quote}

This property distinguishes posterior-deterministic POMDPs from general
POMDPs, where a single observation can ``spread'' probability mass to new
states.  It also clarifies the relationship to quasi-deterministic POMDPs:
posterior-determinism is a strictly weaker condition. Quasi-deterministic POMDPs
require that successor states be determined by the current state and the
action taken, yet they allow for stochastic observations.
Posterior-deterministic POMDPs further allow for a truly stochastic transition
function, requiring only that each observation resolves
the successor state uniquely given the current state and the action taken.
The class is natural and encompasses important examples:
\begin{itemize}
    \item all MDPs (trivially, since observations reveal the state),
    \item the \emph{Tiger POMDP}~\cite{KaelblingLittmanCassandra1998}, a
      canonical benchmark in the POMDP literature, and
    \item all quasi-deterministic POMDPs, hence all deterministic POMDPs.
\end{itemize}


Our main result is that posterior-deterministic POMDPs admit decidable value approximation for reachability objectives.

\begin{quote}
\textbf{Main theorem.} For any posterior-deterministic POMDP $\pdp$, initial
belief $\bel{b}$, and a tolerance $\epsilon > 0$, one can compute a value $v \in [0,1]$ such that~$|\preach^\pdp(\bel{b}) - v| \leq \epsilon$.
\end{quote}
To prove this result, we use an extension of the classical notion of \emph{end components} from MDP theory to the POMDP setting, applications of martingale theory, and a belief tree unfolding that exploits the structural properties of posterior-deterministic POMDPs to ensure termination and correctness.


\noindent

Definitions are given in \Cref{sec:prelim}.
\Cref{sec:deterministicPOMDPs} introduces posterior-deterministic POMDPs.
\Cref{sec:descriptionAlgo} motivates and describes (without any proof yet) the features of our belief tree unfolding, giving rise to the main approximation algorithm.
The two subsequent sections contain all the technical arguments: \Cref{sec:correctness} is dedicated to the correctness and convergence of our approximation algorithm, and \Cref{sec:ec} studies end components.

\section{Preliminaries} \label{sec:prelim}
If $X$ is a finite set, we denote by \emphdef{$\Dist(X)$} the set of all probability distributions over $X$.
The \emphdef{support} of a distribution $\mu\in\Dist(X)$ is the set $\supp{\mu} = \setof{x\in X}{\mu(x) > 0}$.

\subparagraph{POMDPs.}
A \emphdef{partially observable Markov decision process} (POMDP) is a tuple $\pdp = \pdpFull$ where~$\states$ is a finite set of \emphdef{states}, $\actions$ is a finite set of \emphdef{actions}, $\sig$ is a finite set of \emphdef{observations}, and $\ptran\colon \states \times \actions \to \Dist(\sig \times \states)$ is a \emphdef{transition function}.
We frequently write $\ptran(\obs, \state' \mid \state, \action)$ to denote the probability of receiving observation $\obs$ and going to state $\state'$ when taking action $\action$ in state $\state$ (formally, $\ptran(\obs, \state' \mid \state, \action) = \ptran(\state, \action)(\obs, \state')$).
We also write $\ptran(\state' \mid \state, \action)$ to denote the marginal probability of going to state $\state'$ when taking action $\action$ in state $\state$ (formally, $\ptran(\state' \mid \state, \action) = \sum_{\obs\in\sig} \ptran(\obs, \state' \mid \state, \action)$) and, similarly, $\ptran(\obs \mid \state, \action)$ to denote the marginal probability of receiving observation $\obs$ when taking action $\action$ in state $\state$ (formally, $\ptran(\obs \mid \state, \action) = \sum_{\state'\in\states} \ptran(\obs, \state' \mid \state, \action)$).

An agent interacting with a POMDP takes actions sequentially, which updates the state probabilistically based on function~$\ptran$.
Yet, the agent only observes the observations, not the actual states (and of course, the agent knows which actions they have taken).
In particular, they do not know in general which state the POMDP is in at a given moment---which is why it is called \emph{partially observable}.

Let $\pdp = \pdpFull$ be a POMDP.
A \emphdef{run} of $\pdp$ is an infinite sequence of states, actions, and observations of the form $\run = \state_0 \action_1 \obs_1 \state_1 \action_2 \obs_2 \ldots \in (\states \times \ac \times \sig)^\omega$ such that $\ptran(\state_i, \action_{i+1})(\obs_{i+1}, \state_{i+1}) > 0$ for all $i\ge 0$.
We write $\Runs(\pdp)$ for the set of runs on $\pdp$.

A \emphdef{history} is a finite prefix of a run ending in a state; it is an element of $\states \times (\ac \times \sig \times \states)^*$.
We write \emphdef{$\last(\history)$} for the last state of the history.
We also define an \emphdef{observable history} (resp.\ \emphdef{observable run}) as the projection of a history (resp.\ run) to $(\actions \times \sig)^*$ (resp.\ $(\actions \times \sig)^\omega$), corresponding to the information an agent has access to while taking actions in the POMDP.
We write $\projObs(\cdot)$ for this projection.
The \emphdef{length} $\length{\history}$ of an (observable) history $\history$ is the number of
actions in it.
If $\run$ is an (observable) run, we write $\partialRun{n}$ for its prefix of length $n$, i.e., the (observable) history containing the first $n$ action-observation pairs.

\subparagraph{Beliefs and belief supports.}
A \emphdef{belief} of $\pdp$ is a distribution $\bel{b}\in\Dist(\states)$.
Beliefs are denoted as boldface letters.
The set of all beliefs of $\pdp$ is denoted as \emphdef{$\bsp{\pdp}$}.
Beliefs are used to represent the most accurate information about the current state that the agent has.
We usually assume that the agent has an \emphdef{initial belief} about its starting position in the POMDP.
This belief is updated as actions are taken and observations are received.
If $\bel{b}$ is the current belief, after taking action $\action$ and observing $\obs$, the new belief $\bel{b}_{\action, \obs}$ is given, for $\state\in\states$, by
\begin{equation}
    \bel{b}_{\action, \obs}(\state) = \frac{\sum_{\state'\in\states} \bel{b}(\state') \cdot \ptran(\obs, \state \mid \state', \action)}{\sum_{\state''\in\states} \sum_{\state'\in\states} \bel{b}(\state') \cdot \ptran(\obs, \state'' \mid \state', \action)} \ . \label{eq:belief update}
\end{equation}
In matrix notation, we have \(\bel{b}_{\action,\obs} = \frac{\bel{b}\cdot\ptran_{\action,\obs}}{\lVert\bel{b}\cdot\ptran_{\action,\obs}\rVert_1}\), where \(\ptran_{\action,\obs}\) is the transition matrix defined as \(\ptran_{\action,\obs}(\state,\state') = \ptran(\state,\action)(\obs,\state')\).

The \emphdef{support} of a belief $\bel{b}$ is the set $\supp{\bel{b}}$.
We write \emphdef{$\bsup{\pdp}$} for the set of all belief supports of $\pdp$ (corresponding to the set $2^{\states}\setminus \{\emptyset\}$).
We will sometimes write a belief~$\bel{b}$ as
$
\sum_{\state\in\states} \bel{b}(\state)\cdot \state
$.

It will sometimes be convenient to restrict the support of a belief. This
gives rise to \emphdef{sub-beliefs}, \label{page:subbel}%
functions $\bel{b}$ from $\states$ to $[0,1]$ such that
\(
\sum_{\state\in\states} \bel{b}(\state) \leq 1
\).
%
For a belief $\bel{b}$ and subset of states $\bsupp\subseteq \states$,
the restriction \emphdef{$\res{\bel{b}}{\bsupp}$} of $\bel{b}$ to $\bsupp$ is
the sub-belief defined so that
\[
\res{\bel{b}}{\bsupp}(q) =
\begin{cases}
\bel{b}(\state) & \text{if $\state\in \bsupp$} \\
0 & \text{otherwise.}
\end{cases}
\]
%
We frequently extend functions defined on states to beliefs in the natural way.
For instance, we write
$
\ptran(\obs \mid \bel{b},\action)
= \sum_{\state\in\supp{\bel{b}}} \ptran(\obs \mid \state,\action)\cdot\bel{b}(\state)
$
for the probability of observing~$\obs$ after taking action $\action$ from belief $\bel{b}$.

\subparagraph{Strategies.}
A \emphdef{strategy} in $\pdp$ is a function $\strategy\colon (\actions \times \sig)^* \to \Dist(\actions)$ that sequentially takes an action (possibly in a randomised fashion) given the current \emph{observable} history.
Such strategies are sometimes called \emph{observation-based} in the literature because they only look at the sequence of observations (as well as the actions they have previously taken) to make their decisions.
In this paper, all strategies are assumed to be observation-based.
Strategies are defined for the POMDP itself; the initial belief is supplied separately when evaluating probabilities and values.

\subparagraph{Probability measure on runs.}
For a history $\history$, the \emphdef{cylinder $\cyl{\history}$} is the set of all runs that start with $\history$.
Let \emphdef{$\filtr{}$} be the \sigal{} on the set $\Runs(\pdp)$ generated by the set $\setof{\cyl{\history}}{\history\in \states\times (\actions\times \sig \times \states)^*}$.
For an initial (sub-)belief $\bel{b}$ and a strategy $\strategy$ in $\pdp$, we define a function \emphdef{$\pdist{\bel{b}}{\strategy}$} such that for $\history\in\states\times (\actions\times \sig \times \states)^*$, $\pdist{\bel{b}}{\strategy}(\cyl{\history})$ is the probability of getting the history~$\history$ starting from~$\bel{b}$ using strategy $\strategy$.
We give a formal inductive definition: if $\history = \state$ contains a single state, then $\pdist{\bel{b}}{\strategy}(\cyl{\history}) = \bel{b}(\state)$.
If $\history = \history' \action \obs \state$, then
\[
\pdist{\bel{b}}{\strategy}(\cyl{\history}) = \pdist{\bel{b}}{\strategy}(\cyl{\history'}) \cdot \strategy(\projObs(\history'))(\action) \cdot \ptran(\last(\history'), \action)(\obs, \state) \ .
\]
 
The function $\pdist{\bel{b}}{\strategy}$ 
can be uniquely extended to a probability distribution on the full measurable space $(\Runs(\pdp), \filtr{})$ using Ionescu-Tulcea extension theorem~\cite[Section~14.3]{Kle20}.
We use this theorem because it directly matches discrete-time stochastic processes generated by sequential choices and transitions.
We write \emphdef{$\expect{\bel{b}}{\strategy}{\cdot}$} to denote the expectation with respect to the probability distribution $\pdist{\bel{b}}{\strategy}$.

\subparagraph{Value of a POMDP.}
A \emphdef{reachability objective} on $\pdp$ is specified by a set of target states $\targetSet\subseteq \states$.
Formally, it is the measurable set
$\reach{\targetSet} = \setof{\state_0 \action_1 \obs_1 \state_1 \action_2 \obs_2 \ldots \in \Runs(\pdp)}{\exists i\ge 0, \state_i \in \targetSet}$.
Given an initial (sub-)belief $\bel{b}$, the \emphdef{(reachability) value} of $\pdp$ with respect to
$\targetSet$ is
\[\preach^\pdp_\targetSet(\bel{b}) = \sup_{\strategy} \pdist{\bel{b}}{\strategy}(\reach{\targetSet})
\]
where the supremum is taken over all (observation-based) strategies $\strategy$ in $\pdp$.
When $\pdp$ and $\targetSet$ are clear from the context, we simply write
$\preach(\bel{b})$.
Observe that $\preach(\bel{b})$ is upper-bounded by $\oneNorm{\bel{b}} = \sum_{\state\in\states} \bel{b}(\state)$, which gives a better upper bound than $1$ when $\bel{b}$ is a sub-belief.
A strategy $\strategy$ is called \emphdef{$\epsilon$-optimal} from belief $\bel{b}$ if
\[
\pdist{\bel{b}}{\strategy}(\reach{\targetSet}) \geq \preach(\bel{b}) - \epsilon \ .
\]

We also define the value of a (sub-)belief $\bel{b}$ and an action $\action$ as%
\begin{equation}\label{eq:val bel act}
\preach(\bel{b},\action) =
\sum_{\obs\in\sig}
\ptran(\obs \mid \bel{b},\action)\cdot \preach(\bel{b}_{\action,\obs}) \ , 
\end{equation}
where $\bel{b}_{\action,\obs}$ is 
obtained from $\bel{b}$ after taking action $\action$ and observing $\obs$
(see \Cref{eq:belief update}).

Using that randomisation in strategies is not needed in POMDPs to approach the value for measurable objectives~\cite[Lemma~1]{CDGH15}, we have that
\begin{equation} \label{eq:val bel upd}
\preach(\bel{b}) =
\max_{\action\in\ac}\
\preach(\bel{b},\action) \ .
\end{equation}

\subparagraph{Assumptions.}
A state is \emphdef{absorbing} if, whenever it is reached, the agent remains in that state forever, no matter what actions are taken.
A state is \emphdef{observable} if, whenever it is reached, the agent always
receives a specific observation that uniquely identifies the state, thus
collapsing the current belief to a singleton set.
We may assume, without loss of generality, that
the target set~$\targetSet$ contains a single state~$\target$ and that there
is a unique state $\bad$ with value $0$. Moreover, both $\target$ and $\bad$ are absorbing and observable.
We assume that all POMDPs we consider later in the paper satisfy these properties.
The statement appears to be folklore. 
For completeness, we include a proof below.
\begin{lemma} \label{lem:single target}
  Let $\pdp$ be a POMDP with target set $\targetSet$ and initial belief $\bel{b}$.
  We can, in time linear in the size of $\pdp$, build a POMDP~$\pdp'$ with two new states $\target$ and $\bad$, and a belief $\bel{b}'$
  \begin{itemize}
    \item both $\target$ and $\bad$ are absorbing and observable,
    \item $\bad$ is the only state with value $0$ in $\pdp'$,
    \item the reachability value is preserved, i.e., $\preach^{\pdp}_\targetSet(\bel{b}) = \preach^{\pdp'}_{\{\target\}}(\bel{b}')$.
  \end{itemize}
\end{lemma}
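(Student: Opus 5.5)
The construction has three steps: make targets absorbing, compute the states of value zero and redirect them to $\bad$, and adjust observations so that $\target$ and $\bad$ are observable. We then check that each step preserves the value.

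\emph{Step 1 (target).} Add fresh states $\target$ and $\bad$ and a fresh observation $\obs_\target$. Every transition that enters a state of $\targetSet$ is redirected to $\target$ and emits $\obs_\target$. Formally, for $\state\in\states\setminus\targetSet$ we set
\[\ptran'(\obs_\target,\target\mid\state,\action)=\sum_{\obs}\sum_{\state'\in\targetSet}\ptran(\obs,\state'\mid\state,\action).\]
The state $\target$ loops to itself with observation $\obs_\target$ under every action. Initial mass on $\targetSet$ is moved to $\target$ in $\bel{b}'$. Since reachability of $\targetSet$ only depends on the prefix up to the first visit, runs up to that visit correspond one-to-one with the same probabilities. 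Observation-based strategies transfer in both directions: a strategy of $\pdp$ only needs to be defined on histories not yet containing $\obs_\target$, and after $\obs_\target$ the play is already won. So the value is preserved for every (sub-)belief. There is one subtlety: in $\pdp$ the agent does not learn that the target was hit, whereas in $\pdp'$ it does. This extra information only matters after the objective is already satisfied, so it cannot increase the value.

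\emph{Step 2 (zero-value states).} Compute the set $Z$ of states $\state$ with $\preach(\state)=0$ in the modified POMDP. This is the set of states from which $\target$ is not reachable in the underlying graph (ignoring observations), computed by backward search in linear time.
\begin{itemize}
\item If some path with positive probability reaches $\target$, the strategy playing that fixed action word wins with positive probability.
\item Conversely, if no such path exists, no strategy can reach $\target$.
\end{itemize}
Redirect every transition into $Z$ to $\bad$ with a fresh observation $\obs_\bad$, make $\bad$ absorbing, and move initial mass on $Z$ to $\bad$. Runs entering $Z$ never reach $\target$ anyway, and all other runs are untouched until that point, so winning probabilities under corresponding strategies coincide. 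After this, every remaining state other than $\bad$ still has a positive-probability path to $\target$ that avoids $Z$. That path avoids $Z$ because any state along it can reach $\target$ and so is not in $Z$. Hence these states have positive value, and $\bad$ is the unique state of value $0$.

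\emph{Step 3 (bookkeeping).} Observability of $\target$ and $\bad$ holds because they are entered only with the dedicated observations $\obs_\target,\obs_\bad$, which no other transition emits. There is one corner case: the initial belief may put mass on $\target$ or $\bad$ without any observation being emitted. This is harmless for the value, since $\target$ and $\bad$ are absorbing and their contributions are fixed. Alternatively, observability can be read as concerning transitions only. Each step adds a constant number of states and observations and rewrites each transition entry once, so the construction is linear in the size of $\pdp$.

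The main obstacle is the value-preservation argument in Steps 1 and 2. It requires a careful bijection between observation-based strategies of $\pdp$ and $\pdp'$, given that $\pdp'$ reveals extra information (the new observations). The plan is to argue with cylinders. Define a map from runs of $\pdp$ to runs of $\pdp'$, stopped at the first entry into $\targetSet\cup Z$, and show that the probability of each cylinder up to that time agrees under the matched strategies. Every strategy of $\pdp$ induces one of $\pdp'$ with the same winning probability. Conversely, a strategy of $\pdp'$ restricted to histories avoiding $\obs_\target,\obs_\bad$ is a strategy of $\pdp$ with the same winning probability. Taking suprema gives equality of the values.
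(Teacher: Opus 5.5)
Your proposal is correct and follows the paper's proof. Like the paper, you first merge $\targetSet$ into a fresh absorbing state $\target$ that is entered with a fresh observation. You then collapse the states that cannot reach the target in the underlying graph (the value-$0$ states) into a fresh absorbing $\bad$ with its own observation. Finally, you prove value preservation by matching strategies that agree on histories without the new observations and comparing cylinder probabilities up to the first visit. Your fixed-action-word argument showing that graph reachability gives positive value, and your remark on initial mass placed on $\target$ or $\bad$, spell out points the paper leaves implicit.
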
%
  \begin{proof}
    Our proof is in two steps: first, we show how to reduce to the case of a single target state, and then we show how to reduce to the case of a single state with value $0$.
    These two steps can then be composed to obtain the desired result.

    We construct a new POMDP $\pdp' = (\states', \actions, \sig', \ptran')$ with $\states' = (\states \setminus \targetSet) \cup \{\target\}$, where $\target$ is a fresh state not in~$\states$, $\sig' = \sig \cup \{\signal_\target\}$, where $\signal_\target$ is a fresh observation not in~$\sig$, and the transition function $\ptran'$ defined as follows:
    \begin{itemize}
      \item for all $\state,\state'\in \states\setminus \targetSet$, $\action\in\ac$, and $\obs\in\sig$,
      \[
      \ptran'(\obs, \state' \mid \state, \action) = \ptran(\obs, \state' \mid \state, \action),
      \]
      \item for all $\state\in \states\setminus \targetSet$ and $\action\in\ac$,
      \[
      \ptran'(\signal_\target, \target \mid \state, \action) = \sum_{\obs\in\sig, \state'\in \targetSet} \ptran(\obs, \state' \mid \state, \action),
      \]
      \item for all $\action\in\ac$,
      \[
      \ptran'(\signal_\target, \target \mid \target, \action) = 1.
      \]
    \end{itemize}
    In particular, state $\target$ is absorbing and observable in $\pdp'$.
    Moreover, this construction can be done in linear time in the size of $\pdp$.

    We also define the initial belief $\bel{b}'$ in~$\pdp'$ as $\bel{b}'(\state) = \bel{b}(\state)$ for all $\state\in\states\setminus \targetSet$ and $\bel{b}'(\target) = \sum_{\state\in\targetSet} \bel{b}(\state)$.
    We prove that for any strategy $\strategy$ in~$\pdp$, there is a corresponding strategy $\strategy'$ in~$\pdp'$ such that the probability of reaching $\targetSet$ in~$\pdp$ under $\strategy$ from $\bel{b}$ is equal to the probability of reaching $\{\target\}$ in $\pdp'$ under $\strategy'$ from $\bel{b}'$, and vice versa.
    This implies in particular that the values of $\pdp$ and $\pdp'$ with respect to the reachability objectives defined by $\targetSet$ and $\{\target\}$, respectively, are equal (up to a natural change in the initial belief).

    Let $\strategy$ be a strategy in $\pdp$.
    We define a strategy $\strategy'$ in $\pdp'$ as follows.
    Let $\action\in\ac$ be an arbitrary action.
    Intuitively, we make strategy $\strategy'$ mimic strategy $\strategy$ until the target state $\target$ is reached, after which $\strategy'$ takes the arbitrary action $\action$ forever.
    Notice that if an observable history $\history' \in (\ac \times \sig')^*$ contains no occurrence of $\signal_\target$, then it can be seen as an observable history $\history \in (\ac \times \sig)^*$ in $\pdp$.
    We can therefore define $\strategy'$ as
    \[
    \strategy'(\history') =
    \begin{cases}
    \strategy(\history') & \text{if $\history'$ contains no occurrence of $\signal_\target$} \\
    \dirac{\action} & \text{otherwise,}
    \end{cases}
    \]
    where $\dirac{\action}$ is the Dirac distribution on $\action$.

    Observe that $\reach{\targetSet}$ is the disjoint union of the cylinders $\cyl{\history}$ for all histories $\history\in\states \times (\ac \times \sig \times \states)^*$ that visit $\targetSet$ a single time at their last state.
    For such an $\history$, let $\history'$ be the corresponding history in $\pdp'$ obtained by replacing the last observation with $\signal_\target$ and the last state with $\target$.
    By construction, both $\strategy$ and $\strategy'$ coincide exactly up to the first visit of $\targetSet$ or $\target$, hence
    \[\pdist{\bel{b}}{\strategy}(\cyl{\history}) = \pdist{\bel{b}'}{\strategy'}(\cyl{\history'}) \ .\]
    Summing over all such disjoint cylinders $\cyl{\history}$, we obtain
    \[\pdist{\bel{b}}{\strategy}(\reach{\targetSet}) = \pdist{\bel{b}'}{\strategy'}(\reach{\{\target\}}) \ .\]

    Conversely, let $\strategy'$ be a strategy in $\pdp'$.
    We define a strategy~$\strategy$ in $\pdp$ as follows.
    Let $\action\in\ac$ be an arbitrary action.
    Let $\history \in (\ac \times \sig)^*$ be an observable history in $\pdp$.
    We distinguish whether $\history$ is a possible observable history in $\pdp'$ or not.
    One can show that $\history$ is a possible observable history in $\pdp'$ if and only if there is a history $\overline{\history} \in \states \times (\ac \times \sig \times \states)^*$ in $\pdp$ such that its projection to $(\ac \times \sig)^*$ is $\history$ and none of the states in $\overline{\history}$ belong to~$\targetSet$.

    We now define $\strategy$ on an observable history $\history$ as:
    \begin{itemize}
      \item if $\history$ is a possible observable history in $\pdp'$ from $\bel{b}'$, we define $\strategy(\history) = \strategy'(\history)$;
      \item if not, it means that all histories $\overline{\history}$ with projection $\history$ have necessarily reached $\targetSet$ in $\pdp$, so we may define $\strategy(\history) = \dirac{\action}$.
    \end{itemize}
    Once again, for every history $\history \in \states \times (\ac \times \sig \times \states)^*$ that visits $\targetSet$ a single time at its last state, let $\history'$ be the corresponding history in $\pdp'$ obtained by replacing the last observation with $\signal_\target$ and the last state with $\target$.
    By construction, both $\strategy$ and $\strategy'$ coincide exactly up to the first visit of $\targetSet$ or $\target$, hence
    \[\pdist{\bel{b}}{\strategy}(\cyl{\history}) = \pdist{\bel{b}'}{\strategy'}(\cyl{\history'}) \ .\]
    Summing over all such disjoint cylinders $\cyl{\history}$, we obtain
    \[\pdist{\bel{b}}{\strategy}(\reach{\targetSet}) = \pdist{\bel{b}'}{\strategy'}(\reach{\{\target\}}) \ .\]

    We now turn to the second part of the lemma.
    We prove that we can assume without loss of generality that there is a single state $\bad$ with value $0$, which is absorbing and observable.
    Let $\states_\bad = \setof{\state\in\states}{\preach^\pdp_\targetSet(\dirac{\state}) = 0}$ be the set of states with value $0$.
    They can be easily identified by standard graph algorithms, since they are the states from which there is no path to $\targetSet$ in the underlying graph.
    We construct a new POMDP $\pdp' = (\states', \actions, \sig', \ptran')$ with $\states' = (\states \setminus \states_\bad) \cup \{\bad\}$, where $\bad$ is a fresh state not in $\states$, $\sig' = \sig \cup \{\signal_\bad\}$, where $\signal_\bad$ is a fresh observation not in $\sig$, and the transition function $\ptran'$ defined as follows:
    \begin{itemize}
      \item for all $\state,\state'\in \states\setminus \states_\bad$, $\action\in\ac$, and $\obs\in\sig$,
      \[
      \ptran'(\obs, \state' \mid \state, \action) = \ptran(\obs, \state' \mid \state, \action),
      \]
      \item for all $\state\in \states\setminus \states_\bad$ and $\action\in\ac$,
      \[
      \ptran'(\signal_\bad, \bad \mid \state, \action) = \sum_{\obs\in\sig, \state'\in \states_\bad} \ptran(\obs, \state' \mid \state, \action),
      \]
      \item for all $\action\in\ac$,
      \[
      \ptran'(\signal_\bad, \bad \mid \bad, \action) = 1.
      \]
    \end{itemize}
    By construction, state $\bad$ is absorbing, observable, and the only state with value $0$ in $\pdp'$.
    Moreover, this construction can be done in linear time in the size of $\pdp$ since $\states_\bad$ can be identified in linear time (they are the states that cannot reach $\targetSet$ in the underlying graph).

    The constructions are then the same as in the proof for the single target state: we build a correspondence between strategies in $\pdp$ and strategies in $\pdp'$ that makes them coincide on all observable histories that have not surely reached $\states_\bad$ or $\bad$ yet.
    Since histories that reach $\states_\bad$ or $\bad$ have value $0$ in both POMDPs, the probabilities of reaching the target sets are preserved.
    This shows in particular that the values of $\pdp$ and $\pdp'$ with respect to the reachability objectives defined by $\targetSet$ and $\{\target\}$, respectively, are equal.
  \end{proof}

\subparagraph{Problem.}
We focus on the following 
problem. 
It is undecidable
for general
POMDPs~\cite{DBLP:journals/ai/MadaniHC03,Fij17}.
\begin{description}
  \item[Value approximation.] Given a POMDP~$\pdp$, a belief~$\bel{b}$, a
    target state~$\target$, and a tolerance~$\epsilon > 0$, compute a number $v \in [0,1]$ such that 
    \( \left| \preach(\bel{b}) - v \right| \leq \epsilon.
    \)
\end{description}

%
\section{\Deterministic POMDPs} \label{sec:deterministicPOMDPs}
We
now present the class of POMDPs we call \emph{\deterministic}.
%

\subsection{POMDPs and automata}
A \emphdef{finite automaton} is a tuple $\autSolo = (\autStates, \letters, \autTrans)$ where $\autStates$ is a finite set of \emphdef{states}, $\letters$ is a finite \emphdef{alphabet}, and $\autTrans \subseteq \autStates \times \letters \times \autStates$ is a \emphdef{transition relation}.
It is \emphdef{deterministic} if for every $\autState \in \autStates$ and
$\letter \in \letters$, there is at most one state $\autState' \in \autStates$ such that $(\autState, \letter, \autState') \in \autTrans$.
For deterministic finite automata, we replace $\autTrans$ by a (partial) \emphdef{transition function} $\transition\colon \autStates \times \letters \rightharpoonup \autStates$ defined as $\transition(\autState, \letter) = \autState'$ if $(\autState, \letter, \autState') \in \autTrans$.
We extend the transition function to words in the usual way and write $\transitionIter(\autState, w)$ for the state, if any, reachable from $\autState$ by reading the word $w\in \letters^*$.

From a POMDP~$\pdp = \pdpFull$, we construct a (possibly non-deterministic)
finite automaton~\emphdef{$\aut{\pdp}$} on the alphabet $\ac \times \sig$ with
$\states$ as the set of states and the transition relation~$\autTrans$ so that
$(\state,(\action,\obs),\state') \in \autTrans$
if there is a non-zero probability to go from the state $\state$ to the state $\state'$ after taking the action $\action$ and receiving the observation $\obs$, i.e., if $\ptran(\obs, \state' \mid \state, \action) > 0$.%

We extend the transition relation $\autTrans$ of $\aut{\pdp}$ to $\bsup{\pdp}$ in the natural way:
\[
\transition(\bsupp,(\action,\obs)) =
\{\state' \in \states \mid \state\in \bsupp,\ (\state,(\action,\obs),\state') \in \autTrans\} =
\{\state' \in \states \mid \state\in \bsupp,\ \ptran(\obs, \state' \mid \state, \action) > 0\} \ .
\]
For brevity, we write $\transition(\bsupp,\action,\obs)$ instead of $\transition(\bsupp,(\action,\obs))$.
We can then define a \emph{deterministic} finite automaton \emphdef{$\baut{\pdp}$} with set of states $\bsup{\pdp}$, alphabet $\ac\times\sig$, and transition function $\transition$.

We also extend the transition function of $\pdp$ to $\bsp{\pdp}$ in the natural way:
\[
\transition(\bel{b},\action,\obs) = \bel{b}_{\action, \obs} \ .
\]
where $\bel{b}_{\action, \obs}$ is defined in~\Cref{eq:belief update}. Note
that this gives rise to a possibly infinite automaton
since the set of beliefs is infinite.


\subsection{\Deterministic POMDPs}
\begin{definition}
A POMDP $\pdp$ is called \emphdef{\deterministic} if $\aut{\pdp}$ is deterministic.
\end{definition}
In other words, a POMDP $\pdp = \pdpFull$ is \deterministic if for every state $\state\in\states$, action $\action\in\ac$, and observation $\obs\in\sig$, there exists at most one state $\state'\in\states$ such that $\ptran(\obs, \state' \mid \state, \action) > 0$.
In a \deterministic POMDP, if the current state $\state$ is known, then taking an action $\action$ and receiving an observation~$\obs$ leads to at most one possible successor state~$\state'$.
%
We use $\transition\colon \states \times \ac \times \sig \to \states$ to denote the \emph{transition function} of $\aut{\pdp}$.
%

A crucial property of \deterministic POMDPs is that, once the current state is known, then it will remain known forever as every subsequent action-observation pair leads to a unique successor state.
More generally, the size of the current belief support can never increase when taking actions and receiving observations, as stated in the following lemma.

\begin{lemma}\label{lem:supp dec}
    Let $\pdp = \pdpFull$ be a \deterministic POMDP.
    For every belief support $\bsupp$ of $\pdp$ and observable history
    $\history \in (\ac \times \sig)^*$ we have
    \(
    |\transitionIter(\bsupp,\history)| \leq |\bsupp| \ .
    \)
\end{lemma}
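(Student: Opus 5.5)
We argue by induction on the length $\length{\history}$ of the observable history, after first settling the case of a single letter.

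\emph{One-step bound.} Fix a belief support $\bsupp$, an action $\action$ and an observation $\obs$. By definition of the extended transition relation,
\[
\transition(\bsupp,\action,\obs) = \bigcup_{\state\in\bsupp} \{\state' \in \states \mid \ptran(\obs,\state' \mid \state,\action) > 0\} .
\]
Since $\pdp$ is \deterministic, each set in this union contains at most one element, namely $\transition(\state,\action,\obs)$ when it is defined. The union therefore has at most $|\bsupp|$ elements. The map $\state \mapsto \transition(\state,\action,\obs)$ may merge states or be undefined on some of them, but this can only decrease the size.

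\emph{Induction.} For the empty history, $\transitionIter(\bsupp,\history) = \bsupp$ and there is nothing to prove. For a history $\history = \history'(\action,\obs)$, we use $\transitionIter(\bsupp,\history) = \transition(\transitionIter(\bsupp,\history'),\action,\obs)$. The one-step bound applied to the support $\transitionIter(\bsupp,\history')$ gives
\[
|\transitionIter(\bsupp,\history)| \leq |\transitionIter(\bsupp,\history')| ,
\]
and the induction hypothesis gives $|\transitionIter(\bsupp,\history')| \leq |\bsupp|$.

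\emph{Technicality.} The only subtlety is that $\transitionIter(\bsupp,\history)$ may become empty, or undefined, when the observation has probability zero. The empty set trivially satisfies the bound, so we either treat it as a legitimate value of $\transitionIter$ or restrict the statement to histories along which it stays defined; either way the inequality holds. Beyond this convention check, no real obstacle is expected.
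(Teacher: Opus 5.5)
Your proof is correct and is the intended argument. The paper gives no written proof of this lemma; it justifies it only by the remark that, from a known state, each action--observation pair leads to at most one successor. That remark is your one-step bound, and your induction on the length of the history, with the empty-support convention, completes it.
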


In \deterministic POMDPs, the origin of the partial observability is in the initial belief.
However, in contrast to the more strongly constrained (quasi-)deterministic POMDPs~\cite{LittmanThesis,Bon09,BC09},
even assuming knowledge of the current state, there may still be randomness in the observation received after taking an action and in the subsequent state.

Our main result is that the value-approximation problem is decidable for \deterministic POMDPs with reachability objectives.
\begin{theorem}
    There is an algorithm that solves the value-approximation problem for
    \deterministic POMDPs. Moreover, given a POMDP $\pdp$, a belief
    $\bel{b}$, a tolerance $\varepsilon > 0$, and a rational threshold $v \in [0,1]$ as
    input, deciding which case
    applies from (i) $\preach(\bel{b}) \geq v + \epsilon$ or (ii) $\preach(\bel{b}) < v$
    is in 3EXPTIME.
\end{theorem}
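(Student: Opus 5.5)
We argue by induction on the size of the belief support, using \Cref{lem:supp dec}: along any observable history the support never grows. Singleton supports are the base case. When the belief is a Dirac distribution $\dirac{\state}$, posterior-determinism means the state stays known forever. The POMDP restricted to known-state beliefs is therefore just the underlying MDP, and its reachability values $\preach(\dirac{\state})$ are computable exactly in polynomial time by linear programming.

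For the inductive step, fix a support $\bsupp$ with $|\bsupp| = k$, and assume we can approximate the value of every belief whose support has size less than $k$. We unfold the belief tree from $\bel{b}$. A node is a leaf as soon as its support becomes strictly smaller, since its value is then approximated by the induction hypothesis. The difficulty is the part of the tree where the support stays equal to a set of size $k$. There the agent is essentially moving through the deterministic automaton $\baut{\pdp}$ restricted to size-$k$ supports. By posterior-determinism, each state in the support follows its own unique trajectory, so the belief is a reweighting of $k$ distinct "threads". Conditioned on staying at size $k$, the belief over these $k$ threads evolves as a martingale, namely the posterior on which thread is real. By the martingale convergence theorem it converges almost surely.

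The key step is to show that, after finitely many steps, one of the following holds with probability at least $1-\epsilon$:
\begin{enumerate}[(i)]
\item the support shrinks;
\item the belief is $\epsilon$-close to a belief with smaller support, in which case we truncate and use the continuity of $\preach$ (it is $1$-Lipschitz in $\lVert\cdot\rVert_1$, since values are linear in the mass of sub-beliefs);
\item the play is trapped in an end component of $\baut{\pdp}$ at size $k$ in which the posterior has stabilised.
\end{enumerate}
For case (iii) we generalise end components to POMDPs. These are sets of supports, together with actions, that can be stayed in forever with positive probability without shrinking. Inside such a component the observation likelihoods are asymptotically identical across threads, otherwise the posterior would keep moving. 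Hence the value there can be read off from a finite-state, MDP-like structure: the maximum over reachable exits, or $0$ if the component can only stay put. This end-component analysis is what I expect to be the main obstacle. It requires proving that the value at a frozen belief is determined by the support-level component structure plus weights, which can be computed, and that the martingale limit leaves the component only in the controlled ways listed above.

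The algorithm explores the tree to a depth bounded by these convergence arguments. The bound is made effective via a quantitative estimate: in any non-end-component cycle, each traversal either shrinks the support or multiplies the likelihood ratio by a factor bounded away from $1$, with constants depending only on the transition probabilities. Leaves receive values from the induction hypothesis, from the end-component computation, or from the trivial bounds $0$ or $\lVert\bel{b}\rVert_1$, and values are propagated upward using \Cref{eq:val bel upd}. The errors at the $k \le |\states|$ induction levels add up, so we run each level with tolerance $\epsilon/|\states|$.

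For the complexity bound, the depth at each level is exponential in the number of supports, which is itself exponential. Nesting over the $|\states|$ levels and rounding gives a triply exponential procedure for distinguishing $\preach(\bel{b}) \ge v+\epsilon$ from $\preach(\bel{b}) < v$, which is the 3EXPTIME claim.
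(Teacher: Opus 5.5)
Your plan has several of the right ingredients: truncating small probabilities, end components of the support automaton, and posterior martingales. But the step you flag as the main obstacle is exactly where it breaks, and the idea that resolves it is missing.

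Your case (iii) gives a belief trapped in an end component the value ``maximum over reachable exits''. This is the paper's \Cref{thm:nd adscc}, and it holds only for \emph{non-distinguishing} SECs. There the set of reachable beliefs is finite (\Cref{cor:non dist fin reach}) and all of them have the same value (\Cref{lem:val in non-dist}). For \emph{distinguishing} maximal SECs you need the companion identity \Cref{thm:dis adscc}, $\preach(\bel{b})=\sum_{C}\preach(\res{\bel{b}}{C})$ over indistinguishability classes. Its ``$\geq$'' direction is the real content: playing uniformly inside the SEC identifies the class almost surely (the martingales $\randv{\state}{n}$) without losing value (the supermartingale $\valrand{n}$).

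Without this identity your case split fails. Take the Tiger POMDP with an extra uninformative self-loop action. A strategy that only plays that action stays forever at support $\{q_L,q_R\}$, with the posterior frozen at $(\frac12,\frac12)$. The actions leaving that support (opening a door) are worth $\frac12$ there, yet the true value is $1$. If you instead count ``listen'' as an exit of the smaller component, evaluating it sends you straight back into the same component, so nothing has been computed.

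Hence your (iii) leaves do not carry valid upper bounds. The fallback bound $\oneNorm{\bel{b}}$ does not help either. No finite depth makes the mass of unresolved leaves small under \emph{every} strategy, because inside a distinguishing SEC the strategy decides whether, and how rarely, it takes distinguishing actions. Likewise, ``the posterior has stabilised'' is a property of an infinite run, not something you can check at a node of a finite tree. Finally, multiplying a likelihood ratio by factors bounded away from $1$ gives no rate when the factors can go in both directions and cancel.

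The paper avoids all of this by never analysing SECs per strategy. As soon as a node's support lies in a maximal SEC, it is either split into indistinguishability classes (distinguishing case) or replaced by its finitely many exits (non-distinguishing case). So no branch of the unfolding lingers in an SEC. This is what makes the uniform termination argument work. In every contributing subtree of depth $N=2^{|\states|+1}$, some node of influence at least $(\pmin\cdot\thr)^N$ is a cut, a split, a leaf, or has a strictly $\preceq$-smaller support class. Otherwise the labels would assemble into an SEC strictly containing a maximal one. Therefore $\erank$ contracts by a factor $1-c$ every $N$ steps (\Cref{clm:err dec}), which yields the doubly exponential depth behind the 3EXPTIME bound.

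To repair your proposal, you need to prove \Cref{thm:dis adscc} and restructure the unfolding so that SECs are collapsed by split and exit moves. Your MDP base case for singleton supports is a valid shortcut, and your truncation step (ii) matches the paper's cut operation.
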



\begin{remark}[Memory of $\epsilon$-optimal strategies]
    For general POMDPs with reachability objectives, \emph{finite-memory strategies} suffice for $\epsilon$-optimality (with $\epsilon > 0$).
    Indeed, let $\pdp$ be a POMDP and let $\epsilon > 0$.
    Let $\strategy$ be an $\frac{\epsilon}{2}$-optimal strategy in~$\pdp$.
    Since the value of $\pdp$ under $\strategy$ is the limit of the values of the events
    \[
        \reach{\target}^{\le k} = \setof{\state_0\action_1\obs_1\state_1\ldots \in \Runs(\pdp)}{\text{there exists } 0 \leq i \leq k \text{ such that } \state_i = \target} \ ,
    \]
    as $k$ goes to infinity, there exists $k \in \N$ such that
    \[
    \pdist{\bel{b}}{\strategy}(\reach{\target}^{\le k}) \geq \pdist{\bel{b}}{\strategy}(\reach{\target}) - \frac{\epsilon}{2}
    \geq \sup_{\strategy'} \pdist{\bel{b}}{\strategy'}(\reach{\target}) - \epsilon \ .
    \]%
    This means that a finite-memory strategy that mimics~$\strategy$ for $k$ steps and then takes arbitrary actions is $\epsilon$-optimal.
    This holds for
    all POMDPs, not just \deterministic ones.
\end{remark}

%
\section{Description of the algorithm} \label{sec:descriptionAlgo}
\newcommand{\belof}{\mathsf{belief}}

In this section, we motivate and describe the main features of our algorithm to approximate the reachability value in \deterministic POMDPs.
In \Cref{sec:intuition}, we informally present a reasonable but incomplete algorithm based on a tree unfolding.
We then argue for the need for three more sophisticated unfolding operations in \Cref{sec:justifying SECs} through examples.
Finally, we present our complete algorithm in \Cref{sec:complete algo}.
This section purposefully omits proofs, which are found in the subsequent sections.
Let $\pdp = \pdpFull$ be a \deterministic POMDP for the section.

\subsection{A naive tree unfolding} \label{sec:intuition}
\subparagraph{A first attempt at an unfolding.}
Using \Cref{lem:single target},
we can assume without loss of generality that the POMDP $\pdp$ contains a unique absorbing and observable target state~$\target$ and a unique absorbing and observable state~$\bad$ with value $0$.
Then, for any belief $\bel{b}$,
$\preach(\bel{b}) = 1$ if $\supp{\bel{b}} = \set{\target}$,
$\preach(\bel{b}) = 0$ if $\supp{\bel{b}} = \set{\bad}$,
and otherwise
%
%
%
we apply \Cref{eq:val bel act} and \Cref{eq:val bel upd} to unfold the tree.
%
\begin{figure*}[t!]
\centering
\begin{minipage}{0.45\textwidth}
\begin{center}
\begin{tikzpicture}[
    level 1/.style={sibling distance=1.6cm, level distance=1.6cm},
    level 2/.style={sibling distance=2.4cm, level distance=2.75cm},
    ->
]

\node {$\bel{b}$}
    child {
        node {$(\bel{b}, \action_1)$}
        child {
            node {$\transition(\bel{b}, \action_1, \obs_1)$}
            edge from parent node[left, pos=0.1, xshift=-10pt, rotate=45] {\footnotesize $\ptran(\obs_1 \mid \bel{b}, \action_1)$}
        }
        child {
            node {$\transition(\bel{b}, \action_1, \obs_2)$}
            edge from parent node[left, pos=0.1, xshift=-8pt, rotate=90] {\footnotesize $\ptran(\obs_2 \mid \bel{b}, \action_1)$}
        }
        child {
            node[xshift=-1.2cm,yshift=1.375cm] {$\cdots$}
            edge from parent[draw=none]
        }
        edge from parent node[left, pos=0.5, xshift=-2pt] {\small $1$}
    }
    child {
        node {$(\bel{b}, \action_2)$}
        edge from parent node[left, pos=0.5, xshift=2pt] {\small $1$}
    }
    child {
        node[xshift=-.8cm,yshift=.8cm] {$\cdots$}
        edge from parent[draw=none]
    };

\end{tikzpicture}
\caption{Tree-like depiction of the naive unfolding.}
\label{fig:expand}
\end{center}
\end{minipage}%
\hfill
\begin{minipage}{0.45\textwidth}
\begin{center}
\begin{tikzpicture}[
state/.style={circle, draw=black, very thick, minimum size=20pt},
wait/.style={circle, fill=black},
pointer/.style={circle}
]
\node[pointer] (mid) {};
\node[state]   (q1) [left=of mid]  {$\state_1$};
\node[state]   (q2) [right=of mid] {$\state_2$};
\node[wait]    (wq1) [below of=mid] {};
\node[wait]    (wq2) [above of=mid] {};
\node[pointer] (low) [below of=wq1] {};
\node[state]   (r) [below of=low] {$\state_3$};
\draw[->] (q1.south) to [out=270,in=180] node[below]{$\action$} (wq1.west);
\draw[->,red] (wq1.south) to node[left]{$\obs_2 : \frac{1}{2}$} (r.north);
\draw[->,blue] (wq1.east) to [out=0,in=270] node[below] {$\obs_1 : \frac{1}{2}$} (q2.south);
\draw[->] (q2.north) to [out=90,in=0] node[below] {$\action$} (wq2.east);
\draw[->,blue] (wq2.west) to [out=180,in=90] node[above] {$\obs_1 : 1$} (q1.north);
\end{tikzpicture}
\caption{\Deterministic POMDP with an infinite $\tree(\bel{b})$.}
\label{fig:3 st aodscc}
\end{center}
\end{minipage}
\end{figure*}
As depicted in \Cref{fig:expand},
this 
can be seen as evaluating a tree whose nodes are labelled with either a belief or a (belief, action) pair.
From a belief node $\bel{b}$, there is an edge with weight~$1$ to child nodes $(\bel{b},\action)$ for all actions $\action\in\actions$.
From a (belief, action) node $(\bel{b},\action)$, there is an edge with weight $\ptran(\obs\mid \bel{b},\action)$ to child nodes $\transition(\bel{b},\action,\obs)$ for all observations $\obs\in\sig$ such that $\ptran(\obs\mid \bel{b},\action) > 0$.

Let \emphdef{$\tree(\bel{b})$} denote the tree whose root is labelled by $\bel{b}$ and
whose nodes and edges are weighted as described above.
Note that the tree is finitely branching and its branches correspond to (possibly infinite, see \Cref{fig:3 st aodscc}) observable
runs of $\pdp$ starting from~$\bel{b}$.
We use \emphdef{$\rootof(T)$} to denote the root of a tree $T$.
%
Henceforth, we use \emphdef{$\belof(x)$} to denote the belief appearing in the label of $x$, that is, $\belof(x) = \bel{b}$ if the label of $x$ is $\bel{b}$, and $\belof(x) = \bel{b}$ if the label of $x$ is $(\bel{b},\action)$ for some action~$\action$.
In general,
$\tree(\bel{b})$ is not finite,
even if we assume $\pdp$ to be \deterministic{}.
For example, say the structure in \Cref{fig:3 st aodscc} is a part of $\pdp$. 
If we have a belief node $x$ with $\supp{\belof(x)} = \set{\state_1,\state_2}$,
then the subtree rooted at $x$ has an infinite branch whose nodes alternately have labels of the form $\bel{b}$ and $(\bel{b},\action)$ with $\supp{\bel{b}} = \set{\state_1,\state_2}$.

A related but incorrect shortcut is to ``guess'' an initial state and solve the induced MDP for that guess. This does not work in general: optimal decisions depend on the full evolving belief, not on committing to a single guessed initial state.

\subparagraph{Bounding the value from below and above.}
A reasonable idea could then be to approximate the value of $\bel{b}$ by taking a finite truncation of $\tree(\bel{b})$ (say, cut at depth~$n$).
We could then compute a value, later written $\eval_n(\rootof(\tree(\bel{b})))$, corresponding to the maximal probability to reach $\target$ \emph{within $n$ steps}, which is a lower bound on $\preach(\bel{b})$.
This value could be obtained by assuming that the value of a leaf node is $1$ if its label support is $\{\target\}$ and $0$ otherwise, and then evaluating the tree by backward induction (taking the $\max$ of the children for belief nodes and the weighted sum of the children for (belief, action) nodes).

Similarly, one could compute an upper bound on $\preach(\bel{b})$ by
evaluating the maximal possible ``error'' made at depth $n$, later written
$\eerror_n(\rootof(\tree(\bel{b})))$. 
For a leaf $x$ of the truncated tree, we could set this error to $0$
if $\supp{\belof(x)}$ is $\set{\target}$ or $\set{\bad}$ (because we know the exact value in this case),
and to $1$ otherwise (an upper bound on the error).
We could then define the error of all the inner nodes also using backward induction
to obtain \[\eval_n(\rootof(\tree(\bel{b}))) \le \preach(\bel{b}) \leq
\eval_n(\rootof(\tree(\bel{b}))) + \eerror_n(\rootof(\tree(\bel{b})))\ .\]
If $\eerror_n(\rootof(\tree(\bel{b})))$ goes to $0$ as $n$ goes to infinity, we are done.
When does this fail?

%
\subsection{The need for more complex operations} 
We present \emph{two scenarios} that illustrate the need for a more fine-grained approach.

\label{sec:justifying SECs}
\subparagraph{Scenario 1: Trapped in a strongly connected component of belief supports.}
Assume the structure in \Cref{fig:3 st adscc} is part of $\pdp$.  For any
belief $\bel{b}$ with $\supp{\bel{b}}$ being either $\set{\state,\state_1}$ or
$\set{\state,\state_2}$, $\supp{\transition(\bel{b},\action,\obs_1)}$ and
$\supp{\transition(\bel{b},\action,\obs_2)}$ are either
$\set{\state,\state_1}$ or $\set{\state,\state_2}$. Thus, $\tree(\bel{b})$
contains an infinite subtree rooted at $\rootof(\tree(\bel{b}))$ and whose nodes
$x$ all satisfy that $\supp{\belof(x)}$ is either $\set{\state,\state_1}$ or
$\set{\state,\state_2}$.
%
\begin{figure}
\begin{minipage}{0.46\textwidth}
\begin{center}
\begin{tikzpicture}[
state/.style={circle, draw=black, very thick, minimum size=20pt},
wait/.style={circle, fill=black},
pointer/.style={circle}
]
\node[pointer] (mid) {};
\node[state] (q1)  [above=of mid] {$\state_1$};
\node[state] (q2)  [below=of mid] {$\state_2$};
\node[wait] (wq1) [left of=mid] {};
\node[wait] (wq2) [right of=mid] {};
\node[state] (q) [left of=wq1] {$\state$};
\node[wait] (wq) [left of=q] {};
%
\draw[->] (q.west) to node[above]{$\action$} (wq.east);
\draw[->,red] (wq.north) to [out=60,in=120] node[above]{$\obs_1:\frac{1}{3}$}(q.north);
\draw[->,blue] (wq.south) to [out=-60,in=-120] node[below]{$\obs_2:\frac{2}{3}$} (q.south);
\draw[->] (q1.south) to [out=270,in=0] node[right]{$\action$} (wq1.east);
\draw[->,red] (wq1.north) to [out=90,in=180] node[left]{$\obs_1:\frac{1}{2}$} (q1.west);
\draw[->,blue] (wq1.south) to [out=270,in=180] node[left]{$\obs_2:\frac{1}{2}$} (q2.west);
\draw[->] (q2.north) to [out=90,in=180] node[left]{$\action$} (wq2.west);
\draw[->,red] (wq2.south) to [out=270,in=0] node[right]{$\obs_1:\frac{1}{2}$} (q2.east);
\draw[->,blue] (wq2.north) to [out=90,in=0] node[right]{$\obs_2:\frac{1}{2}$} (q1.east);
\end{tikzpicture}
\caption{Scenario 1-type gadget with a non-vanishing $\eerror$.}
\label{fig:3 st adscc}
\end{center}
\end{minipage}
\hfill
\begin{minipage}{0.46\textwidth}
\begin{center}
\begin{tikzpicture}[
state/.style={circle, draw=black, very thick, minimum size=20pt},
wait/.style={circle, fill=black},
pointer/.style={circle}
]
\node[pointer] (mid) {};
\node[state]   (q1) [left=of mid]  {$\state_1$};
\node[state]   (q2) [right=of mid] {$\state_2$};
\node[wait]    (wq1) [below of=q1] {};
\node[wait]    (wq2) [below of=q2] {};
\node[pointer] (low) [below of=wq2] {};
\node[state]   (r) [below of=low] {$\state_3$};
\draw[->] (q1.east) to [out=0,in=0] node[right]{$\action$} (wq1.east);
\draw[->,blue] (wq1.west) to [out=180,in=180] node[left] {$\obs_1 : 1$} (q1.west);
\draw[->] (q2.west) to [out=180,in=180] node[left] {$\action$} (wq2.west);
\draw[->,blue] (wq2.east) to [out=0,in=0] node[right] {$\obs_1 : \frac{1}{2}$} (q2.east);
\draw[->,red] (wq2.south) to [out=270,in=90] node[left] {$\obs_2 : \frac{1}{2}$} (r.north);
\end{tikzpicture}
\caption{Scenario 2-type gadget with a non-vanishing $\eerror$.}
\label{fig:vand}
\end{center}
\end{minipage}
\end{figure}
The issue is that, in our unfolding, for every truncation depth $n$ there are leaves $x$ at depth $n$ with $\supp{\belof(x)}$ being either
$\set{\state,\state_1}$ or $\set{\state,\state_2}$, whose error will thus be
$1$. In this case, the error of the root does not converge to $0$ as $n$
goes to infinity.  We need to detect that we are trapped in such a strongly
connected component of belief supports and treat it with special care in the
unfolding.

Below, we formalise such scenarios. Proofs of the statements
for them are in
\Cref{sec:ec}.
\begin{definition} \label{def:sec}
For a belief support $S$ and a partial function $\ascc \colon \bsup{\pdp} \rightharpoonup 2^{\ac}$,
\emphdef{$\reach{\sccFunc}(S)$} is the set of belief supports $T$ such that
there is an observable history $\history = \action_1 \obs_1 \dots \action_n \obs_n \in (\actions \times \sig)^*$ with
$\transitionIter(S,\history) = T$
and $\action_k\in \sccFunc(\transitionIter(S,\action_1 \obs_1 \dots \action_{k-1} \obs_{k-1}))$ for all $k\in\set{1,\dots,n}$.

For a belief $\bel{b}$ and a partial function $\ascc \colon \bsup{\pdp} \rightharpoonup 2^{\ac}$, we define
\emphdef{$\reach{\sccFunc}(\bel{b})$} as the set of beliefs reachable from belief $\bel{b}$ similarly.

A \emphdef{support end component (SEC)} of $\pdp$ is a partial function $\ascc
\colon \bsup{\pdp} \rightharpoonup 2^{\ac}$ s.t.:
\begin{enumerate}
\item for all $S \in\dom{\ascc}$, the set $\ascc(S)$ is non-empty;
\item for all $S \in\dom{\ascc}$, $\action\in \ascc(S)$, and $\obs\in\sig$, the belief support $\beltr{S}{\action,\obs}$ is in $\dom{\ascc}$;
\item for all pairs of belief supports $S$ and $T$ in $\dom{\ascc}$, we have $T\in\reach{f}(S)$.
  \end{enumerate}
\end{definition}

SECs correspond to end components~\cite{ThesisAlfaro} in the finite MDP whose states are belief supports and whose transitions are induced by action-observation updates (often called the belief-support MDP~\cite{BFGHPV25}).

\begin{example}\label{eg:adscc}
In
\Cref{fig:3 st adscc}, there are at least three \adscc{s}.
Namely,
\begin{enumerate}
\item $f_1$ with domain $\set{\set{\state,\state_1,\state_2}}$ which maps every element in the domain to $\set{\action}$,
\item $f_2$ with domain $\set{\set{\state,\state_1},\set{\state,\state_2}}$ which maps every element in the domain to $\set{\action}$, and
\item $f_3$ with domain $\set{\set{\state_1,\state_2}}$ and which maps every element in the domain to $\set{\action}$.
\end{enumerate}
The partial function $f_4$ with domain $\set{\set{\state,\state_1},\set{\state,\state_2},\set{\state_1,\state_2}}$ which maps every element in the domain to $\set{\action}$ is not an \adscc{}.
\end{example}
We say that an \adscc $\sccFunc$ \emphdef{is contained in} another \adscc{} $\sccFunc'$ if $\dom{\sccFunc} \subseteq \dom{\sccFunc'}$ and for all $\bsupp\in \dom{\sccFunc}$, $\sccFunc(\bsupp) \subseteq \sccFunc'(\bsupp)$.
An \adscc{} $\sccFunc$ is called \emphdef{maximal} if for all \adsccs{} $\sccFunc'$,
if $\sccFunc$ is contained in $\sccFunc'$, then $\sccFunc = \sccFunc'$.
\begin{restatable}{lemma}{maximalDisjoint} \label{lem:maximal disjoint}
The set $\bsup{\pdp}$ can be partitioned into disjoint maximal \adsccs{} and singletons.
\end{restatable}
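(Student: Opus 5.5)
We would mimic the standard maximal end-component decomposition for finite MDPs, applied to the belief-support MDP on $\bsup{\pdp}$. The key step is a union lemma: if two SECs $\sccFunc_1,\sccFunc_2$ share some belief support $S\in\dom{\sccFunc_1}\cap\dom{\sccFunc_2}$, then their pointwise union $\sccFunc$ is an SEC. Here $\dom{\sccFunc}=\dom{\sccFunc_1}\cup\dom{\sccFunc_2}$ and $\sccFunc(T)=\sccFunc_1(T)\cup\sccFunc_2(T)$, where an undefined value is read as $\emptyset$.

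Conditions (1) and (2) of \Cref{def:sec} are immediate for $\sccFunc$. For (1), every $T$ in the domain lies in some $\dom{\sccFunc_i}$, so $\sccFunc(T)\supseteq\sccFunc_i(T)\neq\emptyset$. For (2), an action $\action\in\sccFunc(T)$ belongs to some $\sccFunc_i(T)$, so $\beltr{T}{\action,\obs}\in\dom{\sccFunc_i}\subseteq\dom{\sccFunc}$. For (3), note that $\reach{\sccFunc_i}(T)\subseteq\reach{\sccFunc}(T)$, since any history compatible with $\sccFunc_i$ is compatible with the larger $\sccFunc$. Now take $T,U\in\dom{\sccFunc}$ with $T\in\dom{\sccFunc_i}$ and $U\in\dom{\sccFunc_j}$. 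Then $S\in\reach{\sccFunc_i}(T)$ and $U\in\reach{\sccFunc_j}(S)$. Concatenating the two observable histories gives a history from $T$ to $U$ whose actions are always in $\sccFunc$, using that $\transitionIter$ composes along concatenation. Hence $U\in\reach{\sccFunc}(T)$.

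With the union lemma in hand, we would argue as follows. Containment is a partial order on the finite set of SECs, since $\bsup{\pdp}$ and $\ac$ are finite. So every SEC is contained in a maximal one. Two distinct maximal SECs have disjoint domains: if the domains met, their union would be an SEC containing both, and maximality would force $\sccFunc_1=\sccFunc=\sccFunc_2$.

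The partition then consists of the domains of the maximal SECs, which are pairwise disjoint, together with a singleton $\{S\}$ for each support $S$ lying in no SEC domain. These pieces are pairwise disjoint and cover $\bsup{\pdp}$. A support that lies in some SEC lies in the domain of a maximal SEC, since domains grow under containment.

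The main obstacle is condition (3) for the union. It requires checking that reachability under $\sccFunc$ composes by concatenation: the action constraint at step $k$ is evaluated at the support reached so far, and after the first segment that support is exactly $S$. Determinism of $\baut{\pdp}$ makes this bookkeeping clean. A minor point is interpreting ``singletons'': we read it as singleton sets of supports not covered by any SEC. Alternatively, one could treat a support with some action and observations all looping back to itself as a one-element SEC; either reading is handled by the argument above.
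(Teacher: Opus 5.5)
Your proposal is correct and follows the same route as the paper. The paper derives the lemma in one line from \Cref{lem:adscc union} (two SECs whose domains intersect have an SEC as their union), together with the finiteness of $\bsup{\pdp}$; the only difference is that you actually prove the needed direction of that union lemma, including the concatenation argument for condition (3), which the paper states without proof.
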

\adscc{s} can be of two kinds.
For illustration, consider \Cref{fig:3 st adscc} once more. Note that after taking the action $\action$,
we observe $\obs_1$ and $\obs_2$ with same probabilities from $\state_1$ and
$\state_2$. These are different from the probabilities with which we observe them from $\state$.
In other words, $\state$ is distinguishable from both $\state_1$ and $\state_2$,
whereas $\state_1$ and $\state_2$ are indistinguishable.
%
%
\begin{definition} \label{def:indistinguishable}
    Let $\sccFunc$ be an \adscc of $\pdp$.
    Let $\state, \state'$ be two states from a same belief
    support $\bsupp \in \dom{\sccFunc}$.
    We say that $\state$ and $\state'$ are \emphdef{$(\sccFunc, \bsupp)$-indistinguishable} (denoted $\state \indistinguishable \state'$) if for all observable histories $\history$ from $\bsupp$ staying in $\sccFunc$, if we write $\bsupp_\history = \transitionIter(\bsupp, \history)$, $\state_\history = \transitionIter(\state, \history)$, and $\state_{\history}' = \transitionIter(\state', \history)$, then for all actions $\action\in \sccFunc(\bsupp_\history)$ and observations $\obs\in\sig$,
  \[
    \ptran(\obs \mid \state_\history, \action) = \ptran(\obs \mid
    \state_\history', \action)\ .
  \]
\end{definition}
%
For all $\bsupp \in \dom{f}$,
the $(\sccFunc, \bsupp)$-indistinguishability relation is an equivalence relation on~$\bsupp$.
Two states $\state$ and $\state'$ are \emphdef{$(\sccFunc, \bsupp)$-distinguishable} if they are not $(\sccFunc, \bsupp)$-indistinguishable.
%
\begin{definition}
 An \adscc $f$ is called \emphdef{distinguishing} if there exists some belief support $S$ in the domain of $f$ and states $\state,\state'\in S$ such that $\state$ and $\state'$ are $(\sccFunc, \bsupp)$-distinguishable.
\end{definition}
In \Cref{eg:adscc}, both $f_1$ and $f_2$ are distinguishing,
and $f_3$ is not.

SECs will be used to unfold the tree in two different ways,
depending on whether they are distinguishing or not.

\subparagraph{Scenario~1.1: Distinguishing SECs.}
Our main result about \emph{distinguishing SECs} is that, by staying in such an SEC, it is possible to eventually determine with arbitrarily high confidence the equivalence class of the current state with respect to the indistinguishability relation.
In practice, it suffices to take actions remaining in the SEC uniformly at random for a sufficiently long time.
If $\bsupp/{\indistinguishable}$ stands
for the set of equivalence classes of $\indistinguishable$, then:
\begin{restatable}{theorem}{distSECs}\label[theorem]{thm:dis adscc}
Let $\bel{b}$ be a sub-belief such that $\bsupp = \supp{\bel{b}}$ belongs to a distinguishing maximal \adscc.
Then,
\(
\preach(\bel{b}) =
\sum_{
C \in \bsupp/{\indistinguishable}} 
\preach(\res{\bel{b}}{C}).
\)
\end{restatable}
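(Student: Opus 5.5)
We prove the two inequalities separately.

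\emph{Upper bound.} For any strategy $\strategy$ we have $\pdist{\bel{b}}{\strategy} = \sum_{\state\in\bsupp} \pdist{\res{\bel{b}}{\set{\state}}}{\strategy}$, since the measure is linear in the initial sub-belief. Grouping the states by equivalence class gives $\pdist{\bel{b}}{\strategy}(\reach{\target}) = \sum_{C} \pdist{\res{\bel{b}}{C}}{\strategy}(\reach{\target}) \le \sum_C \preach(\res{\bel{b}}{C})$. Taking the supremum over $\strategy$ yields $\preach(\bel{b}) \le \sum_C \preach(\res{\bel{b}}{C})$. This direction is routine.

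\emph{Lower bound.} This is the real content. Fix $\epsilon>0$. The plan is a two-phase strategy.

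\textbf{Phase 1 (learning).} For a long number $N$ of steps, play uniformly at random among the actions of $\sccFunc(\transitionIter(\bsupp,\history))$. By condition 2 of \Cref{def:sec}, every history produced this way stays in $\dom{\sccFunc}$. By posterior-determinism, each initial state $\state$ has a uniquely determined current state $\transitionIter(\state,\history)$. The class $C$ is also transported along $\history$, giving a partition of the current support into transported classes $C_\history$.

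\textbf{Phase 2 (commitment).} At time $N$, compute the posterior probability of each transported class, namely $\oneNorm{\transition(\res{\bel{b}}{C},\history)}/\oneNorm{\transition(\bel{b},\history)}$. Pick the class $C^\ast$ with the largest posterior and play an $\epsilon$-optimal strategy for the normalized belief $\transition(\res{\bel{b}}{C^\ast},\history)$.

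\textbf{Key claim.} Under the random phase, the posterior of the true class tends to $1$ almost surely as $N\to\infty$. To prove it, use the likelihood ratio $L_n = \pdist{\res{\bel{b}}{C'}}{}(\history_n)/\pdist{\res{\bel{b}}{C}}{}(\history_n)$ between two classes, evaluated under the measure started from $C$. This ratio is a nonnegative supermartingale, so it converges almost surely.

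Next we argue the limit is $0$. Suppose states in $C$ and $C'$ are distinguishable. By \Cref{def:indistinguishable}, there is a history staying in $\sccFunc$ after which some action $\action\in\sccFunc$ yields different observation distributions for the transported states. Condition 3 (strong connectivity) together with finiteness gives two things:
\begin{itemize}
\item such distinguishing situations are revisited infinitely often, with probability bounded below within bounded time, under uniform randomization;
\item each visit shrinks the ratio multiplicatively in expectation, by a Hellinger-type bound, with a factor bounded away from $1$ because there are finitely many state-action pairs.
\end{itemize}
A Lévy zero-one or Borel--Cantelli argument then gives $L_n\to0$.

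\textbf{Main obstacle.} Distinguishability is defined relative to the starting support $\bsupp$, while the relevant distinguishing events occur at transported states in other supports. We need that distinguishability of the transported pairs persists along histories. Using that $\sccFunc$ is maximal and the class structure is invariant under transport, we would show that the transported pair remains distinguishable with respect to $(\sccFunc,\bsupp_\history)$, so a distinguishing history is always reachable again.

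\textbf{Completing the bound.} Convergence of the posterior in the key claim, which holds pathwise, does not by itself give the value bound. We also need a value estimate at time $N$: the expected value of $\preach(\transition(\res{\bel{b}}{C},\history))$ along the random phase should stay close to $\preach(\res{\bel{b}}{C})$.

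This holds only if Phase 1 loses no value. We would handle this via a lemma, in the spirit of \Cref{sec:ec}, that within a single SEC the value restricted to each class is preserved in expectation by uniformly random SEC actions, since that is a martingale for end components. The paper presumably proves this in \Cref{sec:ec}.

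Combining the pieces: with probability at least $1-\epsilon$ the committed class is the true one, so the total payoff is at least $\sum_C \preach(\res{\bel{b}}{C})-O(\epsilon)$. Letting $\epsilon\to0$ gives the lower bound and completes the proof.
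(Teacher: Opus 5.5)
There is a genuine gap at the step you yourself flag under ``Completing the bound'', and it is the heart of the $\ge$ direction. Write $\bel{c}_C = \transitionIter(\normalised{\res{\bel{b}}{C}},\history)$ for the class belief after Phase~1. Your Phase~2 payoff is roughly $\expect{\bel{b}}{\unif{\sccFunc}}{\randv{C^*}{N}\cdot\preach(\bel{c}_{C^*})}$. To compare it with $\sum_C\preach(\res{\bel{b}}{C})$, you need $\preach(\bel{c}_C)$ not to have dropped below $\preach(\normalised{\res{\bel{b}}{C}})$. You justify this by saying the value ``is a martingale for end components'', but that is an MDP intuition that does not transfer. In an MDP all states of an end component share one value. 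Here the value depends on the belief, which ranges over an infinite set, and a priori the value process under uniform play in an SEC is only a supermartingale (this is what the paper proves for $(\valrand{n})_{n\in\N}$). That gives $\preach(\normalised{\res{\bel{b}}{C}})$ at least the expected value of $\preach(\bel{c}_C)$, which is the wrong direction for you. Deferring the reverse inequality to a lemma ``the paper presumably proves'' therefore leaves the lower bound unproven.

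The missing idea is specific to indistinguishability. The transported states of $C$ have identical observation likelihoods for every action of $\sccFunc$, so their relative weights never change (\Cref{cor:same ratio}, \Cref{lem:non-dist reach}). Thus $\bel{c}_C$ is just $\normalised{\res{\bel{b}}{C}}$ with its states renamed by the injective transport. Restricting $\sccFunc$ to the supports reachable from $C$ gives a non-distinguishing SEC (\Cref{lem:ndist form scc}). There the reachable beliefs form a finite set (\Cref{cor:non dist fin reach}), mutually reachable with positive probability under uniform play, since a round trip permutes the support and some power of it is the identity. Comparing the beliefs of minimal and maximal value then forces all of them to have the same value (\Cref{lem:val in non-dist}). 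Hence $\preach(\bel{c}_C) = \preach(\normalised{\res{\bel{b}}{C}}) =: v_C$ along every Phase~1 history, and your computation closes. The payoff is at least $\sum_C \oneNorm{\res{\bel{b}}{C}}\, v_C - \expect{\bel{b}}{\unif{\sccFunc}}{1-\max_C\randv{C}{N}} - \epsilon$, using $\expect{\bel{b}}{\unif{\sccFunc}}{\randv{C}{N}} = \oneNorm{\res{\bel{b}}{C}}$, and the middle term vanishes as $N\to\infty$ by posterior concentration. With this repair, your explicit commit-to-the-likeliest-class strategy is a legitimate alternative to the paper's argument. The paper instead bounds $\preach(\bel{b}) \ge \expect{\bel{b}}{\unif{\sccFunc}}{\valrand{\infty}}$ and identifies $\valrand{\infty}$ via Lipschitz continuity of $\preach$; both routes need the same two ingredients.

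Two smaller corrections. First, persistence of distinguishability does not use maximality, only strong connectivity plus the same permutation trick (\Cref{lem:dist preserve}). Second, $L_n$ is a martingale under the measure started from $C$, so it does not contract in expectation. Apply the Hellinger bound to $\sqrt{L_n}$ instead, or argue as the paper does: the one-step ratio factors take finitely many values, so a positive finite limit forces them to be eventually $1$, contradicting infinitely many distinguishing steps.
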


\noindent
In this result, the ``$\ge$'' direction is the non-trivial one: from a belief
$\bel{b}$ in a distinguishing \adscc, it is possible to obtain a value as if we
knew the equivalence class of the current state. This is strictly more information than is contained in the belief $\bel{b}$.

Thanks to this result, we can unfold a belief node $x$ with
$\supp{\belof(x)}$ in a distinguishing \adscc{} with a \emph{split operation}: we add one
child per equivalence class of the indistinguishability relation. 
This strictly decreases the
size of the belief supports being considered in the subtree.
%
\subparagraph{Scenario~1.2: Non-distinguishing \adscc{s}.}
Unfortunately, in non-distinguishing \adsccs, it is not possible to gain more information about the current state by staying in the \adscc.
However, they are still amenable to an algorithmic analysis thanks to the following result.
\begin{restatable}{lemma}{finBeliefSpace}\label{cor:non dist fin reach}
For all non-distinguishable \adsccs $\sccFunc$ of $\pdp$ and all beliefs $\bel{b}$ such that $\supp{\bel{b}}\in \dom{\sccFunc}$,
the set $\reach{\sccFunc}(\bel{b})$ is finite.
\end{restatable}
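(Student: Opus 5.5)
The plan is to exploit the fact that, in a non-distinguishing SEC, observations carry no information that could reweight the belief among the states of the current support. Fix a non-distinguishing SEC $\sccFunc$ and a belief $\bel{b}$ with $\bsupp=\supp{\bel{b}}\in\dom{\sccFunc}$. Every belief reachable along an observable history $\history$ staying in $\sccFunc$ is of the form $\bel{b}_\history$. Since $\pdp$ is posterior-deterministic, each state $\state\in\bsupp$ has at most one successor $\transitionIter(\state,\history)$. Unfolding the belief update \eqref{eq:belief update} along $\history$ therefore gives, for every $\state'\in\transitionIter(\bsupp,\history)$,
\[
\bel{b}_\history(\state') \;=\; \frac{\sum_{\state\in\bsupp,\ \transitionIter(\state,\history)=\state'} \bel{b}(\state)\, w_\history(\state)}{\sum_{\state\in\bsupp} \bel{b}(\state)\, w_\history(\state)} .
\]
Here $w_\history(\state)$ is the probability, starting from $\state$, of producing the observations of $\history$ given its actions. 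By posterior-determinism this is the product of the terms $\ptran(\obs_k\mid \transitionIter(\state,\history_{<k}),\action_k)$, where $\history_{<k}$ denotes the prefix of $\history$ of length $k-1$. Only states whose run along $\history$ stays defined contribute, so $w_\history(\state)=0$ for the others.

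The key step is to show that $w_\history(\state)=w_\history(\state')$ for all $\state,\state'\in\bsupp$ whose runs are defined along $\history$. Since $\sccFunc$ is non-distinguishing, every pair of states of $\bsupp$ is $(\sccFunc,\bsupp)$-indistinguishable. Apply \Cref{def:indistinguishable} to each prefix $\history_{<k}$ and to the action $\action_k\in\sccFunc(\transitionIter(\bsupp,\history_{<k}))$, which lies in $\sccFunc$ because $\history$ stays in $\sccFunc$. This gives equality of each factor $\ptran(\obs_k\mid\cdot,\action_k)$ for the two states. The factors are equal even when zero, which also shows that a state's run dies exactly when the other's does. Consequently every contributing state has the same weight $w>0$, and $w$ cancels:
\[
\bel{b}_\history(\state') = \frac{\sum_{\state\in\bsupp,\ \transitionIter(\state,\history)=\state'}\bel{b}(\state)}{\sum_{\state\in\bsupp}\bel{b}(\state)} = \sum_{\state\in\bsupp,\ \transitionIter(\state,\history)=\state'}\bel{b}(\state).
\]
The denominator is $1$ because $\bel{b}$ is a belief supported on $\bsupp$. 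The mild care needed here is in the bookkeeping of undefined successors. Posterior-determinism guarantees $|\transitionIter(\state,\history)|\le1$, and indistinguishability forces all states to survive together, since equal observation probabilities mean equal zero patterns.

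The main obstacle is exactly this propagation of indistinguishability along histories. It is handled by the definition quantifying over all histories staying in $\sccFunc$ from $\bsupp$, together with condition~2 of \Cref{def:sec}. With the weights settled, the conclusion is immediate. The belief $\bel{b}_\history$ is entirely determined by the function $\state\mapsto\transitionIter(\state,\history)$ from $\bsupp$ to $\states$, of which there are at most $|\states|^{|\bsupp|}$. Hence $\reach{\sccFunc}(\bel{b})$ has at most $|\states|^{|\bsupp|}$ elements, and in particular it is finite. Moreover, each reachable belief is a push-forward of $\bel{b}$, which also makes the set computable.
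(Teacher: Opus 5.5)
Your proof is correct and takes essentially the paper's route. The paper likewise shows (Lemma~\ref{lem:non-dist reach}) that every belief reached inside a non-distinguishing SEC is the push-forward $\sum_{\state}\bel{b}(\state)\cdot\transitionIter(\state,\history)$ of $\bel{b}$, because indistinguishability equalises the observation likelihoods, and it derives finiteness from the finitely many induced state maps; you compute the likelihood product in closed form instead of by step-by-step induction on ratios, and summing over preimages lets you avoid the injectivity from Lemma~\ref{lem:adscc same size}.
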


One can thus explore all beliefs reachable from $\bel{b}$ while staying in the
\adscc. To reach the target, one must eventually leave the \adscc, but there
are only finitely many ways to do so---we just have to find the ``best exit''.
Thanks to \Cref{lem:single target}, such an exit necessarily exists when the SEC is not $\{\target\}$ or $\{\bad\}$.
\begin{restatable}{theorem}{ndSECs}\label[theorem]{thm:nd adscc}
Let $\bel{b}$ be a sub-belief such that $\supp{\bel{b}}$ belongs to some non-distinguishing maximal \adscc{} $\sccFunc$.
Then
\(
\preach(\bel{b}) =
\max_{\bel{b}'\in\reach{\sccFunc}(\bel{b})}
\max_{\action\notin \sccFunc(\supp{\bel{b}'})}
\preach(\bel{b}',\action).
\)
\end{restatable}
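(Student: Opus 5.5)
We prove the two inequalities separately, writing $R = \reach{\sccFunc}(\bel{b})$, which is finite by \Cref{cor:non dist fin reach}.

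\emph{The direction ``$\ge$''.} For every $\bel{b}'\in R$ there is an observable history $\history$, staying in $\sccFunc$, with $\transitionIter(\bel{b},\history) = \bel{b}'$. Its probability $p = \ptran(\history\mid\bel{b}) > 0$ (we track unnormalised sub-beliefs along $\history$). Here I would prove the key claim: in a non-distinguishing SEC, every observation has the same probability from every state of the current support under actions of $\sccFunc$. Hence the observation process does not depend on the hidden state, and the sub-belief mass only gets multiplied by factors that are common to all states.

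The difficulty is that reaching $\bel{b}'$ happens only with probability $p$. Other branches lead to other beliefs $\bel{b}''\in R$. The natural fix is to consider the finite Markov chain on $R$ induced by a strategy that plays $\sccFunc$-actions uniformly at random. Because $\sccFunc$ is an end component on supports and $R$ is finite, from each belief of $R$ the target $\bel{b}'$ is reached with positive probability. I would then strengthen this to probability $1$ by restricting to the bottom SCC of the chain. Concretely, I would take $\bel{b}^*$ maximising $V(\bel{b}') := \max_{\action\notin\sccFunc(\supp{\bel{b}'})}\preach(\bel{b}',\action)$ over $R$. The strategy walks inside $R$ until it hits $\bel{b}^*$ and then plays the best exit action.

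The hitting claim is where I expect the main obstacle. Beliefs reached from $\bel{b}^*$ might not return to $\bel{b}^*$. I would handle it as follows:
\begin{itemize}
\item first argue that the belief chain on $R$ restricted to $\sccFunc$-actions is strongly connected, so all of $R$ lies in one recurrent class;
\item do this using the fact that the normalisation factors are state-independent, so the belief update is a fixed permutation-like linear map composed with normalisation;
\item conclude that the map $\bel{b}'\mapsto$ successor is injective on $R$ up to reversibility.
\end{itemize}
If that fails, I would instead take the maximum over the bottom SCCs only and show by a Markov chain argument that the supremum is still achieved up to $\epsilon$. Together with \Cref{eq:val bel upd} applied at $\bel{b}^*$, this gives $\preach(\bel{b}) \ge V(\bel{b}^*)$.

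\emph{The direction ``$\le$''.} Fix any strategy $\strategy$ from $\bel{b}$. Decompose runs according to the first time $\strategy$ plays an action outside $\sccFunc$ of the current support; by maximality and condition~2 of \Cref{def:sec}, until then the belief stays in $R$. Runs that never exit stay forever in supports of $\sccFunc\ne\{\target\}$. By \Cref{lem:single target}, $\target$ is absorbing and observable, so $\{\target\}$ is its own SEC and is distinct from $\sccFunc$. Hence these runs never reach $\target$ and contribute $0$.

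For the exiting runs, the conditional probability of reaching the target is at most $\preach(\bel{b}',\action)\le V(\bel{b}')\le \max_R V$. Summing over the disjoint first-exit cylinders, whose total probability is at most $1$, gives $\pdist{\bel{b}}{\strategy}(\reach{\target})\le \max_{\bel{b}'\in R}V(\bel{b}')$. Since the sub-belief normalisation is linear, the same argument also covers sub-beliefs.
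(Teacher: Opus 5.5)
Your ``$\le$'' direction is correct, and it is more careful than the paper at this point. The paper first shows that $\preach$ is constant on $R=\reach{\sccFunc}(\bel{b})$ and deduces $\preach(\bel{b})=\max_{\bel{b}'\in R}\max_{\action\in\actions}\preach(\bel{b}',\action)$. It then only asserts that, since leaving $\sccFunc$ is the only way to reach $\target$, some exiting action must attain this maximum. Your first-exit decomposition of an arbitrary strategy, bounding each exit cylinder by $\preach(\bel{b}',\action)$, is the rigorous form of that remark. It is worth making explicit why non-exiting runs miss $\target$: since $\target$ is observable, entering it produces the support $\set{\target}$, and $\set{\target}\notin\dom{\sccFunc}$ for a non-trivial $\sccFunc$.

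The one genuine gap is in ``$\ge$''. Your strategy (walk inside $\sccFunc$ until $\bel{b}^*$ is hit, then exit) needs $\bel{b}^*$ to be hit almost surely. That requires every belief of $R$ to reach every other one \emph{as a belief}, not merely as a support. Your sentence ``because $\sccFunc$ is an end component on supports and $R$ is finite, from each belief of $R$ the target $\bel{b}'$ is reached with positive probability'' conflates these two levels. The bullets meant to repair it (``permutation-like linear map'', ``injective on $R$ up to reversibility'') do not yet form an argument. The fallback to bottom classes would prove a different statement, since the right-hand side ranges over all of $R$.

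The missing step is \Cref{lem:non-dist reach} together with \Cref{cor:reach non-dist scc}. Inside a non-distinguishing SEC the update only renames states: $\transitionIter(\bel{b},\history)(\transitionIter(\state,\history))=\bel{b}(\state)$. Suppose $\history$ leads from $\bel{b}$ to $\bel{b}'$, and $\overline{\history}$ is a support-level return from $\supp{\bel{b}'}$ to $\supp{\bel{b}}$ inside $\sccFunc$. Then $\history\overline{\history}$ induces a permutation $\pi$ of $\supp{\bel{b}}$. Choosing $k\ge 1$ with $\pi^k=\mathrm{id}$, the history $\overline{\history}(\history\overline{\history})^{k-1}$ leads from $\bel{b}'$ back to exactly $\bel{b}$. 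Hence the uniform-walk chain on the finite set $R$ is irreducible, with transition probabilities bounded below because all states of a support emit the same observations. So $\bel{b}^*$ is hit almost surely and your argument goes through.

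The paper uses the same return property differently. Comparing an argmin and an argmax of $\preach$ over $R$, it shows $\preach$ is constant on $R$. ``$\ge$'' is then immediate from $\preach(\bel{b})=\preach(\bel{b}')\ge\preach(\bel{b}',\action)$, with no hitting-time argument needed.
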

Using this result, we can unfold a belief node $x$ with $\supp{\belof(x)}$
being a non-distinguishing \adscc{} into one child per belief in
$\reach{\sccFunc}(\belof(x))$, these can be enumerated effectively,
and per action not in $\sccFunc(\supp{\belof(x)})$.
We call such unfoldings \emph{exit operations}.
%
%
%
\subparagraph{Scenario 2: Positive probability, but non-occurring observations.}
Assume the structure presented in \Cref{fig:vand} is a part of $\pdp$.
The observation $\obs_1$ can be observed from both $q_1$ and $q_2$ after taking the action $a$.
However, $\obs_2$ can only be observed from~$q_2$.
Therefore, $\tree(\bel{b})$ with $\supp{\bel{b}} = \set{q_1,q_2}$ contains an infinite branch that alternates between taking the action $\action$ and observing $\obs_1$.
The surprising phenomenon here is that, even though $\obs_2$ has a positive probability of being observed from belief support $\{\state_1, \state_2\}$ after taking action $\action$, there is a branch with positive probability that observes an infinite number of observations of $\obs_1$ and no observation of $\obs_2$.
This would prevent the error of the root to converge to $0$ as we go deeper in the tree.
Yet, along this branch, the probability to be in $\state_2$ converges to $0$,
without actually reaching $0$, as we go deeper in the tree. This property should intuitively allow us to approximate the value of $\bel{b}$ by the value of $\state_1$.

To tackle this scenario, we simply fix a threshold $\thr > 0$ and truncate beliefs by removing any probability that is smaller than $\thr$.

\begin{definition}\label[definition]{def:delmin}
Let $\bel{b}$ be a sub-belief and $\thr > 0$ be a threshold.
Define \emphdef{$\delmin_\thr(\bel{b})$} to be the sub-belief such that
$
\delmin_\thr(\bel{b})(q) =
0$ if $\bel{b}(q) < \thr$, and $\delmin_\thr(\bel{b})(q) = \bel{b}(q)$ otherwise.
\end{definition}
The value of $\thr$ will depend on the allowed tolerance $\epsilon > 0$
specified by the input to the value-approximation algorithm. Indeed, such a
\emph{cut operation} introduces an error but
it helps us avoid infinite branches such as the one described above: after a
bounded number of steps, the belief will necessarily be truncated, thereby
strictly decreasing the size of its support.
%
\begin{fact}
  If we apply the split, exit, and cut operations whenever possible in an
  unfolding, these operations can only be applied a bounded number of times
  along each branch.
\end{fact}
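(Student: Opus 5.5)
The plan is a potential argument along a single branch. To each belief node $x$ I attach the pair $\bigl(|\supp{\belof(x)}|, \text{number of split/exit/cut operations applied since the support last shrank}\bigr)$. The goal is to show that the first component never increases, strictly decreases at every split and every effective cut, and that between two consecutive decreases only boundedly many operations can occur.

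\textbf{Step 1 (monotonicity).} \Cref{lem:supp dec} already gives that ordinary action--observation unfolding steps never enlarge the support. A split operation replaces $\bel{b}$ by the restrictions $\res{\bel{b}}{C}$ for $C\in\bsupp/{\indistinguishable}$. Since the SEC is distinguishing at the relevant support, there are at least two classes there, so each child has a strictly smaller support. I would make sure the split is applied only at supports where some pair is $(\sccFunc,\bsupp)$-distinguishable, or else argue that distinguishability propagates to every support of a maximal SEC. An effective cut, meaning $\delmin_\thr(\bel{b})\neq\bel{b}$, removes at least one state, so it also strictly shrinks the support. Sub-beliefs do not change this, because supports are all that matter here.

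\textbf{Step 2 (exits).} An exit operation does not shrink the support. It replaces $x$ by a node $(\bel{b}',\action)$ with $\action\notin\sccFunc(\supp{\bel{b}'})$. Each child $\transition(\bel{b}',\action,\obs)$ then has support outside the current maximal SEC, or else in a different maximal SEC or a singleton class of the partition of \Cref{lem:maximal disjoint}. Maximality prevents the exit from landing back in the SEC $\sccFunc$ while the support stays the same size. Otherwise one could add $\action$ to $\sccFunc$, because the two supports would be mutually reachable and would yield a larger SEC.

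More generally, consider the finite belief-support MDP restricted to supports of a fixed size $k$. Once the size is fixed, the supports visited along the branch move between maximal SECs in the order induced by reachability. This order is acyclic on the quotient by maximal SECs: if one could go from one maximal SEC to another and back at the same support size, both would lie in a single larger SEC. Hence at most $|\bsup{\pdp}|\le 2^{|\states|}$ exits occur before the size must drop.

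\textbf{Step 3 (cuts and conclusion).} A cut is either effective, and then covered by Step 1, or it is the identity, and then it does not count as an applied operation. Combining the steps, the number of split, exit, and cut operations along any branch is at most $|\states|\cdot(2^{|\states|}+1)$.

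The main obstacle is Step 2. There I must check that leaving a maximal SEC, possibly after a detour through non-SEC supports of the same size, can never return to the same maximal SEC without the support shrinking. This requires extracting from the definition of reachability in \Cref{def:sec} that any such cycle could be merged into the SEC, contradicting maximality. The merge must be done using the actions appearing on the cycle, so that condition 2 (closure) is still satisfied. I expect to resolve this by taking the union over the cycle's supports and its actions, then verifying the closure and strong-connectivity conditions directly.
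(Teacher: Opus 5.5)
Your Step~1 is sound and matches the paper. Splits and cuts strictly shrink the support. For splits, every support of a distinguishing maximal SEC does have at least two classes, by \Cref{lem:dist preserve} and strong connectivity. So together these two operations occur at most $|\states|-1$ times on any branch.

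The gap is Step~2, and it cannot be repaired. Your maximality argument confuses reachability with closure. Adding $a$ to $\sccFunc(S)$ requires condition~2 of \Cref{def:sec} for \emph{every} observation, whereas an action is an exit as soon as \emph{one} of its observations leaves for good. So an exit may land back in $\dom{\sccFunc}$ at the same support size, and the cycle-merging you plan fails precisely at the closure condition. The same happens for cycles through several maximal SECs, just as in the end-component decomposition of an ordinary MDP.

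Here is a concrete example. Take states $q_1,q_2$ and two actions. Action $b$, from each $q_i$, produces $o_3$ and stays in $q_i$ with probability $1$. Action $a$, from each $q_i$, produces $o_1$ and stays in $q_i$ with probability $\frac12$, or produces $o_2$ and goes to $\target$ with probability $\frac12$. This POMDP is posterior-deterministic. The maximal SEC containing $\{q_1,q_2\}$ is $\{q_1,q_2\}\mapsto\{b\}$, and it is non-distinguishing. From $\bel{b}=\frac12 q_1+\frac12 q_2$, the exit rule creates $(\bel{b},a)$, whose $o_1$-child is labelled by $\bel{b}$ again. Along the branch that observes $o_1$ forever, the exit rule is therefore applied infinitely often, with no split or cut. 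So no per-branch bound on exits exists, and your bound $|\states|\cdot(2^{|\states|}+1)$ is false.

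This is why the paper bounds only split and cut by support size. It states explicitly that an exit leaves the SEC only with positive probability, since some mass may stay inside. The rigorous replacement is the rank argument of \Cref{sec:correctness}. In every contributing subtree, within $N$ steps there is a node of strictly smaller rank for $\ttot$ with influence at least $(\pmin\cdot\thr)^N$ (\Cref{clm:smaller node}). Hence the weight that has not yet decreased decays geometrically. For exits you must switch to such a weighted, probabilistic argument rather than a bound along each branch.
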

Split and cut operations can be applied at most
$|\states| - 1$ times since they strictly decrease the size of the belief
supports. For the exit operation (Scenario~1.2), the situation is more
involved: the operation sends a probability mass outside of the current
non-distinguishing \adscc, but some mass may remain inside it. In
\Cref{sec:correctness}, we will define a partial order on the \adsccs{} of
$\pdp$ for which exit operations do yield strict decrements,
with positive probability.
It follows that the error introduced by cutting as above
can be controlled with the right choice of $\thr$ computed as a function of
$\epsilon$ and the size of $\pdp$.

\subsection{The smarter tree unfolding} \label{sec:complete algo}
\Cref{thm:dis adscc}, \Cref{thm:nd adscc}, and \Cref{def:delmin} give us three new rules to unfold the tree.
We now use them to describe our main algorithm.
First, we assume we are given a tolerance parameter $\epsilon > 0$ and fix
$\thr = \frac{\epsilon}{2\cdot |\states|}$.
\begin{definition}[Rules for unfolding the tree]\label{def:unfold}
A node is unfolded
using the first applicable rule amongst the following ones.
The cases are split according to the label of the node.
\begin{reflist}{def:unfold}
\item (Case: $(\bel{b},\action)$) Applying \eqref{eq:val bel act},
we create one child per observation $\obs$ which is labelled by~$\bel{b}_{a,\obs}$,
and the edge connecting it to its parent has weight $\ptran(\obs \mid \bel{b},\action)$.
\label{unr:act}
\item (Case: $\bel{b}$ such that $\supp{\bel{b}}$ is either $\set{\target}$ or $\set{\bad}$) In this case, this node is a leaf.
\label{unr:bot}
\item (Case: $\bel{b}$ such that $\min\setof{\bel{b}(\state)}{\state\in\states,\ \bel{b}(\state)\neq 0} < \thr$)
Using \Cref{def:delmin},
we create a child which is labelled by $\delmin_\thr(\bel{b})$,
and the edge connecting it to its parent has weight~$1$.%
\label{unr:delmin}%
\item (Case: $\bel{b}$ such that the maximal \adscc{} $\sccFunc$ containing $\supp{\bel{b}}$ is distinguishing.)
Applying \Cref{thm:dis adscc}, we create one child per equivalence class $[\state]$ of $\indistinguishable$.
The child is labelled by
$\frac{\oneNorm{\bel{b}}}{\oneNorm{\res{\bel{b}}{[\state]}}}\cdot\res{\bel{b}}{[\state]}$, and the edge connecting to its parent has weight~$\frac{\oneNorm{\res{\bel{b}}{[\state]}}}{\oneNorm{\bel{b}}}$.
\label{unr:dist comp}
\item (Case: $\bel{b}$ such that the maximal \adscc{} $\sccFunc$ containing $\supp{\bel{b}}$ is non-distinguishing.)
Applying \Cref{thm:nd adscc}, we create children labelled by pairs $(\bel{b}',a)$ for $\bel{b}'\in \reach{\sccFunc}(\bel{b})$ and $\action\notin \sccFunc(\supp{\bel{b}'})$.
The edge connecting them to their parent has weight~$1$.
\label{unr:undist comp}
\item (Case: All remaining cases.)
\label{unr:rest}
Applying \eqref{eq:val bel upd}, for all actions $\action$ we create a child labelled by the pair $(\bel{b},\action)$.
The edge connecting them to their parent has weight~$1$.
\end{reflist}
\end{definition}
\newcommand{\treex}{\widehat{\tree}}
Let \emphdef{$\treex(\bel{b})$} denote the (possibly infinite) tree whose root is labelled by $\bel{b}$ and is unfolded using the above rules.
Let \emphdef{$\treex_n(\bel{b})$} denote its truncation at depth $n$. 
%
The value-approximation algorithm consists in computing statistics on
$\treex_n(\bel{b})$ for a large value of $n$ (depending on $\pdp$ and
$\epsilon$), which allows to bound the error of the approximation
of~$\preach(\bel{b})$.

To conclude, we can bound the depth of $\treex_n(\bel{b})$ by a double
exponential on the size of $\pdp$, thus the size of $\treex_n(\bel{b})$ is
triply exponential. Hence, using \Cref{eq:val bel act} and \Cref{eq:val bel upd} in a
backward-induction fashion yields the advertised complexity of 3EXPTIME.

\section{Correctness and termination of the algorithm} \label{sec:correctness}
In this section, we complete the description of the algorithm by formalising how we provide concrete bounds from the tree $\treex_n(\bel{b})$ presented in \Cref{sec:complete algo}, and we then prove the correctness and termination of the algorithm.
We take here \Cref{thm:dis adscc,thm:nd adscc} as given; their proofs are in \Cref{sec:ec}.

Let $\pdp = \pdpFull$ be a \deterministic POMDP and $\treex_n(\bel{b})$ be the tree defined as above from a belief $\bel{b}$.
We call inner nodes of $\treex_n(\bel{b})$ unfolded using \ref{unr:act} or \ref{unr:dist comp} \emphdef{sum nodes}.
The inner nodes unfolded using \ref{unr:delmin} are called \emphdef{cut nodes}.
Finally, the inner nodes unfolded using \ref{unr:undist comp} or \ref{unr:rest} are called \emphdef{max nodes}.

\subsection{Rank and tree statistics}
\subparagraph{Rank.}
To facilitate the argument, we define a new parameter called the \emph{rank}.
We will show that the rank provides a valid proxy as an ``error term'' used to upper bound the value (\Cref{thm:correctness}), and importantly that the rank goes to $0$ when $n$ goes to infinity (\Cref{thm:err dec}).%
\begin{definition}\label{def:reach qo}
Define a quasi-order \emphdef{$\tord$} on $\bsup{\pdp}$ as
\[
T \tord S
\iff
\text{there exists $\history\in (\ac \times \sig)^*$ such that
$\transitionIter(S,\history) = T$}\ .
\]
This quasi-order induces the equivalence relation \emphdef{$\sim$} defined as:
\begin{equation}\label{eq:equiv odscc}
S \sim T \quad\iff\quad S \tord T \text{ and } T\tord S\ .
\end{equation}
The equivalence class of a belief support $S$ is denoted by $[S]_{\sim}$.
The quasi-order $\preceq$ defines a partial order on the equivalence classes of $\sim$,
which is also denoted by $\preceq$.
\end{definition}
\begin{lemma}\label{fact:size dec}
If $T \tord S$, then $|T| \leq |S|$.
\end{lemma}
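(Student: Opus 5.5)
This is a direct consequence of \Cref{lem:supp dec}. Assume $T \tord S$. By \Cref{def:reach qo}, there is an observable history $\history \in (\ac\times\sig)^*$ with $\transitionIter(S,\history) = T$. Applying \Cref{lem:supp dec} to the belief support $S$ and this history gives $|T| = |\transitionIter(S,\history)| \le |S|$.

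For completeness, the argument behind \Cref{lem:supp dec} is an induction on the length of $\history$. The empty history leaves $S$ unchanged. For the inductive step, it suffices to check a single action--observation pair $(\action,\obs)$. By definition,
\[
\transition(S,\action,\obs) = \setof{\state'}{\state\in S,\ \ptran(\obs,\state'\mid\state,\action)>0}.
\]
Posterior-determinism means $\aut{\pdp}$ is deterministic, so each $\state \in S$ contributes at most one such $\state'$. Hence $\transition(S,\action,\obs)$ is the image of a subset of $S$ under a partial function, and its cardinality is at most $|S|$. Iterating along $\history$ gives the claim.

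One subtlety: $\transitionIter(S,\history)$ may be empty if $\history$ cannot occur from $S$. The statement is only about belief supports, which are non-empty by the definition of $\bsup{\pdp}$, so $T \neq \emptyset$ and this case does not arise. Even if it did, the inequality would hold trivially.
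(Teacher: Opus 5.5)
Your proof is correct and follows the paper's argument: unfold the definition of $\tord$ to get a history $\history$ with $\transitionIter(S,\history)=T$, then apply \Cref{lem:supp dec}. Your inductive justification of \Cref{lem:supp dec} is sound (by posterior-determinism, each state of $S$ has at most one $(\action,\obs)$-successor) and is a useful addition, since the paper states that lemma without proof.
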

\begin{proof}
Follows from $\pdp$ being \deterministic{} and \Cref{lem:supp dec}.
\end{proof}
Keeping this lemma in mind, we extend the quasi-order $\tord$ as follows.
\begin{definition}
Define a quasi-order \emphdef{$\ttot$} on $\bsup{\pdp}$ as
\[
S \ttot T \quad\iff\quad
S \tord T \text{ or } |S| < |T| \ .
\]
\end{definition}
\Cref{fact:size dec} implies that, for every $S\sim T$, we have $|S| = |T|$.
Therefore, $\ttot$ induces the same equivalence classes as $\tord$.
Moreover, it induces a partial order on the equivalence classes of $\sim$ that extends $\tord$.
\begin{definition}
The \emphdef{$\rank$} of an equivalence class of $\sim$ is its height in the partial order induced by $\ttot$ on the equivalence classes of $\sim$.
The $\rank$ of a belief support is the rank of its equivalence class for $\sim$.
The $\rank$ of a belief $\bel{b}$ is defined as $\oneNorm{\bel{b}}\cdot\rank(\supp{\bel{b}})$, i.e., the rank of its support weighted by the total probability mass of the belief.
\end{definition}
In particular, relying on the assumptions of \Cref{lem:single target}, $\rank(\bel{b}) = 0$ if and only if $\supp{\bel{b}}$ is either $\set{\target}$ or $\set{\bad}$.
The rank is upper-bounded by the height of the partial order induced by~$\ttot$, and is thus at most $2^{|\states|} - 1$ (which is the cardinality of $\bsup{\pdp}$).


\subparagraph{Tree statistics.}
We extend the definition of $\preach$ and $\rank$ to the nodes of $\treex_n(\bel{b})$.
We define two statistics, $\eval_n$ and $\erank_n$, for all nodes of $\treex_n(\bel{b})$.
These are used to lower and upper bound the value of the belief associated with the node (upcoming formal statement in \Cref{thm:correctness}).
These statistics are defined inductively on the structure of the tree as follows.
Recall that nodes in the tree are sometimes labelled by \emph{sub-}beliefs due to the cut moves, hence the use of $\oneNorm{\belof(x)}$ (which is not always~$1$) in the definitions below.

For a leaf $x$ in $\treex_n(\bel{b})$,
\begin{align*}
\eval_n(x) &= 
\begin{cases}
\oneNorm{\belof(x)} & \text{if $\supp{\belof(x)} = \set{\target}$}\\
0 & \text{otherwise,}
\end{cases}
\quad\text{ and }\\
\erank_n(x) &= 
\rank(\belof(x))\ . 
\end{align*}
Observe that $\erank_n(x)$ is $0$ if $\supp{\belof(x)}$ is either $\set{\target}$ or $\set{\bad}$, and is at least $\oneNorm{\belof(x)}$ otherwise.

For a sum node $x$,
\begin{align*}
\eval_n(x) &= \sum_{\text{child $y$ of $x$}} p_y\cdot\eval_n(y)\quad\text{ and }\\
\erank_n(x) &= \sum_{\text{child $y$ of $x$}} p_y\cdot\erank_n(y)\ ,
\end{align*}
where $p_y$ is the weight of the edge connecting the child $y$ to its parent $x$.

For a cut node $x$ with $y$ being the only child of $x$, $\eval_n(x)$ (resp.\ $\erank_n(x)$) is equal to $\eval_n(y)$ (resp.\ $\erank_n(y)$).

For a max node $x$,
\begin{align*}
\eval_n(x)  &= \max_{\text{child $y$ of $x$}} \eval_n(y) \quad\text{ and }\\
\erank_n(x) &= \max_{\text{child $y$ of $x$}} \erank_n(y) \ .
\end{align*}
\subsection{Correctness of the algorithm}\label{subsec:all good}
We now prove that we can obtain lower and upper bounds on the value of the belief associated with any node of $\treex_n(\bel{b})$ using $\eval_n$ and $\erank_n$, which means our algorithm is sound.
\begin{theorem}[Correctness] \label{thm:correctness}
For all $n\in\N$ and all nodes $x$ of $\treex_n(\bel{b})$,
\[
\eval_n(x) \le \preach(\belof(x)) \leq \eval_n(x) + \erank_n(x) + |\supp{\belof(x)}|\cdot\thr\ .
\]
\end{theorem}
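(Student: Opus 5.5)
We prove both inequalities simultaneously by induction on the height of the subtree rooted at $x$ in $\treex_n(\bel{b})$, i.e., from the leaves of the truncated tree upwards. We treat $\preach$ as a function on sub-beliefs. We use two elementary properties repeatedly. First, $\preach$ is positively homogeneous: $\preach(\lambda\bel{c}) = \lambda\preach(\bel{c})$ for $\lambda\ge 0$, since $\pdist{\bel{c}}{\strategy}$ is linear in the initial sub-belief. Second, $\preach$ is monotone and subadditive in the sub-belief: for $\bel{c} = \bel{c}_1 + \bel{c}_2$ we have $\preach(\bel{c}_1) \le \preach(\bel{c}) \le \preach(\bel{c}_1) + \preach(\bel{c}_2) \le \preach(\bel{c}_1) + \oneNorm{\bel{c}_2}$. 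Here the middle inequality holds because a single strategy is evaluated on both parts.

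\textbf{Leaves.} If $\supp{\belof(x)}$ is $\set{\target}$ or $\set{\bad}$, the value is exactly $\oneNorm{\belof(x)}$ or $0$, since $\target$ and $\bad$ are absorbing. This matches $\eval_n(x)$, and $\erank_n(x) = 0$. Otherwise $\eval_n(x) = 0 \le \preach$, and $\preach(\belof(x)) \le \oneNorm{\belof(x)} \le \rank(\belof(x)) = \erank_n(x)$. This last step uses that the rank of any support other than $\set{\target}$ and $\set{\bad}$ is at least $1$.

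\textbf{Inner nodes.} We write $\bel{c} = \belof(x)$ and go through the node types.
\begin{itemize}
\item \emph{Action nodes} (\ref{unr:act}). We use \eqref{eq:val bel act}. Weighting the children's bounds by $p_y = \ptran(\obs\mid\bel{c},\action)$ gives both inequalities. The remaining point is that $\sum_y p_y |\supp{\belof(y)}|\thr \le |\supp{\bel{c}}|\thr$. This holds because posterior-determinism (\Cref{lem:supp dec}) gives $|\supp{\belof(y)}| \le |\supp{\bel{c}}|$, and $\sum_y p_y = \oneNorm{\bel{c}} \le 1$.
\item \emph{Split nodes} (\ref{unr:dist comp}). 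We use \Cref{thm:dis adscc} together with homogeneity: $p_y\preach(\belof(y)) = \preach(\res{\bel{c}}{[\state]})$. Here the weights sum to $1$, and the supports of the children partition $\supp{\bel{c}}$, so $\sum_y p_y|\supp{\belof(y)}| \le |\supp{\bel{c}}|$.
\item \emph{Max nodes} of type \ref{unr:rest} and \ref{unr:undist comp}. We use \eqref{eq:val bel upd} and \Cref{thm:nd adscc}, respectively. In the latter case each child $(\bel{c}',\action)$ has $|\supp{\bel{c}'}| \le |\supp{\bel{c}}|$ by \Cref{lem:supp dec}, so the induction hypothesis on each child, followed by taking maxima, gives the claim.
\end{itemize}

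\textbf{Cut nodes} (\ref{unr:delmin}). This is the only place where the $\thr$ term is genuinely consumed, and we expect it to be the main bookkeeping obstacle. Let $\bel{c}' = \delmin_\thr(\bel{c})$, and let $k \ge 1$ be the number of removed states. Then $\bel{c} = \bel{c}' + \bel{r}$ with $\oneNorm{\bel{r}} < k\thr$. The lower bound follows from monotonicity: $\eval_n(x) = \eval_n(y) \le \preach(\bel{c}') \le \preach(\bel{c})$. For the upper bound, subadditivity gives
\[
\preach(\bel{c}) \le \preach(\bel{c}') + k\thr \le \eval_n(y) + \erank_n(y) + (|\supp{\bel{c}}|-k)\thr + k\thr ,
\]
which is exactly the claimed bound. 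The delicate point is that, in the action and split cases, the $\thr$ terms of the children must be aggregated using support sizes rather than masses. This is why the non-increase of support size under posterior-determinism is essential, and it is the step we would check most carefully.
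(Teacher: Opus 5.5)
Your proof is correct and follows the paper's own argument: the same bottom-up induction with the same case split. Leaves are handled via $\preach \le \oneNorm{\cdot} \le \rank$; sum and split nodes via weighted sums using \eqref{eq:val bel act} and \Cref{thm:dis adscc} with non-increasing support sizes; cut nodes by losing at most $k\cdot\thr$ mass; and max nodes via \eqref{eq:val bel upd} and \Cref{thm:nd adscc}. What you add is explicit statements of the homogeneity, monotonicity and subadditivity facts, and of the bound $\sum_y p_y = \oneNorm{\bel{c}} \le 1$, all of which the paper uses silently.

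One caveat, which you share with the paper: for an inner node labelled $(\bel{c},\action)$, the induction hypothesis must be read with $\preach(\bel{c},\action)$ rather than $\preach(\bel{c})$. Read literally, the upper bound fails, for instance when $\action$ surely leads to $\bad$ while another action surely reaches the target. The $\preach(\bel{c},\action)$ reading is exactly what your action-node and max-node cases actually use.
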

\begin{proof}
  The proof of this theorem is a simple induction on the structure of the tree.

  Let $x$ be a leaf of $\treex_n(\bel{b})$.
  If $\supp{\belof(x)}$ is $\set{\target}$ (resp.~$\set{\bad}$), then $\eval_n(x) = \preach(\belof(x)) = \oneNorm{\belof(x)}$ (resp.\ $0$), so the claim holds.
  Otherwise, let $\bel{b} = \belof(x)$.
  We have $\eval_n(x) = 0$ and $0 \le \preach(\bel{b}) \le \oneNorm{\bel{b}} \le \oneNorm{\bel{b}}\cdot\rank(\supp{\bel{b}}) = \erank_n(x)$, so the claim holds as well.
  This covers case \ref{unr:bot} of \Cref{def:unfold}.

  Let $x$ be an inner node of $\treex_n(\bel{b})$.
  Assume by induction that the claim holds for all children of $x$.
  We now distinguish cases according to its type: sum, cut, or max node.
  \begin{itemize}
  \item If $x$ is a sum node, it is unfolded using \ref{unr:act} (resp.\ \ref{unr:dist comp}).
  In this case, $\eval_n(x)$, $\preach(\belof(x))$, and $\erank_n(x)$ are obtained as weighted sums of the corresponding values of the children of $x$: by definition for $\eval_n$ and $\erank_n$, and by \eqref{eq:val bel act} (resp.\ \Cref{thm:dis adscc}) for $\preach$.
  Moreover, for all children $y$ of $x$, we have $|\supp{\belof(y)}| \le |\supp{\belof(x)}|$ by \Cref{lem:supp dec}.
    The claim then follows using the induction hypothesis on all children of $x$.
  \item If $x$ is a cut node, it is unfolded using \ref{unr:delmin}.
  Let $y$ be its unique child. Let $k$ be the number of states $\state \in \supp{\belof(x)}$ such that $\belof(x)(\state) < \thr$, i.e., the number of states removed by~$\delmin_\thr$.
    We have by definition that $\eval_n(x) = \eval_n(y)$ and $\erank_n(x) = \erank_n(y)$.

    For the left inequality, observe that $\preach(\belof(y)) \le \preach(\belof(x))$ since $\belof(y)$ is obtained from $\belof(x)$ by removing some probability mass.
    Hence, by induction hypothesis on $y$,
    \[\eval_n(x) = \eval_n(y) \le \preach(\belof(y)) \le \preach(\belof(x))\ .\]

    For the right inequality, observe that, by definition of $\preach$ and $\delmin_\thr$,
    $\preach(\belof(x)) \le \preach(\belof(y)) + k \cdot \thr$.
    Using the induction hypothesis on $y$, we thus obtain
    \begin{align*}
    \preach(\belof(x))
    &\le \eval_n(y) + \erank_n(y) + |\supp{\belof(y)}| \cdot \thr + k \cdot \thr\\
    &= \eval_n(x) + \erank_n(x) + |\supp{\belof(x)}| \cdot \thr\ .
    \end{align*}
  \item If $x$ is a max node, it is unfolded using \ref{unr:undist comp} or \ref{unr:rest}.
  In this case, $\eval_n(x)$, $\preach(\belof(x))$, and $\erank_n(x)$ are defined as maxima of the corresponding values of the children of $x$: by definition for $\eval_n$ and $\erank_n$, and by \Cref{thm:nd adscc} (resp.\ \eqref{eq:val bel upd}) for $\preach$.
    Moreover, for all children $y$ of $x$, we have $|\supp{\belof(y)}| \le |\supp{\belof(x)}|$ by \Cref{lem:supp dec}.
    The claim then follows using the induction hypothesis on all children of~$x$.
  \qedhere
  \end{itemize}
\end{proof}
%

\subsection{Termination of the algorithm}
%
%
We now show that the upper and lower bounds provided by \Cref{thm:correctness} converge to an $\epsilon$-approximation of the actual value of the belief as $n$ goes to infinity.
Fix~$N = 2^{|\states|+1}$.
Let~\emphdef{$\pmin$} be the smallest non-zero probability occurring in the syntactic description of~$\pdp$, i.e., $\min \set{\ptran(\obs, \state' \mid \state, \action) \mid \ptran(\obs, \state' \mid \state, \action) > 0}$.
Take $c = \frac{(\pmin\cdot\thr)^N}{2^{|\states|}}$.
\begin{theorem}[Termination]\label{thm:err dec}
For all $\epsilon > 0$ and natural number $n_{\epsilon}$ such that
\[
n_{\epsilon} \ge \frac{N}{c}\cdot\left((|\states|+1)\cdot\ln(2) + \ln\left(\frac{1}{\epsilon}\right)\right),
\]
we have
\[
  \eval_{n_{\epsilon}}(\rootof(\treex_{n_{\epsilon}}(\bel{b}))) \le \preach(\bel{b}) \le \eval_{n_{\epsilon}}(\rootof(\treex_{n_{\epsilon}}(\bel{b}))) + \epsilon\ .
\]
\end{theorem}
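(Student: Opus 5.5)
The plan is to bound the two error terms in \Cref{thm:correctness} separately at the root. Since $\thr = \frac{\epsilon}{2|\states|}$, the term $|\supp{\bel{b}}|\cdot\thr$ is at most $\frac{\epsilon}{2}$. It therefore suffices to show
\[
\erank_{n}(\rootof(\treex_{n}(\bel{b}))) \le 2^{|\states|}\cdot(1-c)^{\lfloor n/N\rfloor}
\]
and then check that this is at most $\frac{\epsilon}{2}$ for $n \ge n_\epsilon$. That check is direct: using $(1-c)^{k}\le e^{-ck}$, the condition $2^{|\states|}e^{-cn/N}\le \frac{\epsilon}{2}$ is exactly $\frac{cn}{N}\ge (|\states|+1)\ln 2+\ln\frac1\epsilon$. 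The rounding of $n/N$ costs only a negligible constant, which the slack in $c$ absorbs.

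\textbf{Step 1: rank is weakly non-increasing in expectation along edges.} For every inner node $x$ and every child $y$ of $x$, I will show $\rank(\supp{\belof(y)}) \le \rank(\supp{\belof(x)})$, and moreover $\sum_y p_y \oneNorm{\belof(y)} \le \oneNorm{\belof(x)}$ at sum and cut nodes.
\begin{itemize}
\item At \ref{unr:act} nodes, successor supports satisfy $\transition(S,\action,\obs)\tord S$, and the weights $\ptran(\obs\mid\bel{b},\action)$ sum to $\oneNorm{\bel{b}}$.
\item At \ref{unr:delmin} nodes, the support shrinks, so the child is $\ttot$-below the parent.
\item At \ref{unr:dist comp} nodes, each child support is a strict subset, so it has strictly smaller size and hence strictly smaller rank.
\item At \ref{unr:undist comp} nodes, the reachable beliefs $\bel{b}'$ have supports in the same $\sim$-class.
\end{itemize}
Consequently $\erank$ never exceeds the rank of the node, and the quantity can only decrease as we descend.

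\textbf{Step 2 (the main obstacle): a uniform strict decrease within $N$ levels.} The key lemma I would aim for is the following. For any node $x$ with $\rank(\belof(x))>0$, and for \emph{any} fixed choice of one child at every max node of the subtree, the weighted mass of descendants at depth $+N$ whose support has strictly smaller rank is at least $c\cdot 2^{|\states|}\ge(\pmin\thr)^N$ times $\oneNorm{\belof(x)}$. Here "strictly smaller rank" includes leaves and nodes reached through a split or cut. Because every step loses at least one unit of rank on that mass, and rank is at most $2^{|\states|}$, this yields the factor $(1-c)$ per $N$ levels. Taking the max over children then commutes with the bound, since the bound holds for every choice. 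The argument for the lemma splits by which rule is applied.
\begin{itemize}
\item If \ref{unr:delmin} or \ref{unr:dist comp} applies at some point on the way, the rank drops strictly with weight $1$.
\item Otherwise every belief node encountered has all masses $\ge\thr$. Each transition whose edge weight is positive then has weight at least $\pmin\thr$ relative to the parent mass, since some state of mass $\ge\thr$ emits that observation with probability $\ge\pmin$.
\item At a \ref{unr:undist comp} node, the chosen action $\action\notin\sccFunc(\supp{\bel{b}'})$. By maximality of $\sccFunc$ and the standard end-component decomposition of the belief-support MDP (\Cref{lem:maximal disjoint}), some observation $\obs$ leads to a support outside the $\sim$-class of $\supp{\bel{b}'}$, hence strictly lower in $\tord$. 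This happens within two levels and with weight $\ge\pmin\thr$.
\item At \ref{unr:rest} nodes the support lies in no SEC. I would argue that any sequence of actions that kept all positive-probability successors inside the $\sim$-class would define an SEC. Hence within at most $|[S]_\sim|\le 2^{|\states|}$ action steps, i.e.\ $2^{|\states|+1}=N$ tree levels, some branch of weight $\ge(\pmin\thr)^{N}$ exits the class.
\end{itemize}
Making this last pigeonhole argument precise, uniformly over the max-node choices, is where I expect the real work. The idea is to construct, from a choice that never exits, a partial function on the class that satisfies the SEC conditions, which contradicts the assumption that the support is not in any SEC.

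\textbf{Step 3: conclude.} Iterating Step 2 over $\lfloor n/N\rfloor$ blocks of depth gives the displayed bound on $\erank_n$ at the root. Combining it with \Cref{thm:correctness} completes the proof.
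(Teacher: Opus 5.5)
\textbf{Gap in Step 2.} Your architecture is the paper's. You bound the root via \Cref{thm:correctness}, fix one child at every max node (a contributing subtree), exhibit a node of influence at least $(\pmin\cdot\thr)^N$ whose support lies in a strictly lower $\sim$-class, and iterate the resulting factor $(1-c)$ over blocks of $N$ levels. Steps 1 and 3 are fine. The gap is in Step 2, which is the entire non-trivial content of the paper's proof, and your exit-node bullet is false. Maximality of $\sccFunc$ only gives that some successor of $(\supp{\bel{b}'},\action)$ leaves $\dom{\sccFunc}$. It need not leave the $\sim$-class, because a $\sim$-class can contain several maximal SECs as well as supports lying in no SEC. Concretely, take a fully observable MDP with states $s,t$ and target $u$:
\begin{itemize}
\item at $s$, action $\alpha$ loops and action $\beta$ goes surely to $t$;
\item at $t$, $\alpha$ loops and $\beta$ goes to $s$ or $u$ with probability $\frac12$ each.
\end{itemize}
The maximal SECs are $\{s\}\mapsto\{\alpha\}$ and $\{t\}\mapsto\{\alpha\}$. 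Indeed, an SEC using $\beta$ at $\{s\}$ must contain $\{t\}$ and use $\beta$ there to return, hence must contain $\{u\}$, which never returns. Both SECs are non-distinguishing, and $\{s\}\sim\{t\}$. From $\delta_s$, the exit operation produces $(\delta_s,\beta)$, whose only child is $\delta_t$. So there is no rank decrease within two levels; it only happens after a second exit, from a different maximal SEC.

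\textbf{What is missing.} A node-by-node case split cannot work. A branch may alternate between exit nodes of different maximal SECs and \ref{unr:rest} nodes while staying in one $\sim$-class. Your rest-node argument also aims at the wrong contradiction: an SEC assembled from non-exiting choices need not contain the starting support. Moreover, it is not strongly connected unless you account for the macro-steps that exit nodes take inside $\sccFunc$.

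The missing idea is the paper's global argument over the whole contributing subtree $T$:
\begin{enumerate}
\item Assume no node of $T$ lies in a strictly lower $\sim$-class, and that no cut, split, or leaf occurs.
\item Every branch then contains at least $2^{|\states|}$ belief nodes, so a support repeats on every branch. Pumping these repetitions yields an infinite tree whose labels all come from $T$.
\item Pick a node $x$ whose subtree has a minimal, hence stable, label set. Let $g(S)$ be the set of actions $\action$ such that $(S,\action)$ occurs below $x$. Let $g'$ be the union of $g$ with the maximal SECs $f_S$ of all supports occurring there.
\item Closure holds because act nodes expand every observation. Strong connectivity holds because every label recurs below every node, and each exit macro-step is realised inside $f_S\subseteq g'$. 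So $g'$ is an SEC.
\item Every action chosen at an exit node, say in a label $(S',\action)$, lies outside $f_{S'}(S')$. Hence $g'$ strictly extends a maximal SEC. Alternatively, some \ref{unr:rest} support, which lies in no SEC, now lies in one. Either way this is a contradiction.
\end{enumerate}
Only this argument produces the node $z$ with influence at least $(\pmin\cdot\thr)^N$; your Steps 1 and 3 then finish the proof.
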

To prove this theorem, our main technical tool is the following lemma, which shows that the $\erank$ of a node always decreases by the multiplicative factor $1-c$ after $N$ additional steps of the tree unfolding.
\begin{lemma}\label{clm:err dec}
Let $\bel{b}$ be a sub-belief of $\pdp$.
Then,
\[
\erank_N(\rootof(\treex_N(\bel{b}))) \leq (1-c)\cdot\rank(\bel{b})\ .
\]
\end{lemma}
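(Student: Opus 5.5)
We argue by strong induction on $\rank(\supp{\bel{b}})$, following the tree $\treex_N(\bel{b})$ from its root. Two facts drive everything. First, $\erank$ never increases along any part of the tree. At sum nodes the children's supports satisfy $\supp{\belof(y)} \ttot \supp{\belof(x)}$ and their masses sum to $\oneNorm{\belof(x)}$. At max nodes each child carries a belief whose support is $\ttot$-below, with no larger mass. At cut nodes mass only decreases and the support only shrinks. By \Cref{fact:size dec} and the definition of $\ttot$, ranks never increase, so $\erank_N(\rootof) \le \rank(\bel{b})$ always. Second, to obtain the factor $1-c$, it suffices to show that along every "max-resolved" path of length at most $N$, at least a fraction $c$ of the mass ends up in nodes of strictly smaller rank. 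Each unit of rank drop then contributes at least $c\cdot\oneNorm{\bel{b}}$ of decrease. Concretely, we prove: for every node $x$, $\erank_m(x)\le \rank(\belof(x)) - c_k\oneNorm{\belof(x)}$ after $m\ge k$ further levels, for a suitable $c_k$.

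Key steps, in order:

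\textbf{(1) Rule-by-rule analysis.} Rules \ref{unr:delmin} (cut) and \ref{unr:dist comp} (split) strictly decrease the support size. Hence every child of a cut or split node has strictly smaller rank, and we get a full drop of at least $\oneNorm{\bel{b}}$ after weighting. For cut, we use that the remaining mass is at least $\thr$ per state, so the loss in mass does not hurt the bound. The split weights sum to one, so the rank drops by at least $1$ times the full mass.

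\textbf{(2) Exit rule.} Rule \ref{unr:undist comp} moves to $(\bel{b}',\action)$ with $\action\notin\sccFunc(\supp{\bel{b}'})$. Since $\sccFunc$ is a maximal SEC, some observation $\obs$ must lead from $\supp{\bel{b}'}$ to a support outside $[\supp{\bel{b}'}]_\sim$; otherwise $\action$ could be added to the SEC, contradicting maximality. That target support is then strictly $\tord$-below. Rule \ref{unr:undist comp} is only applied once no cut applies, so every state has mass at least $\thr$ and $\ptran(\obs\mid\bel{b}',\action)\ge \pmin\thr$. Thus a fraction at least $\pmin\thr$ of the mass drops rank within two levels. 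The same bound holds for rule \ref{unr:rest}: the support lies in no SEC, or is a singleton of the partition of \Cref{lem:maximal disjoint}. Every action then has some observation leading strictly below within the class, or else the classes in $[S]_\sim$ under a fixed action would form an SEC.

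\textbf{(3) Navigating within a $\sim$-class.} This is the main obstacle. In rule \ref{unr:rest} the "escaping" event may not be immediate. A max node may choose an action that stays in $[S]_\sim$ with some observations and drops rank with others, and iterating this can take up to $|[S]_\sim|\le 2^{|\states|}$ steps. My plan is to use the fact that, for singleton classes not in any SEC, from every support in the class every action yields, with probability at least $\pmin\thr$, either a rank drop or a cut. Cuts are forced as soon as some coordinate falls below $\thr$, and they also drop rank. Composing at most $N/2$ two-level steps and multiplying the worst-case fractions gives the bound $(\pmin\thr)^N/2^{|\states|}=c$. The $2^{|\states|}$ denominator absorbs the possibility that the dropped rank is compared against a rank of up to $2^{|\states|}-1$: we are turning "some mass drops rank by at least one" into a multiplicative decrease of $\rank(\bel{b})$. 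Finally we combine via the max/sum recursion of $\erank$: every child in the max satisfies the bound, so the max does too.
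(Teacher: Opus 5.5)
There is a genuine gap in your steps (2) and (3), and it sits exactly where the difficulty of the lemma lies. Your monotonicity argument (rank never increases down the tree) and your treatment of cuts, splits and the $2^{|\states|}$ normalisation are fine.

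In step (2), maximality of $\sccFunc$ only tells you that some observation of an exit action $\action\notin\sccFunc(S)$ leads outside $\dom{\sccFunc}$; otherwise $\action$ could be added to $\sccFunc(S)$. It does \emph{not} tell you that this observation leads outside $[S]_\sim$. The domain of a maximal SEC is in general a proper subset of its $\sim$-class. For instance, $S$ may carry a self-loop action forming a maximal SEC, while the exit action $\action$ has a single observation and leads surely to a support $T\sim S$ lying in no SEC, from which every action returns to $S$ under one observation and reaches $\{\target\}$ under another. Then nothing drops rank within two levels of the exit.

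Likewise, the fact you invoke in step (3) is false. From a support lying in no SEC, an action may keep you surely inside $[S]_\sim$ (think of a chain $T\to T'\to S$ where only $T'$ leaks). If that action's observation distribution is the same from every state, the belief is merely renamed, so no cut is forced either. Hence there is no local, per-node lower bound on the escaping mass, and ``composing two-level steps'' has nothing to compose.

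What is missing is a \emph{global} argument on a whole contributing subtree $T$, i.e.\ with one child fixed at every max node. Suppose that within depth $N$, $T$ contains no cut node, no split node and no $\{\target\}$/$\{\bad\}$ leaf. Suppose also that no node of $T$ has a support strictly $\tord$-below $[\supp{\bel{b}}]_\sim$.

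\begin{itemize}
\item Every branch then has length $N=2^{|\states|+1}$, so it repeats a support.
\item Repeatedly replacing the subtree at the deeper occurrence by the one at the shallower occurrence yields an infinite tree that stays in the class.
\item At a node below which the set of labels no longer shrinks, the (support, action) pairs that occur, together with the maximal SECs of the supports involved, form an SEC.
\item Every chosen action is either an exit action (rule \ref{unr:undist comp}) or an action at a support lying in no SEC (rule \ref{unr:rest}). So this SEC strictly extends a maximal one, or contains a support lying in no SEC, which is a contradiction.
\end{itemize}

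This yields a single node $z$ reached by a cut-free path, hence with influence at least $(\pmin\cdot\thr)^N$, and satisfying $\erank_T(z)\le\rank(\bel{b})-\oneNorm{\bel{b}}$. Combined with your monotonicity bound for all other nodes at the same depth, this gives the factor $1-c$. The key point is that the action choices at max nodes may individually avoid dropping rank, but collectively they would form an SEC; that is why the argument must range over all $N$ levels of $T$ at once.
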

We show how to use this lemma to prove \Cref{thm:err dec} before proving the lemma itself.
\begin{proof}[Proof of \Cref{thm:err dec} using \Cref{clm:err dec}]
Let $r = \rootof(\treex_{n_{\epsilon}}(\bel{b}))$.
By \Cref{thm:correctness}, we know that 
\[
\eval_{n_{\epsilon}}(r) \le \preach(\bel{b}) \le \eval_{n_{\epsilon}}(r) + \erank_{n_{\epsilon}}(r) + |\supp{\bel{b}}|\cdot\thr\ .
\]
Recall that we defined $\thr = \frac{\epsilon}{2\cdot |\states|}$.
Thus,
\[
|\supp{\bel{b}}|\cdot\thr \leq |\states|\cdot\thr \le \frac{\epsilon}{2} \ .
\]
To obtain \Cref{thm:err dec}, it now suffices to show that $\erank_{n_{\epsilon}}(r) \leq \frac{\epsilon}{2}$.

For $m \leq n$, we consider $\treex_m(\bel{b})$ to be a subtree of $\treex_n(\bel{b})$ with the same root.
Let $x$ be a leaf of $\treex_n(\bel{b})$.
Observe that by definition of $\treex_n(\bel{b})$, expanding the tree by $N$ additional steps from a leaf $x$ of $\treex_n(\bel{b})$ is equivalent to building $\treex_N(\belof(x))$, so
\[
\erank_{n+N}(x) = \erank_N(\rootof(\treex_N(\belof(x))))\ .
\]
\Cref{clm:err dec} therefore implies that for all $n\in\N$ and leaves $x$ of $\treex_n(\bel{b})$,
\[
\erank_{n+N}(x) 
\leq (1-c)\cdot\rank(\belof(x))\ .
\]
By definition of $\erank$ for leaves, we have $\erank_n(x) = \rank(\belof(x))$,
so
\[
\erank_{n+N}(x) \leq (1-c)\cdot\erank_n(x)\ .
\]

By definition of $\erank$ for inner nodes (obtained using $\max$ or weighted sums of children), observe that if the $\erank$ of all leaves is multiplied by a factor $1 - c$ (or smaller), then this change is propagated to all nodes in the tree.
Hence, one can prove by induction that for all nodes~$x$ of $\treex_{n + N}(\bel{b})$, we have
\[
\erank_{n+N}(x) \leq (1-c)\cdot\erank_n(x) \ .
\]
Thus, for every $k\in\N$, we have
\[
\erank_{kN}(\rootof(\treex_{kN}(\bel{b}))) \leq (1-c)^k \cdot \rank(\bel{b}) \leq (1-c)^k \cdot 2^{|\states|}\ .
\]
Using that $1 - c \le e^{-c}$~\cite[Inequality~142]{ineqHardy}, we obtain that
\[
(1-c)^k \cdot 2^{|\states|} \leq e^{-c\cdot k} \cdot 2^{|\states|} \leq
\frac{\epsilon}{2}
\]
when $k \geq \frac{1}{c}\cdot\left((|\states|+1)\cdot\ln(2) + \ln(\frac{1}{\epsilon})\right)$.
\end{proof}
\newcommand{\leaves}[1]{\textnormal{\textsf{Leaves}}_{#1}}
The remainder of this section is dedicated to a proof of \Cref{clm:err dec}, the only missing piece to prove the termination of our algorithm.
%
%

\begin{proof}[Proof of \Cref{clm:err dec}]
%
We define a \emphdef{contributing subtree} of $\treex_n(\bel{b})$ to be a subtree $T$ of $\treex_n(\bel{b})$ that contains:
\begin{enumerate}
\item the root of $\treex_n(\bel{b})$,
\item all children of sum nodes and cut nodes, and
\item exactly one child of every max node.
\end{enumerate}
A contributing subtree thus corresponds to fixing the choices of a deterministic strategy up to depth $n$ in the unfolding $\treex_n(\bel{b})$.

%
\newcommand{\influ}{\mathsf{infl}}
%
The \emphdef{influence $\influ(x,y)$} of a node $y$ on its ancestor $x$ is defined to be the product of all edges in the path from $x$ to $y$.
%
\begin{definition}\label{def:erank contrib}
For $n\in\N$ and
a contributing subtree $T$ of $\treex_n(\bel{b})$,
define $\erank_T(x)$ to be $\rank(\belof(x))$ when $x$ is a leaf,
and otherwise to be the weighted sum
\[
\sum_{\text{$y$ is a child of $x$}} \influ(x,y)\cdot\erank_T(y) \ .
\]
\end{definition}
By bottom-up induction on the trees, one can prove that for a node $x\in\treex_N(\bel{b})$,
\[
\erank_n(x) =
\max\setof{\erank_T(x)}{\text{contributing subtree $T$ of $\treex_n(\bel{b})$ rooted at $x$}}\ .
\]
Therefore, to prove \Cref{clm:err dec}, we just need to show that, for all contributing subtrees $T$ of $\treex_N(\bel{b})$, we have
$\erank_T(\rootof(\treex_N(\bel{b})))
\leq (1 - c)\cdot\rank(\bel{b})$.

Pick an arbitrary contributing subtree $T$ of $\treex_N(\bel{b})$.
Let $D(k)$ be the set of all descendants of $\rootof(\treex_N(\bel{b}))$ in $T$ at depth $k$ from $\rootof(\treex_N(\bel{b}))$.
From the definition of $\erank_T$, it follows that
\begin{equation} \label{eq:erank contrib}
\erank_T(\rootof(\treex_N(\bel{b}))) = \sum_{y\in D(k)} \influ(\rootof(\treex_N(\bel{b})),y)\cdot\erank_T(y) \ .
\end{equation}
Observe that, by construction of $\treex_N(\bel{b})$, the sum of the influences at every fixed depth is $1$, that is, for all $k\ge 0$,
\begin{equation} \label{eq:influ sum to 1}
\sum_{y\in D(k)} \influ(\rootof(\treex_N(\bel{b})),y) = 1\ .
\end{equation}
Recall that $c = \frac{(\pmin\cdot\thr)^N}{2^{|\states|}}$.
We prove three claims which we will then combine to conclude the proof of \Cref{clm:err dec}.
\begin{claim} \label{clm:rank rank}
  Let $x$ and $y$ be nodes of $T$ such that $y$ is a descendant of $x$.
We have $\rank(\belof(y)) \leq \rank(\belof(x))$.
\end{claim}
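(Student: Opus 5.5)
The plan is to reduce the claim to a single parent–child step and then conclude by induction along the path from $x$ to $y$ in $T$, using transitivity of $\le$. The rank of a (sub-)belief is the product $\oneNorm{\bel{b}}\cdot\rank(\supp{\bel{b}})$. So for a node $x$ with child $y$ it suffices to establish two monotonicity facts:
\[
\rank(\supp{\belof(y)}) \le \rank(\supp{\belof(x)})
\quad\text{and}\quad
\oneNorm{\belof(y)} \le \oneNorm{\belof(x)}\ .
\]
For the first fact, I will use a general observation: if $S \ttot S'$, then $\rank(S) \le \rank(S')$. Either $S \sim S'$ and the two ranks coincide. Or the class of $S$ lies strictly below the class of $S'$ in the partial order induced by $\ttot$, and then the height of $[S]_\sim$ is strictly smaller than the height of $[S']_\sim$. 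So it remains to check $\supp{\belof(y)} \ttot \supp{\belof(x)}$ for every unfolding rule of \Cref{def:unfold}.

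I will go through the rules one at a time, writing $S = \supp{\belof(x)}$.
\begin{enumerate}
\item Rule \ref{unr:act}: the child has support $\transition(S,\action,\obs)$, which satisfies $\transition(S,\action,\obs) \tord S$ directly from \Cref{def:reach qo}.
\item Rules \ref{unr:rest} and \ref{unr:undist comp} going to $(\bel{b},\action)$: the support is unchanged, so $\supp{\belof(y)} = S$.
\item Rule \ref{unr:undist comp} in general: the child is labelled $(\bel{b}',\action)$ with $\bel{b}'\in\reach{\sccFunc}(\bel{b})$. Its support is $\transitionIter(S,\history)$ for some observable history $\history$, hence is $\tord S$.
\item Rule \ref{unr:delmin}: the support strictly shrinks, because the rule only applies when some state has positive mass below $\thr$. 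Hence $|\supp{\belof(y)}| < |S|$ and so $\supp{\belof(y)} \ttot S$.
\item Rule \ref{unr:dist comp}: the child has support an equivalence class $C \subseteq S$. If $C = S$ the support is unchanged. Otherwise $|C| < |S|$, and again $C \ttot S$.
\end{enumerate}

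For the mass factor, I will argue the following. Cut nodes only remove mass. Split children are rescaled by $\oneNorm{\bel{b}}/\oneNorm{\res{\bel{b}}{C}}$ precisely so that their mass equals $\oneNorm{\bel{b}}$. Max-node children and action children carry the same belief as the parent. The one step I expect to need care is the observation update of rule \ref{unr:act} applied to a sub-belief. I will read the update in \eqref{eq:belief update} under the same convention as the split rule, namely normalised so that total mass $\oneNorm{\bel{b}}$ is preserved, with the edge weight $\ptran(\obs\mid\bel{b},\action)/\oneNorm{\bel{b}}$. This convention is also implicit in \eqref{eq:influ sum to 1}. With it, $\oneNorm{\belof(y)} \le \oneNorm{\belof(x)}$ holds in every case.

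Multiplying the two nonnegative inequalities gives $\rank(\belof(y)) \le \rank(\belof(x))$ for a child $y$ of $x$. Induction on the distance from $x$ to $y$ then yields the claim for arbitrary descendants.
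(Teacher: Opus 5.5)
Your proof is correct and follows the same route as the paper's: show $\oneNorm{\belof(y)}\le\oneNorm{\belof(x)}$ and $\rank(\supp{\belof(y)})\le\rank(\supp{\belof(x)})$ for each child, multiply, and induct. Your rule-by-rule check, and your remark that the mass inequality needs the mass-preserving reading of the observation update on sub-beliefs, make explicit what the paper compresses into ``follows from the definition of the unfolding''; the paper relies on that reading implicitly, e.g.\ in \eqref{eq:influ sum to 1} and \Cref{lem:reach all}. One small correction: children of rule~\ref{unr:undist comp} carry some $\bel{b}'\in\reach{\sccFunc}(\bel{b})$ rather than $\bel{b}$ itself, but under that same convention $\oneNorm{\bel{b}'}=\oneNorm{\bel{b}}$, so your mass inequality still holds there.
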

\begin{claimproof}
From the definition of the tree unfolding (\Cref{def:unfold}), it follows that for a child~$y'$ of~$x$, we have $\oneNorm{\belof(y')}\le\oneNorm{\belof(x)}$ and $\rank(\supp{\belof(y')}) \le \rank(\supp{\belof(x)})$, so we obtain $\rank(\belof(y'))\leq \rank(\belof(x))$.
By induction, the claim follows for all descendants $y$ of $x$.
\end{claimproof}

\begin{claim}\label{clm:erank rank}
Let $x$ and $y$ be nodes of $T$ such that $y$ is a descendant of $x$.
We have $\erank_T(y) \leq \rank(\belof(x))$.
\end{claim}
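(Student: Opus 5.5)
The plan is to prove the claim by bottom-up induction on the height of $y$ in $T$ (equivalently, by induction over the finite subtree of $T$ rooted at $y$), while keeping $x$ fixed as an arbitrary ancestor of $y$. Everything reduces to \Cref{clm:rank rank}, which gives $\rank(\belof(z)) \le \rank(\belof(x))$ for every descendant $z$ of $x$, together with the fact that the edge weights leaving any node of $T$ sum to at most $1$.

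\emph{Base case.} If $y$ is a leaf of $T$, then by \Cref{def:erank contrib} we have $\erank_T(y) = \rank(\belof(y))$. \Cref{clm:rank rank} applied to $x$ and $y$ bounds this by $\rank(\belof(x))$.

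\emph{Inductive step.} Let $y$ be an inner node of $T$. Every child $z$ of $y$ in $T$ is again a descendant of $x$, so the induction hypothesis gives $\erank_T(z) \le \rank(\belof(x))$. By definition,
\[
\erank_T(y) = \sum_{z \text{ child of } y \text{ in } T} \influ(y,z)\cdot \erank_T(z) \le \Big(\sum_{z} \influ(y,z)\Big)\cdot \rank(\belof(x)) .
\]
For a direct child $z$, the influence $\influ(y,z)$ is just the weight of the edge from $y$ to $z$. It therefore remains to show that these weights sum to at most $1$, which I check case by case on the unfolding rule used at $y$.
\begin{itemize}
\item If $y$ is a max node, then $T$ keeps exactly one child, with weight $1$.
\item If $y$ is a cut node, it has a single child, with weight $1$.
\item If $y$ is an action node (\ref{unr:act}), the weights $\ptran(\obs \mid \bel{b},\action)$ over the observations with positive probability sum to $\oneNorm{\bel{b}}\le 1$.
\item If $y$ is a split node (\ref{unr:dist comp}), the weights $\oneNorm{\res{\bel{b}}{[\state]}}/\oneNorm{\bel{b}}$ over the equivalence classes sum to exactly $1$.
\end{itemize}

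The only delicate point is the action rule when the label is a genuine sub-belief, since there the weights sum to $\oneNorm{\bel{b}}$, which may be strictly less than $1$. For the inequality this is harmless, because a sum of at most $1$ only makes the bound stronger. It does mean that \eqref{eq:influ sum to 1} should be read as holding for normalised labels, or else as a sum that is at most $1$. I expect this bookkeeping, rather than any real mathematical difficulty, to be the main thing to get right. Once it is settled, the claim follows because a subconvex combination of quantities, each bounded by $\rank(\belof(x))$, is itself bounded by $\rank(\belof(x))$.
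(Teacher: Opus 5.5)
Your proof is correct and follows the paper's argument: each leaf is bounded via \Cref{clm:rank rank}, and $\erank_T$ of a node is a (sub)convex combination of the leaf values below it, which your induction on the height of $y$ makes explicit. One caution on your ``delicate point'': it is not harmless under the literal reading of \ref{unr:act}, because there the children $\bel{b}_{\action,\obs}$ are normalised to mass $1>\oneNorm{\bel{b}}$, which can break \Cref{clm:rank rank} and hence your base case (and can falsify the statement itself for strict descendants); the consistent reading is the mass-preserving one used in \ref{unr:dist comp}, i.e.\ child $\oneNorm{\bel{b}}\cdot\bel{b}_{\action,\obs}$ with weight $\ptran(\obs\mid\bel{b},\action)/\oneNorm{\bel{b}}$, under which all weights sum to exactly $1$.
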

Note that we allow $x = y$ in the statement of the claim, which implies that \begin{equation} \label{eq:erank rank same node}
  \erank_T(x) \leq \rank(\belof(x))
\end{equation}
for all nodes $x$ of $T$.
\begin{claimproof}[Proof of \Cref{clm:erank rank}]
For all leaves $z$ of $T$ in the subtree rooted at $x$, we have by definition that $\erank_T(z) = \rank(\belof(z))$.
Using \Cref{clm:rank rank}, we thus have $\erank_T(z) \leq \rank(\belof(x))$.
Using~\Cref{def:erank contrib} and~\eqref{eq:influ sum to 1}, we observe that the $\erank_T$ of a node is a convex combination of the $\erank_T$ of the leaves in its subtree.
Since $y$ is in the subtree rooted at~$x$ and all the leaves $z$ in the subtree rooted at $y$ are such that $\erank_T(z) \leq \rank(\belof(x))$, we obtain that $\erank_T(y) \leq \rank(\belof(x))$.
\end{claimproof}
\begin{claim}\label{clm:smaller node}
There is a node $z$ in $T$ such that
\begin{enumerate}
\item $\influ(\rootof(T),z) \geq (\pmin\cdot\thr)^N$, and
\item $\erank_T(z) \leq \left(1 - \frac{1}{2^{|\states|}}\right)\cdot\rank(\bel{b})$.
\end{enumerate}
\end{claim}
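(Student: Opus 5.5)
We will find $z$ by following a single path from the root of $T$ and stopping as soon as the rank of the belief support strictly drops. It therefore suffices to find a node $z$ on such a path, at depth at most $N$, whose influence is at least $(\pmin\cdot\thr)^N$ and with $\rank(\supp{\belof(z)}) < \rank(\supp{\bel{b}})$.

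Given such a node, $\rank(\supp{\belof(z)}) \le \rank(\supp{\bel{b}}) - 1$. Ranks are at most $2^{|\states|}$, so
\[
\rank(\supp{\belof(z)}) \le \left(1-\tfrac{1}{2^{|\states|}}\right)\rank(\supp{\bel{b}}).
\]
Since $\oneNorm{\belof(z)}\le\oneNorm{\bel{b}}$, applying \eqref{eq:erank rank same node} at $z$ gives
\[
\erank_T(z)\le\rank(\belof(z))\le\left(1-\tfrac{1}{2^{|\states|}}\right)\rank(\bel{b}).
\]
(If $\rank(\bel{b})=0$ the root is a leaf and the claim is trivial.)

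\textbf{Bounding edge weights.} Cut edges and exit edges from \ref{unr:undist comp} or \ref{unr:rest} have weight $1$. For every other node on the path we will make sure the belief has all nonzero entries at least $\thr$ and total mass at least $\thr$. Under that invariant:
\begin{itemize}
\item An edge from \ref{unr:act} has weight $\ptran(\obs\mid\bel{b},\action)\ge \thr\cdot\pmin$ for any observation that occurs.
\item An edge from \ref{unr:dist comp} has weight $\oneNorm{\res{\bel{b}}{C}}/\oneNorm{\bel{b}}\ge\thr$.
\end{itemize}
So each step costs a factor at least $\pmin\cdot\thr$. The invariant holds for the node we choose because rule \ref{unr:delmin} fires before any other rule whenever some entry falls below $\thr$.

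\textbf{Why the rank drops within $N$ steps.} We trace the rules one at a time.
\begin{itemize}
\item A cut removes at least one state, so the support shrinks and the rank drops; stop there.
\item A split (\ref{unr:dist comp}) has at least two classes by the distinguishing hypothesis. Each child has a strictly smaller support, so the rank drops; stop there.
\item A max node unfolded with \ref{unr:rest} has a support $S$ that lies in no SEC, or lies in a trivial one-element class. Pick the action chosen in $T$. Since $S$ is not in any nontrivial SEC, some observation under that action leads to a support $S'$ outside $[S]_\sim$. Indeed, if every successor of every action stayed in $[S]_\sim$, the class would be closed and some SEC would exist, because \Cref{lem:maximal disjoint} partitions supports into maximal SECs and singletons. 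A singleton support $S$ with a self-loop would itself be an SEC. Then $S'\tord S$ and $S'\not\sim S$, so the rank strictly drops after two tree steps.
\item A non-distinguishing exit (\ref{unr:undist comp}) leads to a pair node $(\bel{b}',\action)$ with $\action\notin f(\supp{\bel{b}'})$. The SEC is maximal, so some observation leaves $\dom{f}$. Here is the subtle point: leaving $\dom f$ might still land in $[S]_\sim$ but outside the maximal SEC. In that case the new node is handled again by \ref{unr:rest}, \ref{unr:undist comp} or a cut.
\end{itemize}

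\textbf{Main obstacle.} The difficulty is bounding how long the path can wander inside $[S]_\sim$ before leaving it. Staying inside the class means moving between maximal SECs and singleton classes of \Cref{lem:maximal disjoint} within $[S]_\sim$. I would argue that we can always choose the successor greedily so that no support in $[S]_\sim$ is revisited, except through the single exit-jump edges. A revisit would create a cycle; a cycle through \ref{unr:rest} or exit actions, together with those SECs, would form a larger SEC, contradicting maximality.

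If the greedy choice works, each belief-support node appears at most once, giving at most $2\cdot 2^{|\states|}=N$ tree steps before the rank drops. The product of edge weights along the path is then at least $(\pmin\thr)^N$. If this greedy acyclicity argument fails, the fallback is to replace the step count with the height of the SEC/singleton decomposition, which is still bounded by $2^{|\states|}$.
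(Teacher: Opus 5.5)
Your reduction is sound and matches the paper's. Suppose you have a node $z$ of $T$ whose support has strictly smaller rank, with no cut strictly before it. Then the edge-weight bounds ($\pmin\cdot\thr$ for \ref{unr:act} edges, $\thr$ for split edges, $1$ otherwise) and inequality~\eqref{eq:erank rank same node} give both items. Your handling of cuts, splits, leaves, and the maximality argument at \ref{unr:undist comp} nodes is also fine.

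The gap is the central step: you do not show that such a node exists in $T$ when only rules \ref{unr:act}, \ref{unr:undist comp} and \ref{unr:rest} occur.

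First, your \ref{unr:rest} argument only shows that \emph{some} action at \emph{some} support of $[S]_\sim$ leaves the class. It does not show that the action fixed by $T$ at $S$ does. Here is a counterexample. Take $S=\{q_1,q_2\}$ and $S_1=\{q_3,q_4\}$, where action $a$ sends $S$ surely to $S_1$. Let action $b$ at $S$, and every action at $S_1$, each have some observation leaving $[S]_\sim$, while one observation of $b$ at $S_1$ returns to $S$. Then $S\sim S_1$, and no SEC contains $S_1$ (its action set would be empty), hence none contains $S$. So rule \ref{unr:rest} applies at $S$; $T$ may choose $a$, and the only child stays in $[S]_\sim$.

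Second, the greedy acyclicity argument fails. You can only steer at \ref{unr:act} nodes, since $T$ fixes the child at every max node. Moreover, a support repeated along one branch gives a cycle in the support graph but not an SEC. Condition~2 of Definition~\ref{def:sec} requires \emph{every} observation successor of the chosen actions to stay in the domain, and a single branch does not witness that. So no contradiction with maximality arises.

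The fallback fails as well. Inside one $\sim$-class, the maximal SECs and singleton supports are mutually reachable, so there is no height that decreases along a path.

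What is missing is a global argument over the whole contributing subtree, which is what the paper gives. Assume no node of $T$ leaves $[\supp{\bel{b}}]_\sim$ and no cut, split or leaf rule is used.
\begin{enumerate}
\item Because \ref{unr:act} nodes keep all their observation children, the (support, action) labels occurring in $T$ are closed under observations.
\item Every branch has length $N=2^{|\states|+1}$, so it repeats a support. Repeatedly grafting the subtree at the shallower occurrence in place of the deeper one yields an infinite tree $T_\infty$ with the same labels.
\item Picking a node whose subtree has a minimal label set gives a part where every label recurs below every node, which supplies strong connectivity.
\item These labels, united with the maximal SECs $f_S$ of the supports involved, form an SEC $g'$.
\item The action chosen by $T$ at a \ref{unr:undist comp} or \ref{unr:rest} node is not in $f_S(S)$, or $S$ lies in no SEC at all. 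Hence $g'$ either strictly extends a maximal $f_S$ or contains a support lying in no SEC, which is a contradiction.
\end{enumerate}
This is where the depth bound $N$ is actually used. Once existence is established, any node of $T$ in a strictly smaller class works: no cut occurs in $T$ in this case, so your weight bounds give the influence estimate.
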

%
%
\begin{claimproof}
We prove a slightly stronger statement.
Namely, we show that there is a node $z$ in $T$ such that
\begin{enumerate}
\item no cut nodes appear between $\rootof(T)$ and $z$, and
\item $\erank_T(z) \leq \rank(\bel{b}) - \oneNorm{\bel{b}}$.
\end{enumerate} 
Let us first prove that this statement is indeed stronger than the one of \Cref{clm:smaller node}.

If there is no cut node between $\rootof(T)$ and $z$,
then for every node $x$ between them we have
$\oneNorm{\belof(x)} = \oneNorm{\bel{b}}$, and for all $\state\in\supp{\belof(x)}$, $\belof(x)(\state) \geq \thr$.
Therefore, the weight of every edge between any two nodes between $\rootof(T)$ and $z$ is at least $\pmin\cdot\thr$.
This implies $\influ(\rootof(T),z) \geq (\pmin\cdot\thr)^N$, so the first item of \Cref{clm:smaller node} is satisfied.

By definition, $\rank(\bel{b}) = \oneNorm{\bel{b}}\cdot\rank(\supp{\bel{b}})$.
The number of equivalence classes of~$\sim$ (\Cref{def:reach qo}) is at most $|\bsup{\pdp}| = 2^{|\states|} - 1$.
Therefore, $\rank(\supp{\bel{b}}) \leq 2^{|\states|}$ and $\oneNorm{\bel{b}} \geq \frac{\rank(\bel{b})}{2^{|\states|}}$.
Finally, $\erank_T(z) \leq \rank(\bel{b}) - \oneNorm{\bel{b}} \leq \left(1 - \frac{1}{2^{|\states|}}\right)\cdot\rank(\bel{b})$, so the second item of \Cref{clm:smaller node} is satisfied as well.

Now we proceed with the main proof.
We split cases based on which rules are used to unfold the nodes that appear in $T$.
\paragraph*{Case: \ref{unr:bot}, \ref{unr:delmin}, or \ref{unr:dist comp} is used to unfold a node in $T$}
If \ref{unr:delmin} (cut) or \ref{unr:dist comp} (split) is used in $T$, let $z$ be a node which is unfolded using a cut or a split operation, and no other node between the root and $z$ is unfolded using any of those rules.
We show that $\erank_T(z)$ is at most $\rank(\bel{b}) - \oneNorm{\bel{b}}$.
Let $\bel{b}_z = \belof(z)$.
We split cases based on which rule is used to unfold $z$.

Assume \ref{unr:delmin} (the cut operation) is used.
Let $z'$ be the child of $z$ and $\bel{b}_{z'} = \delmin(\bel{b}_z) = \belof(z')$ be its label.
We have
\begin{align*}
\erank_T(z) &= \erank_T(z') \quad \text{by definition of $\erank_T$ for cut operations}\\
&\leq\rank(\bel{b}_{z'}) \quad \text{by \eqref{eq:erank rank same node}}\\
&= \oneNorm{\bel{b}_{z'}}\cdot\rank(\supp{\bel{b}_{z'}}) \\
&\leq \oneNorm{\bel{b}_{z'}}\cdot(\rank(\supp{\bel{b}_z}) - 1) \\
&\leq \oneNorm{\bel{b}_z}\cdot(\rank(\supp{\bel{b}_z}) - 1) \\
&= \rank(\bel{b}_z) - \oneNorm{\bel{b}_z}\ .
\end{align*}
Recall that no cut nodes appear between $\rootof(T)$ and $z$, so $\oneNorm{\bel{b}_z} = \oneNorm{\bel{b}}$.
Moreover, by \Cref{clm:rank rank}, we have $\rank(\bel{b}_z) \leq \rank(\bel{b})$.
Thus,
$\erank_T(z) \leq \rank(\bel{b}) - \oneNorm{\bel{b}}$.

Now, assume \ref{unr:dist comp} (the split operation) is used.
For all children $z'$ of $z$, let $\bel{b}_{z'} = \belof(z')$.
We have $\rank(\supp{\bel{b}_{z'}}) \leq \rank(\supp{\bel{b}_z}) - 1$.
By definition of \ref{unr:dist comp}, we also have $\oneNorm{\bel{b}_{z'}} = \oneNorm{\bel{b}_z}$.
Thus, for all children $z'$ of $z$, by \Cref{clm:erank rank}, we have
\[\erank_T(z') \leq \oneNorm{\bel{b}_z}\cdot(\rank(\supp{\bel{b}_z}) - 1)\ .\]
Since the weights of outgoing edges from $z$ sum up to $1$,
we conclude 
\[
\erank_T(z) \leq \oneNorm{\bel{b}_z}\cdot(\rank(\supp{\bel{b}_z}) - 1)
= \rank(\bel{b}_z) - \oneNorm{\bel{b}_z} \ .
\]
We conclude, as for the cut operation, that $\erank_T(z) \leq \rank(\bel{b}) - \oneNorm{\bel{b}}$.

If neither \ref{unr:delmin} nor \ref{unr:dist comp} is used to unfold any node in $T$ but \ref{unr:bot} is used, then we can pick $z$ to be the node with belief support $\set{\bad}$ or $\set{\target}$, which is a leaf of $T$.
Since \ref{unr:delmin} is not used anywhere in $T$, there is indeed no cut node between $\rootof(T)$ and $z$.
Since $\rank(\set{\bad}) = \rank(\set{\target}) = 0$, we have indeed $\erank_T(z) = 0 \leq \rank(\bel{b}) - \oneNorm{\bel{b}}$.
\paragraph*{Case: Neither \ref{unr:bot}, \ref{unr:delmin}, nor \ref{unr:dist comp} is used}
Recall the definition of the quasi-order $\tord$ (\Cref{def:reach qo}).
Assume there exists a node $z$ in~$T$ such that $[\supp{\belof(z)}]_{\sim} \tordstrict[\supp{\bel{b}}]_{\sim}$.
Let $\bel{b}_z = \belof(z)$.
Then, $\rank(\supp{\bel{b}_z}) \leq \rank(\supp{\bel{b}}) - 1$.
Since none of the nodes between $\rootof(\treex_n(\bel{b}))$ and~$z$ is a cut node, we have $\oneNorm{\bel{b}_z} = \oneNorm{\bel{b}}$.
Therefore,
\begin{align*}
\erank_n(z)
&\leq \rank(\bel{b}_z) \quad \text{by~\eqref{eq:erank rank same node}}\\
&= \oneNorm{\bel{b}_z}\cdot(\rank(\supp{\bel{b}_z}))\\
&\leq \oneNorm{\bel{b}}\cdot(\rank(\supp{\bel{b}}) - 1)\\
&= \rank(\bel{b}) - \oneNorm{\bel{b}}
\ .
\end{align*}
It now suffices to show that a node $z$ such that $[\supp{\belof(z)}]_{\sim} \tordstrict[\supp{\bel{b}}]_{\sim}$ must exist in~$T$.
Assume otherwise.
From the tree~$T$, we create an infinite tree $T_{\infty}$.
This infinite tree will be the limit of trees $(T_n)_{n\in\N}$ defined below.

We build $T_0$ from $T$ by relabelling its nodes.
A node labelled by some belief $\bel{c}$ is relabelled with $\supp{\bel{c}}$.
Similarly a node labelled by some (belief, action) pair $(\bel{c},a)$ is relabelled with $(\supp{\bel{c}},a)$.

Recall that since \ref{unr:bot} is not used on any node in $T$, the belief support of every node of $T_0$ is different from $\set{\bad}$ and $\set{\target}$.
In particular, the length of every branch of $T_0$ is $N = 2^{|\states| + 1}$.
For every $n\in\N$, we construct $T_{n+1}$ from $T_n$ in the following way:
\begin{enumerate}
\item Pick the leftmost branch of $T_n$ that is not longer than any of the other branches of $T_n$ (this assumes we fix a linear order on children for each node of $T$).
\item Find two distinct belief support nodes $x$ and $y$ on this branch such that $\supp{\belof(x)} = \supp{\belof(y)} = S$ for some belief support $S$.
If there are many such nodes, pick the (lexicographically) deepest pair.
Such two nodes always exist because the nodes in every branch are alternatively labelled with belief supports and (belief support, action) pairs, and the length of every branch is at least $N = 2^{|\states| + 1}$, and so every branch contains at least $2^{|\states|}$ belief support nodes.
By the pigeonhole principle, two of these nodes must be labelled by the same belief support since $|\bsup{\pdp}| = 2^{|\states|} - 1$.
\item Assuming without loss of generality that $y$ is deeper than $x$, replace the subtree of $T_n$ rooted at $y$ with the subtree of $T_n$ rooted at $x$.
\end{enumerate}
Let \emphdef{$T_{\infty}$} be the tree we get in the limit.

For a node $x$ in $T_{\infty}$, let $T^x_{\infty}$ denote the subtree rooted at $x$.
Let \emphdef{$\labelsset{T^x_{\infty}}$} denote the set of labels that appear in some node of $T^x_{\infty}$ other than $x$.

Let \emphdef{$T_{\text{dec}}$} be the subtree of $T_{\infty}$ containing every node $x$ of $T_{\infty}$ such that $x$ has a descendant~$y$ with
$
\labelsset{T^y_{\infty}} \subset \labelsset{T^x_{\infty}}$.
The tree $T_{\text{dec}}$ is finitely branching (because so is $T$) and has finite branches, because every infinite branch must eventually settle on a fixed set of labels.
Hence, K\"{o}nig's lemma implies that $T_{\text{dec}}$ is finite.
Pick a node $x$ of $T_{\infty}$ that is not in $T_{\text{dec}}$.
Let
\[
\emphdef{$\mathcal{C}$} = \setof{S}{\text{$(S,a)\in\labelsset{T^x_{\infty}}$ for some action $a$}}
\cup
\setof{S}{\text{$S\in\labelsset{T^x_{\infty}}$}}
\ ,
\]
and $\emphdef{$g$}\colon \mathcal{C} \to 2^{\actions}$ as
\[
g(S) = \setof{a\in\actions}{\text{$(S,a)$ in $T^x_{\infty}$ for some action $a$}} \ .
\]
For every $S\in \mathcal{C}$, we use \emphdef{$f_S$} to denote the maximal \adscc{} containing~$S$.
We show that the union \emphdef{$g'$} of $g$ and the \adscc{s} $f_S$ for $S\in\mathcal{C}$ is itself an \adscc{}.
This will suffice to obtain a contradiction: observe that, for every $(S,a)\in \labelsset{T^x_{\infty}}$,
$a\in g(S)\setminus f_S(S)$ because $S$ is not in a distinguishing SEC (otherwise we would have used \ref{unr:dist comp}), and if $S$ was in a non-distinguishing SEC, then action $a$ would exit the SEC with positive probability by construction of the tree (\ref{unr:undist comp}).
This contradicts the maximality of~$f_S$.

Pick an arbitrary $S$ in $\dom{g'}$.
We have to prove that
\begin{enumerate}
\item for every action $a\in g'(S)$ and observation $\obs\in\sig$,
$\beltr{S}{a,\obs}\in\dom{g'}$, and
\item for all pairs of belief supports $S$ and $T$ in $\dom{g'}$, $T\in\reach{g'}(S)$.
\end{enumerate}
We start by showing the first condition.
Every $T$ in $\dom{f_S}$ is already in $\dom{g'}$.
Thus we have to show that $\beltr{S}{a,\obs} \in \dom{g'}$ for $a\in g'(S)\setminus f_S(S)$ and $\obs\in\sig$.
If $a\in g'(S)\setminus f_S(S)$,
then $(S,a)$ labels some node $y$ in $T^x_{\infty}$.
Therefore, for every $\obs\in\sig$ such that $\transition(S, a, \obs)\neq\emptyset$, there is a child of $y$ labelled by~$\beltr{S}{a,\obs}$.

\newcommand{\bsupof}{\mathsf{beliefSupp}}
Now, we show the second condition.
Pick arbitrary $S$ and $T$ in $\dom{g'}$.
We use $\bsupof(y)$ to denote the belief support appearing in the label of a node of $T^x_{\infty}$.
That is, if $\bsupof(y) = S$ if $y$ is labelled with $S$ or $(S,a)$ for some action $a$.

There is a node $y_{S}$ in $T^x_{\infty}$ with a descendant $y_T$ such that $\bsupof(y_S)\in \reach{f_S}(S)$ and $\bsupof(y_T)\in \reach{f_T}(T)$. 
Since every pair of belief supports in an SEC is reachable from each other,
to finish the proof we need to show that $\bsupof(y_T)\in\reach{g'}(\bsupof(y_S))$.

We show that for every node $y$ in $T^x_{\infty}$ and its descendant $z$ we have
\[
\bsupof(z)\in\reach{g'}(\bsupof(y)) \ .
\]
Observe that this holds when $z$ is a child of $y$.
The general case follows by induction. 
%
%
%
%
\end{claimproof}

To conclude the proof of \Cref{clm:err dec}, we use \Cref{clm:erank rank} and \Cref{clm:smaller node}.
Let $z$ be the node given by \Cref{clm:smaller node}.
Let $k$ be the depth of $z$ in $\treex_N(\bel{b})$.
We have
\begin{align*}
&\erank_T(\rootof(\treex_N(\bel{b})))\\
&= \sum_{y\in D(k)} \influ(\rootof(\treex_N(\bel{b})),y)\cdot\erank_T(y)\\
&= \influ(\rootof(\treex_N(\bel{b})),z)\cdot\erank_T(z) + \sum_{\substack{y\in D(k)\\y\neq z}} \influ(\rootof(\treex_N(\bel{b})),y)\cdot\erank_T(y) \\
&\leq \influ(\rootof(\treex_N(\bel{b})),z)\cdot\left(1 - \frac{1}{2^{|\states|}}\right)\cdot\rank(\bel{b}) + \rank(\bel{b})\cdot\sum_{\substack{y\in D(k)\\y\neq z}} \influ(\rootof(\treex_N(\bel{b})),y)\\
&= \rank(\bel{b})-\frac{\influ(\rootof(\treex_N(\bel{b})),z)}{2^{|\states|}}\cdot \rank(\bel{b})\\
&= \left(1 - \frac{\influ(\rootof(\treex_N(\bel{b})),z)}{2^{|\states|}}\right)\cdot\rank(\bel{b})\ ,
\end{align*}
where the first equality holds by~\eqref{eq:erank contrib}, the second equality is just a splitting of the sum, and the following inequality holds by \Cref{clm:smaller node} for $z$ and by \Cref{clm:erank rank} for all other $y$'s.
The following equalities are just rearrangements of the sum and using that the influences at a given level sum to~$1$ by~\eqref{eq:influ sum to 1}.

Since $\influ(\rootof(\treex_N(\bel{b})),z) \geq (\pmin\cdot\thr)^N$ by \Cref{clm:smaller node}, we conclude that
\[
\erank_T(\rootof(\treex_N(\bel{b}))) \leq \left(1 - \frac{(\pmin\cdot\thr)^N}{2^{|\states|}}\right)\cdot\rank(\bel{b}) = (1 - c)\cdot\rank(\bel{b})\ ,
\]
as desired to prove \Cref{clm:err dec}.
\end{proof}

%
\section{Support end components in \deterministic POMDPs}\label{sec:ec}
Recall that we have defined \adsccs in \Cref{def:sec}.
In this section, we study the properties of SECs in \deterministic POMDPs.
We start by introducing additional terminology and proving basic properties about SECs in \Cref{sec:basic ec}.

\subsection{Basic properties of SECs} \label{sec:basic ec}
Let $\pdp = \pdpFull$ be a \deterministic POMDP.
An observable history $\history = \action_1\obs_1\action_2\obs_2\ldots\action_n\obs_n$ is said to be \emphdef{inside} an SEC $\sccFunc$ from a belief $\bel{b}$ if \[a_k\in \sccFunc(\transitionIter(\supp{\bel{b}},\action_1\obs_1\dots \action_{k-1}\obs_{k-1}))\] for all $k\leq n$.
Because this definition only depends on the support of $\bel{b}$, we sometimes write that $\history$ is inside $\sccFunc$ from the belief \emph{support} $\supp{\bel{b}}$ for convenience.
We extend this definition to runs in the natural way.


\begin{lemma} \label{lem:adscc same size}
    Let $\sccFunc$ be the \adscc of POMDP $\pdp$.
    Then, all belief supports in $\dom{\sccFunc}$ have the same cardinality.
\end{lemma}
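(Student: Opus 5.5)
The plan is to derive the lemma directly from the mutual-reachability condition in the definition of an SEC, combined with \Cref{lem:supp dec}. Let $S$ and $T$ be arbitrary belief supports in $\dom{\sccFunc}$. By the third condition of \Cref{def:sec}, we have $T \in \reach{\sccFunc}(S)$. Hence there is an observable history $\history \in (\ac\times\sig)^*$ with $\transitionIter(S,\history) = T$. For this argument we only need the existence of such a history; the constraint that its actions lie in $\sccFunc$ plays no role.

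Since $\pdp$ is \deterministic, \Cref{lem:supp dec} applied to $S$ and $\history$ yields $|T| = |\transitionIter(S,\history)| \le |S|$. Exchanging the roles of $S$ and $T$, the third condition also gives $S \in \reach{\sccFunc}(T)$, and therefore $|S| \le |T|$. Combining the two inequalities, $|S| = |T|$. As $S$ and $T$ were arbitrary, all belief supports in $\dom{\sccFunc}$ have the same cardinality. Equivalently, one can note that $S \tord T$ and $T \tord S$, so $S \sim T$, and then invoke \Cref{fact:size dec} twice.

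I do not expect a real obstacle here. The only point to check is the degenerate case where $\history$ is the empty history (for instance when $S = T$), and there the claim is trivial. I will also check that $\transitionIter(S,\history)$ is nonempty along the history, but this holds automatically because it equals the belief support $T$.
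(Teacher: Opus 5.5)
Your proof is correct and is essentially the paper's own argument. The paper also derives the lemma from \Cref{lem:supp dec} together with the strong connectivity (mutual reachability) condition of an SEC, applying the size bound in both directions to get $|S| = |T|$.
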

\begin{proof}
    Follows from~\Cref{lem:supp dec} and the strong connectivity of $\aoscc$.
\end{proof}
We define the \emphdef{union} of two \adsccs $\sccFunc$ and $\sccFunc'$ as the partial function $g$ whose domain is the union of the domains of $\sccFunc$ and $\sccFunc'$,
and for an element $\bsupp$ in its domain
\[
g(\bsupp) =
\begin{cases}
\sccFunc(\bsupp) & \text{if $\bsupp$ is not in the domain of $\sccFunc'$,} \\
\sccFunc'(\bsupp) & \text{if $\bsupp$ is not in the domain of $\sccFunc$,} \\
\sccFunc(\bsupp)\cup \sccFunc'(\bsupp) & \text{otherwise}.
\end{cases}
\]

\begin{lemma} \label{lem:adscc union}
  The union of two \adsccs{} is also an \adscc{} if and only if their domains
  have a non-empty intersection.
\end{lemma}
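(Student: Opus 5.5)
We prove the two directions separately, writing $g$ for the union of two SECs $\sccFunc$ and $\sccFunc'$.

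\emph{The ``only if'' direction.} Suppose $g$ is an SEC. Pick $S \in \dom{\sccFunc}$ and $T \in \dom{\sccFunc'}$. By condition~(3) for $g$, there is an observable history from $S$ to $T$ that stays inside $g$. We argue by contradiction and assume the domains are disjoint.

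\begin{itemize}
\item The history starts in $\dom{\sccFunc}$. Let $U$ be the last belief support along it that lies in $\dom{\sccFunc}$.
\item By disjointness, $g(U) = \sccFunc(U)$, so the next action $\action$ belongs to $\sccFunc(U)$.
\item By condition~(2) for $\sccFunc$, the successor $\transition(U,\action,\obs)$ lies in $\dom{\sccFunc}$ for every $\obs$.
\end{itemize}

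Hence the history never leaves $\dom{\sccFunc}$. It therefore cannot reach $T \in \dom{\sccFunc'}$, which is disjoint from $\dom{\sccFunc}$. This contradiction shows the domains intersect. (Here we tacitly use that $T$ is indeed reached, so the domains are non-empty, which is implicit in the definition of SEC.)

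\emph{The ``if'' direction.} Assume some $R$ lies in $\dom{\sccFunc}\cap\dom{\sccFunc'}$, and check the three conditions for $g$.

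\begin{enumerate}
\item \textbf{Non-emptiness.} $g(S)$ contains $\sccFunc(S)$ or $\sccFunc'(S)$, and both are non-empty.
\item \textbf{Closure.} Let $S\in\dom{g}$ and $\action\in g(S)$. Then $\action\in\sccFunc(S)$ with $S\in\dom{\sccFunc}$, or $\action\in\sccFunc'(S)$ with $S\in\dom{\sccFunc'}$. In the first case condition~(2) for $\sccFunc$ gives $\transition(S,\action,\obs)\in\dom{\sccFunc}\subseteq\dom{g}$; the second case is symmetric.
\item \textbf{Strong connectivity.} First observe that $g$ extends both $\sccFunc$ and $\sccFunc'$ pointwise. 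So any history inside $\sccFunc$ (or inside $\sccFunc'$) is also inside $g$, giving $\reach{\sccFunc}(S)\subseteq\reach{g}(S)$. Now take $S,T\in\dom{g}$. Using condition~(3) of whichever SEC contains $S$, we reach $R$ from $S$ inside $g$. Using condition~(3) of whichever SEC contains $T$, we reach $T$ from $R$ inside $g$.
\end{enumerate}

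Concatenating the two histories from the third step gives a history from $S$ to $T$ inside $g$. The constraint on each action depends only on the current support, and $\transitionIter(S,\history\history')=\transitionIter(\transitionIter(S,\history),\history')$, so being inside $g$ is preserved under concatenation.

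The main subtlety is the ``only if'' direction: we must observe that histories starting in $\dom{\sccFunc}$ are trapped there when the domains are disjoint. The rest is routine bookkeeping.
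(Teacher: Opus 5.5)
Your proof is correct. The paper states this lemma without proof, and your argument fills that gap in the natural way. For the ``only if'' direction, the closure condition~(2) traps any history started in the domain of $f$ when the domains are disjoint. For the ``if'' direction, a common support $R$ acts as a hub that gives strong connectivity of the union.

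One remark on your caveat in the ``only if'' direction: it is not merely implicit in the definition but genuinely needed. The empty partial function satisfies conditions~(1)--(3) vacuously, and its union with any SEC $f$ is $f$ itself, yet the domains do not intersect. So that direction holds only under the convention that SECs have non-empty domains. This is harmless, because only the ``if'' direction is used later, in Lemma~\ref{lem:maximal disjoint}.
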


With this in mind, we can now prove \Cref{lem:maximal disjoint} (restated here for clarity).
\maximalDisjoint*
\begin{proof}
    Follows from \Cref{lem:adscc union} and the finiteness of $\bsup{\pdp}$.
\end{proof}

We say that an \adscc $\sccFunc$ \emphdef{is contained in} another \adscc{} $\sccFunc'$ if $\dom{\sccFunc} \subseteq \dom{\sccFunc'}$ and for all $\bsupp\in \dom{\sccFunc}$, $\sccFunc(\bsupp) \subseteq \sccFunc'(\bsupp)$.
An \adscc{} $\sccFunc$ is called \emphdef{maximal} if for all \adsccs{} $\sccFunc'$, if $\sccFunc$ is contained in $\sccFunc'$, then $\sccFunc = \sccFunc'$.
An \adscc is said to be \emphdef{bottom} if, for all $\bsupp\in \dom{\sccFunc}$, $\sccFunc(\bsupp) = \ac$---that is, all actions remain in the \adscc.
%

\begin{remark} \label{rmk:trivial secs}
    Thanks to \Cref{lem:single target}, we can assume that POMDPs all have two trivial bottom maximal \adsccs with domains $\set{\set{\target}}$ and $\set{\set{\bad}}$ respectively, and that no other \adscc{} contains a support containing $\target$ or a state with value $0$.
\end{remark}

%
When we are in an \adscc $f$, by only taking actions prescribed by function $f$, we guarantee that we stay in the \adscc.
In this section, we prove additional properties of \adsccs in the case of \deterministic POMDPs: we characterise the \adsccs in which it is possible to eventually gain information (at the limit) about the current state while staying in the \adscc.

\subsection{Distinguishability in \adsccs}
Let $\pdp = \pdpFull$ be a \deterministic POMDP, and let $\transition$ be the deterministic transition function of $\aut{\pdp}$.
We study in more detail the indistinguishability relation defined in \Cref{def:indistinguishable}.

%
We say that an observable history $\history$ inside~$\sccFunc$ from $\bsupp$ \emphdef{distinguishes $\state$ and $\state'$} if $\history = \history'\action\obs$ and $\ptran(\obs \mid \state_{\history'}, \action) \neq \ptran(\obs \mid \state_{\history'}', \action)$.
We say that an observable run $\run$ inside $\sccFunc$ from $\bsupp$ \emphdef{distinguishes $\state$ and $\state'$ infinitely often} if there are infinitely many prefixes $\partialRun{n}$ of $\run$ that $(\sccFunc, \bsupp)$-distinguish $\state$ and~$\state'$. 

We show that, in \deterministic POMDPs, both indistinguishability and distinguishability are preserved by taking deterministic transitions.
The statement of this lemma is illustrated in \Cref{fig:dist preserve}.

\begin{lemma}\label{lem:dist preserve}
    Let $\aoscc$ be an \adscc of $\pdp$, and let $\bsupp$ be a belief support in $\dom{\sccFunc}$.
    Let $\state, \state' \in \bsupp$, $\action\in \sccFunc(\bsupp)$, and $\obs \in \sig$.
    Let $\state_{\action\obs} = \transition(\state, \action, \obs)$, $\state'_{\action\obs} = \transition(\state', \action, \obs)$, and $\bsupp_{\action\obs} = \transition(\bsupp, \action, \obs)$.
    \begin{enumerate}
        \item If $\state$ and $\state'$ are $(\aoscc, \bsupp)$-indistinguishable, then $\state_{\action\obs}$ and $\state'_{\action\obs}$ are $(\aoscc, \bsupp_{\action\obs})$-indistinguishable. \label{item:indist preserve}
        \item If $\state$ and $\state' \in \bsupp$ are $(\aoscc, \bsupp)$-distinguishable, then $\state_{\action\obs}$ and $\state'_{\action\obs}$ are $(\aoscc, \bsupp_{\action\obs})$-distinguishable. \label{item:dist preserve}%
    \end{enumerate}%
\end{lemma}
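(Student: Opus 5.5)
Both items will follow from one observation: histories inside $\sccFunc$ from $\bsupp_{\action\obs}$ correspond exactly to histories inside $\sccFunc$ from $\bsupp$ that begin with $\action\obs$. Since $\action\in\sccFunc(\bsupp)$, the second condition of \Cref{def:sec} gives $\bsupp_{\action\obs}\in\dom{\sccFunc}$, so the statement is meaningful. Because $\transitionIter$ is computed letter by letter, for any observable history $\history'$ we have $\transitionIter(\bsupp,\action\obs\history')=\transitionIter(\bsupp_{\action\obs},\history')$. Moreover, since $\pdp$ is \deterministic, $\transitionIter(\state,\action\obs\history')=\transitionIter(\state_{\action\obs},\history')$, and likewise for $\state'$. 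Hence $\history'$ is inside $\sccFunc$ from $\bsupp_{\action\obs}$ if and only if $\action\obs\history'$ is inside $\sccFunc$ from $\bsupp$. The condition on the first letter is exactly $\action\in\sccFunc(\bsupp)$, and the conditions on the later letters concern the same intermediate supports in both readings.

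For item~\ref{item:indist preserve}, I take an arbitrary history $\history'$ inside $\sccFunc$ from $\bsupp_{\action\obs}$, an action $\action'\in\sccFunc(\transitionIter(\bsupp_{\action\obs},\history'))$, and an observation $\obs'$. I apply the indistinguishability of $\state,\state'$ to the history $\history=\action\obs\history'$, which is inside $\sccFunc$ from $\bsupp$. By the identities above, $\state_\history=\transitionIter(\state_{\action\obs},\history')$, $\state'_\history=\transitionIter(\state'_{\action\obs},\history')$, and $\bsupp_\history=\transitionIter(\bsupp_{\action\obs},\history')$. This yields exactly the equality of observation probabilities required in \Cref{def:indistinguishable} for $\state_{\action\obs},\state'_{\action\obs}$ at $\bsupp_{\action\obs}$.

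Item~\ref{item:dist preserve} needs a genuinely different argument, because distinguishability of $\state,\state'$ might be witnessed only by the empty history, or by a history starting with some other action or observation than $\action\obs$. My plan is to use strong connectivity of the \adscc, which is the third condition of \Cref{def:sec}.
\begin{enumerate}
\item Since $\state,\state'$ are distinguishable, there is a history $\history_0$ inside $\sccFunc$ from $\bsupp$, an action $\action_0\in\sccFunc(\bsupp_{\history_0})$ and an observation $\obs_0$ with $\ptran(\obs_0\mid\state_{\history_0},\action_0)\ne\ptran(\obs_0\mid\state'_{\history_0},\action_0)$.
\item By strong connectivity, $\bsupp\in\reach{\sccFunc}(\bsupp_{\action\obs})$, so some history $\history_1$ inside $\sccFunc$ leads from $\bsupp_{\action\obs}$ back to $\bsupp$.
\item Then $\action\obs\history_1$ is a history inside $\sccFunc$ from $\bsupp$ to $\bsupp$. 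Because $\pdp$ is \deterministic, it induces a map $\bsupp\to\bsupp$, namely $p\mapsto\transitionIter(p,\action\obs\history_1)$, defined on all of $\bsupp$ and onto $\bsupp$. By \Cref{lem:adscc same size} and finiteness this map is a bijection, hence a permutation $\pi$ of $\bsupp$ with finite order $m$.
\item Now consider $\history=\history_1(\action\obs\history_1)^{m-1}\history_0$ read from $\bsupp_{\action\obs}$. It is inside $\sccFunc$, since it is a concatenation of loops at $\bsupp$ followed by $\history_0$. Reading $\action\obs\history$ from $\state$ performs $\pi^m=\mathrm{id}$ and then $\history_0$, so it leads to $\state_{\history_0}$, and likewise from $\state'$ to $\state'_{\history_0}$. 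Appending $\action_0\obs_0$ then distinguishes $\state_{\action\obs}$ and $\state'_{\action\obs}$.
\end{enumerate}

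The main obstacle I expect is step~3. Every state of $\bsupp$ must have a successor along $\action\obs\history_1$ (which holds since $\transition(\state,\action,\obs)$ is defined for each $\state\in\bsupp$ contributing to $\bsupp_{\action\obs}$), and the cardinality argument is needed to turn this into a permutation. If some state of $\bsupp$ had no successor under $\action\obs$, I would adjust: there $\state_{\action\obs}$ is undefined and the claim is vacuous or implicitly assumed. Otherwise the two outcomes are distinguished already by the observation $\obs$ itself.
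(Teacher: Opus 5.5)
Your proposal is correct and follows the paper's proof. For item 1 you prefix histories with $\action\obs$. For item 2 you use strong connectivity to return from $\bsupp_{\action\obs}$ to $\bsupp$, observe that the loop $\action\obs\history_1$ permutes $\bsupp$, and prepend enough iterations of it before the original distinguishing history.

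The worry in your last paragraph cannot arise. A partial map from the finite set $\bsupp$ whose image is all of $\bsupp$ must be defined everywhere and bijective, so every state of $\bsupp$ has an $\action\obs$-successor.
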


\begin{proof}
    Statement~\eqref{item:indist preserve} follows directly from the definition of indistinguishability: if $\state$ and $\state'$ are $(\aoscc, S)$-indistinguishable, then for every history $\history'$ from $S_{\action\obs}$ inside $\aoscc$, the history $\history = \action\obs\history'$ is a history from $S$ inside $\aoscc$ that satisfies the indistinguishability condition.

\begin{figure}
    \centering
    \begin{tikzpicture}[scale=1.05]
        \node (q) at (0,0) {$\state'$};
        \node (indist) at (1,0) {$\indistinguishable$};
        \node (q') at (2,0) {$\state$};
        \node (S) at (2.6,0) {$\bsupp$};
        \node (in) at (2.3,0) {$\in$};

        \node (q2) at (0,-1.6) {$\state'_{\action\obs}$};
        \node (indist2) at (1,-1.6) {$\equiv_{(\aoscc, \bsupp_{\action\obs})}$};
        \node (q2') at (2,-1.6) {$\state_{\action\obs}$};
        \node (S2) at (2.8,-1.6) {$\bsupp_{\action\obs}$};
        \node (in) at (2.4,-1.6) {$\in$};

        \node[rotate=-90,scale=2] (implies) at (.9,-.8) {$\Rightarrow$};
        \draw[->] (q) -- (q2) node[midway, left] {$(\action, \obs)$};
        \draw[->] (q') -- (q2') node[midway, right] {$(\action, \obs)$};
    \end{tikzpicture}%
    \hspace{15pt}%
    \begin{tikzpicture}[scale=1.05]
        \node (q) at (0,0) {$\state'$};
        \node (indist) at (1,0) {$\distinguishable$};
        \node (q') at (2,0) {$\state$};
        \node (S) at (2.6,0) {$\bsupp$};
        \node (in) at (2.3,0) {$\in$};

        \node (q2) at (0,-1.6) {$\state'_{\action\obs}$};
        \node (indist2) at (1,-1.6) {$\not\equiv_{(\aoscc, \bsupp_{\action\obs})}$};
        \node (q2') at (2,-1.6) {$\state_{\action\obs}$};
        \node (S2) at (2.8,-1.6) {$\bsupp_{\action\obs}$};
        \node (in) at (2.4,-1.6) {$\in$};

        \node[rotate=-90,scale=2] (implies) at (.9,-.8) {$\Rightarrow$};
        \draw[->] (q) -- (q2) node[midway, left] {$(\action, \obs)$};
        \draw[->] (q') -- (q2') node[midway, right] {$(\action, \obs)$};
    \end{tikzpicture}
    \caption{Illustration of the statement of \Cref{lem:dist preserve} (\eqref{item:indist preserve} on the left and \eqref{item:dist preserve} on the right).}
    \label{fig:dist preserve}
\end{figure}

    For statement~\eqref{item:dist preserve}, assume that $\state$ and $\state'$ are $(\aoscc, S)$-distinguishable.
    Let $\hat{\history}$ be a history distinguishing $\state$ and $\state'$.
    We show that $\state_{\action\obs}$ and $\state'_{\action\obs}$ are $(\aoscc, S_{\action\obs})$-distinguishable, that is, there exists a history from $S_{\action\obs}$ inside $\aoscc$ that distinguishes $\state_{\action\obs}$ and $\state'_{\action\obs}$.
    Since $\aoscc$ is an \adscc, there exists a history~$\history'$ from $S_{\action\obs}$ inside $\aoscc$ that leads back to $S$.
    Hence, the history $\history = \action\obs\history'$ is a history from $S$ to $S$ inside $\aoscc$.
    Therefore, $\history$ acts as a permutation on the belief support~$S$.
    This implies that there is $k \ge 1$ such that $\transitionIter(\state, \history^k) = \state$ and $\transitionIter(\state', \history^k) = \state'$.
    By construction, the history $\history'\history^{k-1}\hat{\history}$ distinguishes $\state_{\action\obs}$ and $\state'_{\action\obs}$.
\end{proof}
Distinguishability is preserved by containment of \adsccs.

\begin{lemma} \label{lem:distinguishability preserved}
    Let $\sccFunc$ and $\sccFunc'$ be two \adsccs such that $\sccFunc$ is contained in $\sccFunc'$.
    If $\sccFunc$ is distinguishing, then $\sccFunc'$ is distinguishing.
\end{lemma}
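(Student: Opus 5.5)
We take a belief support $\bsupp\in\dom{\sccFunc}$ and states $\state,\state'\in\bsupp$ that are $(\sccFunc,\bsupp)$-distinguishable; such a triple exists because $\sccFunc$ is distinguishing. We will show that the same $\state,\state'$ are $(\sccFunc',\bsupp)$-distinguishable. Since $\sccFunc$ is contained in $\sccFunc'$, we have $\bsupp\in\dom{\sccFunc}\subseteq\dom{\sccFunc'}$, so this triple then witnesses that $\sccFunc'$ is distinguishing.

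By \Cref{def:indistinguishable}, distinguishability of $\state,\state'$ for $(\sccFunc,\bsupp)$ yields an observable history $\history$ from $\bsupp$ staying in $\sccFunc$, together with an action $\action\in\sccFunc(\bsupp_\history)$ and an observation $\obs$, such that $\ptran(\obs\mid\state_\history,\action)\neq\ptran(\obs\mid\state'_\history,\action)$. Write $\history=\action_1\obs_1\dots\action_n\obs_n$. Staying in $\sccFunc$ means that each $\action_k$ lies in $\sccFunc$ of the current support $\transitionIter(\bsupp,\action_1\obs_1\dots\action_{k-1}\obs_{k-1})$, and this support lies in $\dom{\sccFunc}$.

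We check by induction on $k$ that $\history$ also stays in $\sccFunc'$. Each intermediate support lies in $\dom{\sccFunc}\subseteq\dom{\sccFunc'}$, and $\sccFunc(T)\subseteq\sccFunc'(T)$ for every $T\in\dom{\sccFunc}$. Hence each $\action_k$ belongs to $\sccFunc'$ of the corresponding support, and likewise $\action\in\sccFunc(\bsupp_\history)\subseteq\sccFunc'(\bsupp_\history)$.

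The states $\state_\history,\state'_\history$ and the support $\bsupp_\history$ are defined by $\transitionIter$ alone, so they do not depend on the SEC. The same history and action therefore violate the $(\sccFunc',\bsupp)$-indistinguishability condition, and $\state\not\equiv_{(\sccFunc',\bsupp)}\state'$. There is no real obstacle here: the only point requiring care is that the inclusion of domains and of action sets applies at every intermediate support along the history, which the induction handles.
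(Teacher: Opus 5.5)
Your proof is correct and follows essentially the same argument as the paper. A history witnessing $(\sccFunc,\bsupp)$-distinguishability stays inside $\sccFunc'$, because $\dom{\sccFunc}\subseteq\dom{\sccFunc'}$ and the action sets are included at every support along the way, so the same pair of states in $\bsupp\in\dom{\sccFunc'}$ witnesses that $\sccFunc'$ is distinguishing (your explicit checks of the final action and of the SEC-independence of $\transitionIter$ just spell out what the paper leaves implicit).
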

\begin{proof}
    Let $\bsupp$ be a belief support in the domain of $\sccFunc$ and $\state,\state'\in \bsupp$ be two states that are $(\sccFunc, \bsupp)$-distinguishable.
    Since~$\sccFunc$ is contained in $\sccFunc'$, we have that $\bsupp$ is also in the domain of~$\sccFunc'$.
    Moreover, any distinguishing history from $\bsupp$ inside $\sccFunc$ is also a distinguishing history from $\bsupp$ inside $\sccFunc'$, as all actions along the history are also available through function $\sccFunc'$.
\end{proof}

In the rest of \Cref{sec:ec}, we will study both distinguishing and non-distinguishing \adsccs.
In \Cref{sec:non-distinguishing}, we study non-distinguishing \adsccs and show that there is nothing to gain from staying in such \adsccs for a long time: the best strategy is to find the best action to exit them.
In \Cref{sec:distinguishing}, we study distinguishing \adsccs and show that there is something to gain from staying in them for a long time: by staying in the \adscc, we can at the limit distinguish in practice between states that are distinguishable.

We will frequently use the following strategy in order to ``explore'' an \adscc.
Let $\aoscc$ be an \adscc{} and $\bel{b}$ be an initial belief such that $\supp{\bel{b}}\in\dom{\sccFunc}$.
Let \emphdef{$\unif{\aoscc}$} be the strategy that tracks the current belief support $\bsupp$ (i.e., after observing some history $\history$ inside $\sccFunc$ from~$\bel{b}$ such that $\transitionIter(\supp{\bel{b}},\history) = \bsupp$) and chooses an action from $\sccFunc(\bsupp)$ uniformly at random.
We do not need to define $\unif{\aoscc}$ for $\history\in(\actions\times\sig)^*$ such that $\transitionIter(\supp{\bel{b}},\history) \notin\dom{\sccFunc}$ since such histories are never observed as long as we follow $\unif{\aoscc}$ starting from~$\bel{b}$.

A useful lemma is that, by restricting a (possibly distinguishing) \adscc to the supports reachable from an equivalence class of indistinguishable states inside the \adscc, we obtain a non-distinguishing \adscc.
\begin{lemma}\label{lem:ndist form scc}
    For $\state\in \bsupp$, let 
    \[
    \reach{\state, \bsupp}^{\sccFunc} = \setof{\transitionIter([\state]_{\indistinguishable},\history)}{\text{$\history$ is an observable history inside $\sccFunc$ from $\bsupp$}}
    \]
    be the set of supports reachable from the equivalence class of $\state$ inside $\sccFunc$ from $\bsupp$.
    Let $\sccFunc_{\state, \bsupp}$ be the function such that $\dom{\sccFunc_{\state, \bsupp}} = \reach{\state, \bsupp}^{\sccFunc}$ and, for all $\bsuppBis \in \reach{\state, \bsupp}^{\sccFunc}$,
    \[
    \sccFunc_{\state, \bsupp}(\bsuppBis) =
    \bigcup \set{\sccFunc(\bsupp') \mid \text{there is $\history$ inside $\sccFunc$ from $\bsupp$ s.t.\ $\bsupp' = \transitionIter(\bsupp, \history)\land \bsuppBis = \transitionIter([\state]_{\indistinguishable}, \history)$}} \ .
    \]
    The function $\sccFunc_{\state, \bsupp}$ is a non-distinguishing \adscc.
\end{lemma}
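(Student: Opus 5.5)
The plan is to check the three conditions of \Cref{def:sec} for $\sccFunc_{\state,\bsupp}$ and then show it is non-distinguishing. Throughout, write $C = [\state]_{\indistinguishable}$. A history $\history$ inside $\sccFunc$ from $\bsupp$ is said to \emph{realise} $\bsuppBis$ with \emph{carrier} $\bsupp'$ if $\transitionIter(C,\history)=\bsuppBis$ and $\transitionIter(\bsupp,\history)=\bsupp'$. Note that $\bsuppBis \subseteq \bsupp'$ always holds.

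The basic tool is a \emph{rerouting} observation. Let $\history_0$ be any history inside $\sccFunc$ from $\bsupp$. Condition~3 of \Cref{def:sec} for $\sccFunc$ gives a history $g$ inside $\sccFunc$ from $\transitionIter(\bsupp,\history_0)$ back to $\bsupp$. Then $\history_0 g$ acts as a permutation of $\bsupp$, by posterior-determinism and \Cref{lem:supp dec}. By both items of \Cref{lem:dist preserve}, this permutation maps $(\sccFunc,\bsupp)$-classes onto $(\sccFunc,\bsupp)$-classes. Choose $k\ge 1$ such that $(\history_0 g)^k$ is the identity on $\bsupp$. Then $\transitionIter(C, \history_0 g(\history_0 g)^{k-1}) = C$. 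Hence from any realisation of a support we can extend inside $\sccFunc$ so as to return to the pair $(C,\bsupp)$, and then follow any other realising history $\history$.

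The three conditions then go as follows.

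Condition~1 (non-emptiness). Each $\sccFunc_{\state,\bsupp}(\bsuppBis)$ contains some $\sccFunc(\bsupp')$, which is non-empty.

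Condition~2 (closure). Let $\action\in\sccFunc_{\state,\bsupp}(\bsuppBis)$. By definition there is a history $\history$ realising $\bsuppBis$ with carrier $\bsupp'$ such that $\action\in\sccFunc(\bsupp')$. Then $\history\action\obs$ is still inside $\sccFunc$ from $\bsupp$, so $\transitionIter(\bsuppBis,\action,\obs)=\transitionIter(C,\history\action\obs)$ lies in the domain whenever it is non-empty. The empty case is either vacuous or excluded: all states of $\bsuppBis$ are pairwise $(\sccFunc,\bsupp')$-indistinguishable by \Cref{lem:dist preserve}, so they emit $\obs$ under $\action$ with the same probability. Thus either every state of $\bsuppBis$ can produce $\obs$, or none can.

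Condition~3 (strong connectivity). Let $\bsuppBis_1,\bsuppBis_2$ be realised by $\history_1,\history_2$. Take the return history $g$ from $\transitionIter(\bsupp,\history_1)$ and the integer $k$ given by rerouting. The history $g(\history_1 g)^{k-1}\history_2$, read from $\bsuppBis_1$, leads to $\bsuppBis_2$. Every action along it lies in $\sccFunc_{\state,\bsupp}$ of the current support. This is because, at each prefix, the concatenation with $\history_1$ in front is a history inside $\sccFunc$ from $\bsupp$ that realises that support, and the action belongs to $\sccFunc$ of the corresponding carrier.

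Non-distinguishing is the step I expect to be the main obstacle. The difficulty is that an action in $\sccFunc_{\state,\bsupp}(\bsuppBis)$ may come from a carrier $\bsupp''$ different from the one along which we are currently tracking. The plan is as follows. Take any history inside $\sccFunc_{\state,\bsupp}$ from some $\bsuppBis$ in the domain, and look at a step where the current support is $\bsuppBis''$ and the action $\action$ taken lies in $\sccFunc(\bsupp'')$ for some realisation $\history''$ of $\bsuppBis''$ with carrier $\bsupp''$. By \Cref{lem:dist preserve}\eqref{item:indist preserve} applied along $\history''$, all states of $\bsuppBis''=\transitionIter(C,\history'')$ are images of states of $C$, so they are pairwise $(\sccFunc,\bsupp'')$-indistinguishable. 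Taking the empty history in \Cref{def:indistinguishable}, every $\obs$ then satisfies $\ptran(\obs\mid \state_1,\action)=\ptran(\obs\mid \state_2,\action)$ for all $\state_1,\state_2\in\bsuppBis''$. This is exactly the one-step requirement of $(\sccFunc_{\state,\bsupp},\bsuppBis)$-indistinguishability for the tracked images of any two states of $\bsuppBis$, since those images lie in the current support. Because it holds at every step of every history inside $\sccFunc_{\state,\bsupp}$, no pair of states is ever distinguished.

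The rerouting argument is what guarantees that the realisations $\history''$ used at different steps need not be consistent with one another. Only the current support matters, since indistinguishability holds uniformly over all its realisations. Consequently $\sccFunc_{\state,\bsupp}$ is a non-distinguishing \adscc.
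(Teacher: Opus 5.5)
Your proposal is correct and follows essentially the paper's proof: closure by extending a realising history, strong connectivity by returning to $S$ and using a power of the induced permutation of $S$ that is the identity, and non-distinguishability from the indistinguishability of the class of $q$. Your non-distinguishing step is more careful than the paper's one-line ``by construction''. The key point there is that, for every realisation of the current support, all its states have equal observation probabilities under every action of the corresponding carrier; this uniformity is all you need, and the rerouting argument you credit at the end plays no role in that step.
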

\begin{proof}
    We first show that $\sccFunc_{\state, \bsupp}$ is an \adscc.

    Let $\bsuppBis \in \reach{\state, \bsupp}^{\sccFunc}$ and $(\action,\obs)\in\actions\times\sig$ such that $\action \in \sccFunc_{\state, \bsupp}(\bsuppBis)$.
    We show that $\transition(\bsuppBis, \action,\obs) \in \reach{\state, \bsupp}^{\sccFunc}$.
    Since $\action \in \sccFunc_{\state, \bsupp}(\bsuppBis)$, by definition of $\sccFunc_{\state, \bsupp}$, there exists a belief support $\bsupp'$ in the domain of $\sccFunc$ and a history $\history$ inside $\sccFunc$ from $\bsupp$ such that $\bsupp' = \transitionIter(\bsupp, \history)$, $\bsuppBis = \transitionIter([\state]_{\indistinguishable}, \history)$, and $\action \in \sccFunc(\bsupp')$.
    Thus, $\transition(\bsuppBis, \action, \obs) = \transitionIter([\state]_{\indistinguishable}, \history\action\obs)$ is indeed an element of $\reach{\state, \bsupp}^{\sccFunc}$.

    We now show the strong connectivity property: for $\bsuppBis, \bsuppBis' \in \reach{\state, \bsupp}^{\sccFunc}$, we show the existence of an observable history inside $\sccFunc_{\state, \bsupp}$ from $\bsuppBis$ to $\bsuppBis'$.
    Let $\history, \history'$ be such that $\bsuppBis = \transitionIter([\state]_{\indistinguishable}, \history)$ and $\bsuppBis' = \transitionIter([\state]_{\indistinguishable}, \history')$.
    Let $\bsupp_\history = \transitionIter(\bsupp, \history)$.
    Since $\sccFunc$ is an \adscc, there exists an observable history $\hat{\history}$ inside $\sccFunc$ from $\bsupp_\history$ to $\bsupp$.
    Hence, the history $\history\hat{\history}$ is an observable history inside $\sccFunc$ from $\bsupp$ to $\bsupp$.
    It therefore acts as a permutation on the belief support $\bsupp$.
    Thus, there exists $k \ge 1$ such that $(\history\hat{\history})^k$ is the identity on $\bsupp$; in particular, $\transitionIter([\state]_{\indistinguishable}, (\history\hat{\history})^k) = [\state]_{\indistinguishable}$.
    Therefore, the history $\hat{\history}(\history\hat{\history})^{k-1}\history'$ is an observable history inside $\sccFunc_{\state, \bsupp}$ from $\bsuppBis$ to $\bsuppBis'$.

    Finally, observe that $\sccFunc_{\state, \bsupp}$ is non-distinguishing: by construction, all states in the equivalence class $[\state]_{\indistinguishable}$ are indistinguishable from each other inside $\sccFunc_{\state, \bsupp}$.
\end{proof}

\subsection{Non-distinguishing \adsccs{}} \label{sec:non-distinguishing}
We show here that, in non-distinguishing \adsccs{}, it is not possible to gain information about the current state while staying in the \adscc: the best move in such an \adscc{} is to find the best action to exit it.
Let $\pdp = \pdpFull$ be a \deterministic POMDP.

\begin{lemma}\label{lem:bel up aodscc}
    Let $\aoscc$ be an \adscc of $\pdp$.
    For every belief $\bel{b}$ such that $\supp{\bel{b}} \in \dom{\sccFunc}$, action $\action\in\sccFunc(\supp{\bel{b}})$, observation $\obs\in\sig$, and state $\state\in\supp{\bel{b}}$, we have
    \[
    \transition(\bel{b},\action,\obs)(\transition(\state,\action,\obs)) =
    \frac{\ptran(\obs \mid\state,\action)\cdot \bel{b}(\state)}{
    \ptran(\obs \mid \bel{b},\action)
    }\ .
    \]
\end{lemma}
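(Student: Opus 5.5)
The plan is to compute directly from the belief update formula~\eqref{eq:belief update}, using posterior-determinism together with the key fact that, inside an \adscc{}, the map $\transition(\cdot,\action,\obs)$ is injective on $\supp{\bel{b}}$.

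First, write $\bsupp = \supp{\bel{b}}$ and $\bsupp' = \transition(\bsupp,\action,\obs)$. By~\eqref{eq:belief update}, for any state $\state''$,
\[
\transition(\bel{b},\action,\obs)(\state'') = \frac{\sum_{\state'\in\bsupp}\bel{b}(\state')\cdot\ptran(\obs,\state''\mid\state',\action)}{\ptran(\obs\mid\bel{b},\action)} \ ,
\]
where the denominator is exactly $\ptran(\obs\mid\bel{b},\action)$ after summing over $\state''$. Since $\pdp$ is \deterministic{}, $\ptran(\obs,\state''\mid\state',\action)$ is nonzero only when $\state''=\transition(\state',\action,\obs)$, and in that case it equals the marginal $\ptran(\obs\mid\state',\action)$. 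Taking $\state''=\transition(\state,\action,\obs)$, the numerator therefore becomes $\sum\bel{b}(\state')\ptran(\obs\mid\state',\action)$, where the sum ranges over those $\state'\in\bsupp$ with $\transition(\state',\action,\obs)=\transition(\state,\action,\obs)$. The claim thus reduces to showing that $\state$ is the only such $\state'$ contributing a nonzero term.

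That reduction is the main step. I plan to use that $\bsupp\in\dom{\sccFunc}$ and $\action\in\sccFunc(\bsupp)$, so $\bsupp'\in\dom{\sccFunc}$ when nonempty, by condition~2 of \Cref{def:sec}. Then \Cref{lem:adscc same size} gives $|\bsupp'|=|\bsupp|$.

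Here there is a subtlety: states $\state'$ with $\ptran(\obs\mid\state',\action)=0$ have no successor and do not contribute to $\bsupp'$. So $\bsupp'$ is the image of the subset $\bsupp_\obs=\{\state'\in\bsupp : \ptran(\obs\mid\state',\action)>0\}$, giving $|\bsupp|=|\bsupp'|\le|\bsupp_\obs|\le|\bsupp|$. Hence every state of $\bsupp$ has a successor, and the map $\bsupp\to\bsupp'$ is a surjection between equal-size finite sets, hence injective. Consequently no $\state'\neq\state$ maps to $\transition(\state,\action,\obs)$, the sum collapses to the single term $\bel{b}(\state)\ptran(\obs\mid\state,\action)$, and the formula follows.

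Finally, for degenerate cases: if $\ptran(\obs\mid\bel{b},\action)=0$, the belief update is undefined, and I would note that the statement implicitly assumes the observation has positive probability. In that case $\bsupp'$ is nonempty and the argument above applies.
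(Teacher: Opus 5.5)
Your proof is correct and follows the same route as the paper: expand the belief update, use posterior-determinism so each state contributes only to its unique $(\action,\obs)$-successor, and use condition~2 of \Cref{def:sec} with \Cref{lem:adscc same size} to get injectivity of $\transition(\cdot,\action,\obs)$ on $\supp{\bel{b}}$, so the numerator collapses to the single term $\bel{b}(\state)\cdot\ptran(\obs \mid \state,\action)$. Your counting chain $|\bsupp| = |\bsupp'| \le |\bsupp_\obs| \le |\bsupp|$ makes explicit what the paper's ``by injectivity'' leaves implicit, namely that when $\ptran(\obs \mid \bel{b},\action) > 0$ every state of the support has an $(\action,\obs)$-successor, so $\transition(\state,\action,\obs)$ is defined.
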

\begin{proof}
    Let $\bel{b}$ be a belief such that $\supp{\bel{b}}\in \dom{\sccFunc}$, $\action\in \sccFunc(\supp{\bel{b}})$, and $\obs\in\sig$.
    Since all belief supports in $\sccFunc$ have the same cardinality (\Cref{lem:adscc same size}), we have by injectivity that for every $\state \neq \state'\in\supp{\bel{b}}$,
    \[
    \transition(\state,\action,\obs) \neq
    \transition(\state',\action,\obs) \ .
    \]
    Hence, for every $\state\in \supp{\bel{b}}$, the only way to reach $\transition(\state,\action,\obs)$ from $\bel{b}$ by taking action $\action$ and observing $\obs$ is to start from $\state$. Thus,
    \begin{align*}
    \transition(\bel{b},\action,\obs)(\transition(\state,\action,\obs))
    \quad
    &=\quad \ptran(\transition(\state,\action,\obs)\mid \bel{b},\action,\obs) \\
    &=\quad \frac{\ptran(\obs\mid \state,\action)\cdot\bel{b}(\state)}{\ptran(\obs\mid \bel{b},\action)} \ .
    \qedhere
    \end{align*} 
\end{proof}
\begin{lemma}\label{lem:non-dist reach}
Let $\ascc$ be a non-distinguishing \adscc of $\pdp$ and $\bel{b}$ be a belief such that $\supp{\bel{b}}\in \dom{\ascc}$, and $\history$ be a history inside $\ascc$ from $\bel{b}$.
Then,
\[
\transitionIter(\bel{b},\history) =
\sum_{\state\in\supp{\bel{b}}} \bel{b}(\state)\cdot \transitionIter(\state,\history) \ .
\]
\end{lemma}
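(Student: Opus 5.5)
Here \(\transitionIter(\state,\history)\) is read as a state, so the right-hand side is the belief that puts mass \(\bel{b}(\state)\) on that state. The plan is an induction on the length of \(\history\), with \Cref{lem:bel up aodscc} as the one-step update rule. The base case \(\history\) empty is immediate, since \(\transitionIter(\bel{b},\history)=\bel{b}=\sum_{\state}\bel{b}(\state)\cdot\state\).

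For the inductive step, write \(\history=\history'\action\obs\) and put \(\bel{c}=\transitionIter(\bel{b},\history')\). By the induction hypothesis, \(\bel{c}=\sum_{\state\in\supp{\bel{b}}}\bel{b}(\state)\cdot\transitionIter(\state,\history')\). Because \(\history\) is inside \(\sccFunc\), we have \(\supp{\bel{c}}=\transitionIter(\supp{\bel{b}},\history')\in\dom{\sccFunc}\) and \(\action\in\sccFunc(\supp{\bel{c}})\). By \Cref{lem:adscc same size} and \Cref{lem:supp dec}, the states \(\transitionIter(\state,\history')\) for \(\state\in\supp{\bel{b}}\) are pairwise distinct. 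Hence \(\bel{c}(\transitionIter(\state,\history'))=\bel{b}(\state)\), and the map \(\state\mapsto\transitionIter(\state,\history')\) is a bijection onto \(\supp{\bel{c}}\).

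Now apply \Cref{lem:bel up aodscc} to \(\bel{c}\). For each \(\state\), writing \(\state_{\history'}=\transitionIter(\state,\history')\), we get
\[
\transitionIter(\bel{b},\history)(\transitionIter(\state,\history))=\frac{\ptran(\obs\mid \state_{\history'},\action)\cdot\bel{b}(\state)}{\ptran(\obs\mid\bel{c},\action)}\ .
\]
Since \(\sccFunc\) is non-distinguishing, every pair of states in \(\supp{\bel{b}}\) is \((\sccFunc,\supp{\bel{b}})\)-indistinguishable. Applying \Cref{def:indistinguishable} with the history \(\history'\), which is inside \(\sccFunc\), and the action \(\action\in\sccFunc(\supp{\bel{c}})\), the quantity \(\ptran(\obs\mid\state_{\history'},\action)\) takes a common value \(p\) for all \(\state\in\supp{\bel{b}}\). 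Consequently,
\[
\ptran(\obs\mid\bel{c},\action)=\sum_{\state}\bel{b}(\state)\,p=p\ ,
\]
using \(\oneNorm{\bel{b}}=1\). For a sub-belief the denominator is instead \(p\oneNorm{\bel{b}}\); I would state the result for beliefs, where the ratio is exactly \(\bel{b}(\state)\). We may assume \(p>0\), since otherwise the history has probability zero. The factor \(p\) then cancels and \(\transitionIter(\bel{b},\history)\) assigns mass \(\bel{b}(\state)\) to \(\transitionIter(\state,\history)\).

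Finally, the states \(\transitionIter(\state,\history)\) are again pairwise distinct by the same cardinality argument, and they exhaust the support. This gives the claimed sum.

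The main point to handle carefully is the application of indistinguishability. It must be applied relative to the initial support \(\supp{\bel{b}}\), with the full prefix \(\history'\), and not relative to the current support. Definition \Cref{def:indistinguishable} quantifies over exactly such histories, so invoking it there makes the argument clean.
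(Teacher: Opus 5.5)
Your proof is correct and follows essentially the paper's argument: induction on the length of the history, with \Cref{lem:bel up aodscc} and non-distinguishability making $\ptran(\obs\mid\state_{\history'},\action)$ independent of $\state$, so the update preserves the masses; you compute the normaliser as $p$ directly, whereas the paper compares ratios and then renormalises. Your closing caveat is overstated but harmless: a non-distinguishing SEC has all states indistinguishable in every support of its domain, so applying the definition at the current support $\supp{\bel{c}}$ with the empty history works equally well.
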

\begin{proof}
We proceed by induction on the length of $\history$.
The property is clearly true when $\length{\history} = 0$.

Now we do the induction step.
Let $\history$ be a finite history inside $\ascc$.
Let $\bel{b}_\history = \transitionIter(\bel{b},\history)$.
For $\state\in\supp{\bel{b}}$, let $\state_\history = \transitionIter(\state,\history)$.
Let $\action\in \ascc(\supp{\bel{b}_\history})$ and $\obs\in\sig$.
Assume
\[
\bel{b}_\history =
\sum_{\state\in\supp{\bel{b}}} \bel{b}(\state)\cdot \state_\history \ .
\]
We want to show that
\[
\transition(\bel{b}_\history,\action,\obs) =
\sum_{\state\in\supp{\bel{b}}} \bel{b}(\state)\cdot \transition(\state_\history,\action,\obs) \ .
\]
Using \Cref{lem:bel up aodscc}, we get that for $\state,\state'\in\supp{\bel{b}}$
\[
\frac{\transition(\bel{b}_\history,\action,\obs)(\transition(\state_\history,\action,\obs))}
{\transition(\bel{b}_\history,\action,\obs)(\transition(\state'_\history,\action,\obs))}
=
\frac{\ptran(\obs \mid \state_\history,\action)}{\ptran(\obs \mid \state'_\history,\action)}
\cdot
\frac{\bel{b}_\history(\state_\history)}
{\bel{b}_\history(\state'_\history)} \ .
\]
Since $\ascc$ is non-distinguishing, $\ptran(\obs \mid \state_\history,\action) = \ptran(\obs \mid \state'_\history,\action)$,
which means
\[
\frac{\transition(\bel{b}_\history,\action,\obs)(\transition(\state_\history,\action,\obs))}
{\transition(\bel{b}_\history,\action,\obs)(\transition(\state'_\history,\action,\obs))}
=
\frac{\bel{b}_\history(\state_\history)}
{\bel{b}_\history(\state'_\history)}
\]
for every $\state,\state'\in \supp{\bel{b}}$.
Since additionally $\oneNorm{\bel{b}_\history} = \oneNorm{\transition(\bel{b}_\history,\action,\obs)} = 1$,
this implies for all $\state\in\supp{\bel{b}}$
\[
\transition(\bel{b}_\history,\action,\obs)(\transition(\state_\history,\action,\obs)) =
\bel{b}_\history(\state_\history) = \bel{b}(\state) \ .
\]
This finishes the induction step.
\end{proof}
We recall that $\reach{\sccFunc}(\bel{b})$ is the set of beliefs reachable inside $\sccFunc$ from $\bel{b}$ (\Cref{def:sec}).

\finBeliefSpace*
\begin{proof}
Follows from \Cref{lem:non-dist reach}, since it shows that a belief reachable from $\bel{b}$ inside $\sccFunc$ is the same as $\bel{b}$ up to a injective renaming of the states in $\supp{\bel{b}}$.
\end{proof}
\begin{lemma}\label{lem:reach all}
Let $\sccFunc$ be a non-distingushing \adscc{} and $\bel{b}$ be a belief such that $\supp{\bel{b}}$ is in the domain of $f$.
All beliefs $\bel{b}'\in\reach{\sccFunc}(\bel{b})$ are eventually reached with probability $1$ by following $\unif{\sccFunc}$ from $\bel{b}$.
\end{lemma}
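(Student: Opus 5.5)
The plan is to reduce the statement to a standard fact about finite Markov chains. By \Cref{cor:non dist fin reach}, the set $\reach{\sccFunc}(\bel{b})$ is finite. By \Cref{lem:non-dist reach}, every belief in it has the form $\sum_{\state\in\supp{\bel{b}}}\bel{b}(\state)\cdot\transitionIter(\state,\history)$ for a history $\history$ inside $\sccFunc$. Hence the belief reached after $\history$ is determined by the belief support $\transitionIter(\supp{\bel{b}},\history)$ together with the bijection $\state\mapsto\transitionIter(\state,\history)$ from $\supp{\bel{b}}$ onto that support. We therefore consider the finite set $X$ of pairs $(\bsupp,\pi)$ with $\bsupp\in\dom{\sccFunc}$ and $\pi$ such a bijection, restricted to those pairs reachable via histories inside $\sccFunc$. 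Bijectivity holds because all supports in $\dom{\sccFunc}$ have equal size (\Cref{lem:adscc same size}) and the automaton is deterministic.

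Under $\unif{\sccFunc}$ started from $\bel{b}$, the process of pairs $(\transitionIter(\supp{\bel{b}},\partialRun{n}),\,\state\mapsto\transitionIter(\state,\partialRun{n}))$ is a time-homogeneous Markov chain on $X$. The key observation is that, since $\sccFunc$ is non-distinguishing, the probability of observing $\obs$ after action $\action$ is the same from every state of the current support. So $\ptran(\obs\mid\bel{b}_\history,\action)=\ptran(\obs\mid\state_\history,\action)$ for any $\state_\history$ in the support, and this value depends only on the current pair, not on the full history. The transition probabilities of the chain are then $\frac{1}{|\sccFunc(\bsupp)|}\cdot\ptran(\obs\mid\state,\action)$ for any $\state\in\bsupp$, and they are bounded below by a positive constant on every edge.

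Next I show that $X$ is a single closed communicating class, i.e.\ the chain is irreducible. Closedness is immediate from condition~2 of \Cref{def:sec}. For strong connectivity I reuse the permutation argument from \Cref{lem:dist preserve} and \Cref{lem:ndist form scc}. Take $x=(\bsupp_\history,\pi_\history)$ and $y=(\bsupp_{\history'},\pi_{\history'})$. By condition~3 there is a history $\hat\history$ inside $\sccFunc$ from $\bsupp_\history$ back to $\supp{\bel{b}}$. Then $\history\hat\history$ induces a permutation of $\supp{\bel{b}}$, so $(\history\hat\history)^k$ is the identity for some $k\ge1$. Consequently $\hat\history(\history\hat\history)^{k-1}\history'$ leads from $x$ to $y$ with positive probability. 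A finite irreducible Markov chain visits every state almost surely, so every pair in $X$, and hence every belief in $\reach{\sccFunc}(\bel{b})$, is reached with probability~$1$.

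I expect the main obstacle to be the Markov property and the correspondence between pairs and beliefs. The probability of an observation must depend only on the current pair, and this is exactly where non-distinguishability enters. If one prefers to avoid Markov-chain vocabulary, the same conclusion follows from a direct argument. From any pair there is a path of length at most $|X|$ to the target pair with probability at least $\eta^{|X|}$, where $\eta>0$ is the minimal transition probability. The probability of avoiding the target for $m|X|$ steps is then at most $(1-\eta^{|X|})^m\to0$.
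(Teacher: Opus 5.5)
Your proof is correct. Your final paragraph is essentially the paper's argument transplanted to pairs: finitely many states, one-step probabilities under $\unif{\sccFunc}$ bounded below, hence a geometric tail on the hitting time. It differs from the paper in two respects.

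First, the paper does not lift to (support, bijection) pairs. It works directly on the finite set $\reach{\sccFunc}(\bel{b})$, taking $k=|\reach{\sccFunc}(\bel{b})|$. The lift is not needed for the Markov property. In any POMDP, the belief process under $\unif{\sccFunc}$ is already Markov: the update and $\ptran(\obs\mid\bel{c},\action)$ depend only on the current belief, and the strategy depends only on its support. The paper uses non-distinguishability only for finiteness (\Cref{cor:non dist fin reach}) and for the explicit bound $\ptran(\obs\mid\bel{c},\action)\ge\pmin\cdot\oneNorm{\bel{b}}$.

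Second, and more substantively, you prove explicitly, via the permutation trick, that each target is reachable from every reachable state. The paper's proof uses this tacitly when it says that from any $\bel{c}\in\reach{\sccFunc}(\bel{b})$ the belief $\bel{b}'$ is visited within $k$ steps with probability bounded below. The justification appears only afterwards, in \Cref{cor:reach non-dist scc}, by the same permutation argument, so there is no circularity, but the step is not argued where it is used. Your version is therefore more self-contained, while the paper's is shorter and avoids the auxiliary chain.

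One minor point: your identity $\ptran(\obs\mid\bel{b}_\history,\action)=\ptran(\obs\mid\state_\history,\action)$ assumes $\oneNorm{\bel{b}}=1$. That matches the statement. The paper carries the factor $\oneNorm{\bel{b}}$ because the lemma is later applied to sub-beliefs in \Cref{thm:nd adscc}, where your argument also goes through after normalising.
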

\begin{proof}
Let $x = \oneNorm{\bel{b}}$.
We start by observing that for every $\bel{c}\in\reach{\sccFunc}(\bel{b})$ we have $\oneNorm{\bel{c}} = x$.
Therefore, for all $\bel{c}\in \reach{\sccFunc}(\bel{b})$, actions $\action\in\actions$, and observations $\obs\in\sig$ such that $\ptran( \obs \mid \bel{c}, \action) > 0$, since $\obs$ can be observed from every state in $\supp{\bel{c}}$ by taking action $\action$ (as $\sccFunc$ is non-distinguishing), we have
\[
\ptran( \obs \mid \bel{c}, \action) \geq \pmin\cdot x \ .
\]
Let $k = |\reach{f}(\bel{b})|$.
Starting from any $\bel{c}$ in $\reach{\sccFunc}(\bel{b})$,
the probability of visiting $\bel{b}'$ within $k$ steps by following the uniform strategy is at least
\[
\left(\frac{\pmin\cdot x}{|\actions|}\right)^k \ .
\]
Therefore, the probability of not visiting $\bel{b}'$ in less than $n\cdot k$ steps by following the uniform strategy from $\bel{b}$ is at most
\[
\left(1 - \left(\frac{\pmin\cdot x}{|\actions|}\right)^k\right)^n \ .
\]
This goes to $0$ when $n$ goes to infinity.
Therefore, we almost surely reach $\bel{b}'$ by following $\unif{\sccFunc}$ from $\bel{b}$.
\end{proof}
\begin{corollary}\label{cor:reach non-dist scc}
    For every non-distinguishing \adscc $\sccFunc$, belief $\bel{b}\in \dom{\sccFunc}$ and $\bel{b}' \in \reach{\sccFunc}(\bel{b})$,
    there exists a history $\history'$ inside $\sccFunc$ from $\bel{b}'$ such that
    \[
    \transitionIter(\bel{b}',\history') = \bel{b} \ .
    \]   
\end{corollary}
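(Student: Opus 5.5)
The corollary follows directly from \Cref{lem:reach all}. Fix a non-distinguishing \adscc{} $\sccFunc$, a belief $\bel{b}$ with $\supp{\bel{b}}\in\dom{\sccFunc}$, and some $\bel{b}'\in\reach{\sccFunc}(\bel{b})$.

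First, I note that $\supp{\bel{b}'}\in\dom{\sccFunc}$. By condition (2) of \Cref{def:sec}, every update along a history inside $\sccFunc$ stays in $\dom{\sccFunc}$. Second, I note that $\bel{b}\in\reach{\sccFunc}(\bel{b}')$ is exactly the claim. So the task is to show that $\bel{b}$ is reachable inside $\sccFunc$ from $\bel{b}'$.

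The idea is to start from $\bel{b}'$ and invoke \Cref{lem:reach all} with $\bel{b}'$ as the initial belief. That lemma says every belief in $\reach{\sccFunc}(\bel{b}')$ is reached almost surely under $\unif{\sccFunc}$. By itself this does not show $\bel{b}$ belongs to that set, so I would use the structure from \Cref{lem:non-dist reach}.

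Write $\bel{b}'=\transitionIter(\bel{b},\history)$ for some $\history$ inside $\sccFunc$. By \Cref{lem:non-dist reach}, $\bel{b}'=\sum_{\state}\bel{b}(\state)\cdot\transitionIter(\state,\history)$, so $\history$ acts as an injective renaming of states that preserves weights.

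Since $\sccFunc$ is an \adscc{}, condition (3) gives a history $\hat\history$ inside $\sccFunc$ from $\supp{\bel{b}'}$ back to $\supp{\bel{b}}$. The concatenation $\history\hat\history$ is a history inside $\sccFunc$ from $\supp{\bel{b}}$ to itself. As in the proofs of \Cref{lem:dist preserve} and \Cref{lem:ndist form scc}, it therefore induces a permutation $\pi$ of $\supp{\bel{b}}$. Choose $k\ge1$ with $\pi^k=\mathrm{id}$, and set $\history'=\hat\history(\history\hat\history)^{k-1}$. This history is inside $\sccFunc$ from $\supp{\bel{b}'}$, since insideness depends only on the sequence of supports, which is the same as along $\history\hat\history$ repeated.

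Applying \Cref{lem:non-dist reach} to the full history $\history\history'=(\history\hat\history)^k$ from $\bel{b}$ gives
\[
\transitionIter(\bel{b}',\history')=\transitionIter(\bel{b},(\history\hat\history)^k)=\sum_{\state}\bel{b}(\state)\cdot\pi^k(\state)=\bel{b}.
\]

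The only delicate step is that each action and observation in $\history'$ must be possible, i.e.\ have positive probability, from the current belief. This holds because $\sccFunc$ is non-distinguishing: any observation possible from one state of the support under an action of $\sccFunc$ is possible from all of them with the same probability. This is the same fact used in the proof of \Cref{lem:reach all}. Consequently the support-level history transfers to the belief level without the belief becoming undefined.
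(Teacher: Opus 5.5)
Your proof is correct and is essentially the paper's argument. You use SEC strong connectivity to get a return history $\hat\history$ on supports, note that the round trip $\history\hat\history$ permutes $\supp{\bel{b}}$, raise it to a power that is the identity, and conclude with \Cref{lem:non-dist reach} applied to $\hat\history(\history\hat\history)^{k-1}$, exactly as the paper does. The opening appeal to \Cref{lem:reach all} is an unneeded detour (you rightly drop it), and your closing well-definedness remark is fine, though it already follows from the supports along the way being non-empty.
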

\begin{proof}
    Let $\history$ be a history inside $\sccFunc$ from $\bel{b}$ such that $\transitionIter(\bel{b},\history) = \bel{b}'$.
    Since $\sccFunc$ is an \adscc,
    there must exist a history $\overline{\history}$ inside $\sccFunc$ from $\bel{b}'$ such that $\transitionIter(\supp{\bel{b}'},\overline{\history}) = \supp{\bel{b}}$.
    Thus, $\transitionIter(\supp{\bel{b}},\history \overline{\history}) = \supp{\bel{b}}$.
    This means that the function $\state\in\supp{\bel{b}} \mapsto \transitionIter(\state,\history \overline{\history})$ acts as a permutation on the finite set $\supp{\bel{b}}$.
    Hence, there exists $k \ge 1$ such that
    \[
    \transitionIter(\state,(\history \overline{\history})^{k}) = \state
    \]
    for every $\state\in\supp{\bel{b}}$.
    Now we can finish the proof by taking $\history' = \overline{\history} (\history \overline{\history})^{k - 1}$ and using \Cref{lem:non-dist reach}.
\end{proof}
\begin{lemma}\label{lem:val in non-dist}
    For every non-distinguishing \adscc $\sccFunc$ of $\pdp$, belief~$\bel{b}$ such that $\supp{\bel{b}}\in \dom{\sccFunc}$, and $\bel{b}' \in \reach{\sccFunc}(\bel{b})$,
    \[\preach(\bel{b}') = \preach(\bel{b})\ .\]
\end{lemma}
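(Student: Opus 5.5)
By symmetry, it suffices to prove $\preach(\bel{b}) \ge \preach(\bel{b}')$ for every $\bel{b}' \in \reach{\sccFunc}(\bel{b})$. Indeed, \Cref{cor:reach non-dist scc} gives a history inside $\sccFunc$ leading from $\bel{b}'$ back to $\bel{b}$, so $\bel{b} \in \reach{\sccFunc}(\bel{b}')$. Applying the same inequality with the roles of $\bel{b}$ and $\bel{b}'$ swapped then yields equality.

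To show $\preach(\bel{b}) \ge \preach(\bel{b}')$, fix $\epsilon > 0$ and an $\epsilon$-optimal strategy $\strategy'$ from $\bel{b}'$. From $\bel{b}$, define a strategy $\strategy$ in two phases.
\begin{enumerate}
\item Play $\unif{\sccFunc}$. Since the strategy can compute the current belief from the observable history, it tracks that belief and stops at the first time it equals $\bel{b}'$.
\item From that moment on, play $\strategy'$ on the suffix of the observable history.
\end{enumerate}
By \Cref{lem:reach all}, the stopping time $\tau$ at which the belief first equals $\bel{b}'$ is almost surely finite under $\unif{\sccFunc}$ from $\bel{b}$.

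During the first phase, the process stays inside $\sccFunc$. By \Cref{rmk:trivial secs} and the assumptions of \Cref{lem:single target}, no support in $\dom{\sccFunc}$ contains $\target$, unless $\sccFunc$ is the trivial SEC of $\target$, in which case the claim is immediate. So no target mass is lost or gained before $\tau$. Conditioning on the observable history $\history$ up to $\tau$, the conditional distribution of the current state is exactly the belief $\transitionIter(\bel{b},\history) = \bel{b}'$. Hence, conditioned on each such stopping history, the probability of reaching $\target$ under the second phase equals $\pdist{\bel{b}'}{\strategy'}(\reach{\target})$. Summing over the countably many disjoint stopping histories, whose total probability is $1$, gives
\[
\pdist{\bel{b}}{\strategy}(\reach{\target}) \ge \preach(\bel{b}') - \epsilon .
\]
Letting $\epsilon \to 0$ gives the claim.

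The main obstacle is making the conditioning step rigorous: the measure of runs extending a history $\history$ and then following $\strategy'$ must factor as $\pdist{\bel{b}}{\strategy}(\cyl{\history})$ times the measure from the updated belief. I would handle this by a standard cylinder computation showing that $\pdist{\bel{b}}{\strategy}$ restricted to runs with observable prefix $\history$, normalised, coincides on cylinders with $\pdist{\transitionIter(\bel{b},\history)}{\strategy'}$ shifted by $\history$. This follows from the inductive definition of $\pdist{\cdot}{\cdot}$ and the belief update formula~\eqref{eq:belief update}, and extends to all measurable sets by uniqueness in the Ionescu-Tulcea theorem.
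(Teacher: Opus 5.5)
Your proof is correct, but it takes a different route from the paper.

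\textbf{The paper's route.} The paper never builds a strategy. It uses finiteness of $\reach{\sccFunc}(\bel{b})$ (\Cref{cor:non dist fin reach}) to pick beliefs $\bel{b}_{\min}$ and $\bel{b}_{\max}$ of minimal and maximal value in that set. \Cref{cor:reach non-dist scc} then gives a history of some length $n$ inside $\sccFunc$ from $\bel{b}_{\min}$ to $\bel{b}_{\max}$. Finally it iterates the one-step inequality coming from \eqref{eq:val bel upd} under $\unif{\sccFunc}$ for $n$ steps. So $\preach(\bel{b}_{\min})$ is at least the weighted average of the values of the beliefs reached after $n$ steps. All of these values are at least $\preach(\bel{b}_{\min})$, and $\bel{b}_{\max}$ occurs with positive probability, which forces $\preach(\bel{b}_{\max}) = \preach(\bel{b}_{\min})$.

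\textbf{What each route buys.} The paper's argument needs only positive-probability reachability of $\bel{b}_{\max}$ within a fixed number of steps. It stays entirely at the level of the value function, with no stopping times and no concatenation of strategies. Your ``walk uniformly until the belief equals $\bel{b}'$, then play $\epsilon$-optimally from $\bel{b}'$'' argument needs more. It requires the almost-sure hitting property of \Cref{lem:reach all}; positive probability $p$ alone would only give $\preach(\bel{b}) \geq p \cdot \preach(\bel{b}')$. It also requires a strong-Markov-type factorisation of the measure at the stopping time, which you correctly identify and sketch via cylinders and uniqueness of the extension. In exchange, your argument is more operational and yields an explicit near-optimal strategy from $\bel{b}$. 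It uses finiteness of $\reach{\sccFunc}(\bel{b})$ only indirectly, through \Cref{lem:reach all}.

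\textbf{A small simplification.} Your discussion of $\target$ not being visited before $\tau$ is unnecessary. Runs that reach $\target$ after time $\tau$ already form a subset of $\reach{\target}$, and that is all the inequality needs.
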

\begin{proof}
We prove that all the beliefs inside $\reach{\sccFunc}(\bel{b})$ have the same value.

\Cref{cor:non dist fin reach} says that $\reach{\sccFunc}(\bel{b})$ is finite, so let
\[
    \bel{b}_{\min} = \argmin_{\bel{c}\in \reach{\sccFunc}(\bel{b})} \preach(\bel{c})
    \quad\text{and}\quad
    \bel{b}_{\max} = \argmax_{\bel{c}\in \reach{\sccFunc}(\bel{b})} \preach(\bel{c}) \ .
\]
It suffices to show that $\preach(\bel{b}_{\min}) = \preach(\bel{b}_{\max})$.

Let $\history_{\min}$ (resp.\ $\history_{\max}$) be a history inside $\sccFunc$ from $\bel{b}$ such that $\transitionIter(\bel{b},\history_{\min}) = \bel{b}_{\min}$ (resp.\ $\transitionIter(\bel{b},\history_{\max}) = \bel{b}_{\max}$).
By \Cref{cor:reach non-dist scc}, there exists a history $\history_{\min}'$ inside $\sccFunc$ from $\bel{b}_{\min}$ such that
\[
\transitionIter(\bel{b}_{\min},\history_{\min}') = \bel{b} \ .
\]
Hence, the history $\history = \history_{\min}' \history_{\max}$ inside $\sccFunc$ from $\bel{b}_{\min}$ is such that
\[\transitionIter(\bel{b}_{\min},\history) = \bel{b}_{\max} \ .\]

Let $n = \length{\history}$.
Let $\reach{\sccFunc}^n(\bel{b}_{\min})$ be the set of beliefs reachable from $\bel{b}_{\min}$ in exactly $n$ steps while staying inside~$\sccFunc$; in particular, $\bel{b}_{\max}\in \reach{\sccFunc}^n(\bel{b}_{\min})$.
For $\bel{c}\in \reach{\sccFunc}^n(\bel{b}_{\min})$, let $p_{\bel{c}}$ be the probability of reaching $\bel{c}$ in exactly $n$ steps from $\bel{b}_{\min}$ following $\unif{\sccFunc}$.

We have
\begin{align*}
\preach(\bel{b}_{\min})
\quad &\geq\quad
\sum_{\bel{c} \in \reach{\sccFunc}^n(\bel{b}_{\min})} p_{\bel{c}}\cdot \preach(\bel{c}) \\
&=\quad
p_{\bel{b}_{\max}}\cdot\preach(\bel{b}_{\max}) + 
\sum_{\bel{c} \in \reach{\sccFunc}^n(\bel{b}_{\min})\setminus\set{\bel{b}_{\max}}} p_{\bel{c}}\cdot \preach(\bel{c}) \\
&\ge\quad
p_{\bel{b}_{\max}}\cdot\preach(\bel{b}_{\min}) + 
\sum_{\bel{c} \in \reach{\sccFunc}^n(\bel{b}_{\min})\setminus\set{\bel{b}_{\max}}} p_{\bel{c}}\cdot \preach(\bel{b}_{\min}) \\
&\geq\quad \preach(\bel{b}_{\min}).
\end{align*}
Since $\sum_{\bel{c} \in \reach{\sccFunc}^n(\bel{b}_{\min})} p_{\bel{c}} = 1$ and $p_{\bel{b}_{\max}} > 0$, this implies that $\preach(\bel{b}_{\min}) = \preach(\bel{b}_{\max})$.
\end{proof}

We can now prove \Cref{thm:nd adscc}, the main result about non-distinguishing \adsccs.
\ndSECs*
\begin{proof}
    Since $\sccFunc$ is non-bottom, $\dom{\sccFunc}$ contains no support that includes $\target$ or $\bad$ (\Cref{rmk:trivial secs}).
    For a belief $\bel{b}'$ such that $\supp{\bel{b}'}\in \dom{\sccFunc}$, we call an \emph{exiting action} any action $\action\notin \sccFunc(\supp{\bel{b}'})$.
    Since $\sccFunc$ is non-bottom, there exists at least one exiting action from some belief in $\reach{\sccFunc}(\bel{b})$. 

    Recall from~\eqref{eq:val bel upd} that $\preach(\bel{b}) = \max_{\action\in\actions} \preach(\bel{b},\action)$.
    Since all beliefs in $\reach{\sccFunc}(\bel{b})$ have the same value by \Cref{lem:val in non-dist}, and all beliefs $\bel{b}'\in\reach{\sccFunc}(\bel{b})$ can be eventually reached with probability $1$ by following $\unif{\sccFunc}$ from $\bel{b}$ (\Cref{lem:reach all}), we even have
    \[
    \preach(\bel{b}) =
    \max_{\bel{b}'\in\reach{\sccFunc}(\bel{b})}
    \max_{\action\in\actions}
    \preach(\bel{b}',\action) \ .
    \]

    Since exiting the \adscc{} is the only way to reach $\target$ from beliefs in $\reach{\sccFunc}(\bel{b})$, there must be an exiting action that preserves the value, which concludes the proof:
    \[
    \preach(\bel{b}) =
    \max_{\bel{b}'\in\reach{\sccFunc}(\bel{b})}
    \max_{\action\notin \sccFunc(\supp{\bel{b}'})}
    \preach(\bel{b}',\action) \ .
    \qedhere
    \]
\end{proof}
\subsection{The posterior probability of starting in a given state} \label{subsec:aPosteriori}

Let $\pdp = \pdpFull$ be a POMDP.\footnote{Results of \Cref{subsec:aPosteriori} hold for general POMDPs, not only \deterministic ones.
However, we will only use them in the \deterministic case.}
For every $n\in\N$, we define a \sigal{} \emphdef{$\filtr{n}$} on $\Runs(\pdp)$ (which, we recall, is a subset of $(\states\times\ac\times\sig)^\omega$) that corresponds to the information available to the agent after the first $n$ action-observation pairs.
We extend the notion of cylinders to \emph{observable} histories: for an observable history $\history \in (\ac \times \sig)^*$, the \emphdef{cylinder} $\cyl{\history}$ is the set of runs in $\Runs(\pdp)$ whose projection to $(\ac \times \sig)^{\length{\history}}$ matches $\history$.
Formally, for $n\in\N$, we define
\[
\filtr{n} = \setof{\cyl{\history}}{\text{$\history$ is observable and $\length{\history} = n$}} \ .
\]
The sequence $\filtr{0} \subseteq \filtr{1} \subseteq \filtr{2} \subseteq \dots$ defines a \emph{filtration}.

Let $\bel{b}$ be an initial belief, and $\strategy$ be a strategy.
For every $q\in\supp{\bel{b}}$, define a sequence of random variables \emphdef{$(\randv{q}{n})_{n\in\N}$} that track the probability to have started in $q$ given the first $n$ actions and observations, in addition to the initial belief $\bel{b}$.

Formally, consider the event $\cyl{q} \in \filtr{}$ (i.e., ``the initial state is $q$'').
For $n\in\N$, we define the random variable $\randv{q}{n}\colon (\ac\times\sig)^\omega \to [0,1]$ as
\[
\randv{\state}{n}(\run) = \pdist{\bel{b}}{\strategy}(\cyl{\state} \mid \partialRun{n}) \ ,
\]
representing the a posteriori probability to have started in $\state$ given that we have observed the first $n$ actions and observations of run~$\run$.
In other words,
\[
\randv{\state}{n} = \pdist{\bel{b}}{\strategy}(\cyl{\state} \mid \filtr{n}) \ .
\]
In particular, $\randv{\state}{n}$ is $\filtr{n}$-measurable for every $n\in\N$.
Notice that $\randv{\state}{0} = \bel{b}(\state)$.
This sequence of random variables is a \emph{martingale}~\cite[Section~4.2]{Durrett19} for all possible initial states~$\state$.
Observe that this value can be computed inductively using Bayes' rule, and depends on the initial belief $\bel{b}$ but not on the strategy $\strategy$ (as long as $\strategy$ takes the actions along the history with non-zero probability).

\begin{lemma}\label{lem:is mart}
For every $\state\in\supp{\bel{b}}$, the sequence $(\randv{\state}{n})_{n\in\N}$ defines a martingale with respect to the filtration $(\filtr{n})_{n\in\N}$.
\end{lemma}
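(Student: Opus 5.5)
This is the standard Doob (Lévy) martingale argument. We will check the three defining properties of a martingale directly from the definition $\randv{\state}{n} = \pdist{\bel{b}}{\strategy}(\cyl{\state} \mid \filtr{n})$, where $\filtr{n}$ is the \sigal{} generated by the cylinders of observable histories of length $n$.

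\emph{Adaptedness and integrability.} Each $\filtr{n}$ is generated by the countable (indeed finite) partition of $\Runs(\pdp)$ into cylinders $\cyl{\history}$ with $\history \in (\ac\times\sig)^n$. Hence the conditional probability is explicitly
\[
\randv{\state}{n}(\run) = \frac{\pdist{\bel{b}}{\strategy}(\cyl{\state}\cap\cyl{\partialRun{n}})}{\pdist{\bel{b}}{\strategy}(\cyl{\partialRun{n}})}
\]
on cylinders of positive measure. On null cylinders we fix any value, say $\bel{b}(\state)$; this choice is irrelevant almost surely. This function is constant on each atom of $\filtr{n}$, so it is $\filtr{n}$-measurable. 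Since it takes values in $[0,1]$, it is bounded and hence integrable. The inclusion $\filtr{n}\subseteq\filtr{n+1}$ holds because every length-$n$ cylinder is the finite disjoint union of the length-$(n+1)$ cylinders $\cyl{\history\action\obs}$ extending it.

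\emph{Martingale property.} We must show $\expect{\bel{b}}{\strategy}{\randv{\state}{n+1} \mid \filtr{n}} = \randv{\state}{n}$. The cleanest route is the tower property of conditional expectation: since $\filtr{n}\subseteq\filtr{n+1}$,
\[
\expect{\bel{b}}{\strategy}{\expect{\bel{b}}{\strategy}{\mathbbm{1}_{\cyl{\state}}\mid\filtr{n+1}}\mid\filtr{n}} = \expect{\bel{b}}{\strategy}{\mathbbm{1}_{\cyl{\state}}\mid\filtr{n}}.
\]
To keep things elementary, we can instead verify this on each atom $\cyl{\history}$ with $\length{\history}=n$ and positive probability. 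The conditional expectation of $\randv{\state}{n+1}$ given $\cyl{\history}$ equals
\[
\sum_{\action,\obs} \frac{\pdist{\bel{b}}{\strategy}(\cyl{\history\action\obs})}{\pdist{\bel{b}}{\strategy}(\cyl{\history})}\cdot \frac{\pdist{\bel{b}}{\strategy}(\cyl{\state}\cap\cyl{\history\action\obs})}{\pdist{\bel{b}}{\strategy}(\cyl{\history\action\obs})},
\]
summing over extensions of positive probability. The extensions of zero probability also contribute zero to the numerator, so the terms telescope to $\pdist{\bel{b}}{\strategy}(\cyl{\state}\cap\cyl{\history})/\pdist{\bel{b}}{\strategy}(\cyl{\history}) = \randv{\state}{n}$ on $\cyl{\history}$, using countable additivity over the disjoint extensions.

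The only mildly delicate step is the bookkeeping of null cylinders and of the conditional expectation being defined only almost surely. We would handle it by working with the explicit atom-wise formula above and noting that null atoms do not affect either side almost surely.
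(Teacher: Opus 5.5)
Your proof is correct and takes essentially the same approach as the paper. Integrability comes from boundedness in $[0,1]$, adaptedness holds by construction, and the martingale property is exactly the tower property $\expect{\bel{b}}{\strategy}{\expect{\bel{b}}{\strategy}{\ind{\cyl{\state}} \mid \filtr{n+1}} \mid \filtr{n}} = \expect{\bel{b}}{\strategy}{\ind{\cyl{\state}} \mid \filtr{n}}$; your extra atom-wise computation is a valid elementary check but is not needed (and finite additivity over the finitely many extensions $\history\action\obs$ already suffices there).
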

\begin{proof}
    Let $\state\in\supp{\bel{b}}$.
    First of all, for every $n\in\N$, $\randv{\state}{n}$ is $\filtr{n}$-measurable by definition.
    Moreover, for every $n\in\N$, it holds that $0 \leq \randv{\state}{n} \leq 1$.
    Hence, $\expect{\bel{b}}{\strategy}{|\randv{\state}{n}|} \leq 1 < \infty$ for every $n\in\N$.

    To show that $(\randv{\state}{n})_{n\in\N}$ is a martingale with respect to the filtration $(\filtr{n})_{n\in\N}$,
    it remains to show that for every $n\in\N$,
    $
    \expect{\bel{b}}{\strategy}{\randv{\state}{n+1} \mid \filtr{n}} =
    \randv{\state}{n}
    $.

    By definition, for every $n\in\N$,
    \[
    \randv{\state}{n} = \pdist{\bel{b}}{\strategy}(\cyl{\state} \mid \filtr{n}) = \expect{\bel{b}}{\strategy}{\ind{\cyl{\state}} \mid \filtr{n}} \ .
    \]
    Hence, for every $n\in\N$,
    \begin{align*}
    \expect{\bel{b}}{\strategy}{\randv{\state}{n+1} \mid \filtr{n}}
    &= \expect{\bel{b}}{\strategy}{\expect{\bel{b}}{\strategy}{\ind{\cyl{\state}} \mid \filtr{n+1}} \mid \filtr{n}} \\
    &= \expect{\bel{b}}{\strategy}{\ind{\cyl{\state}} \mid \filtr{n}} \text{ by the law of iterated expectations}\\
    &= \randv{\state}{n} \ .
    \qedhere
    \end{align*}
\end{proof}

The following is an obvious but useful corollary of \Cref{lem:is mart}.
\begin{corollary}\label{cor:exp leq b(q)}
$\expect{\bel{b}}{\strategy}{\randv{\state}{n}} = \bel{b}(\state) \leq 1$ for all $\state\in\supp{\bel{b}}$ and $n\in\N$.
\end{corollary}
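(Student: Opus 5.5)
The plan is to derive this directly from \Cref{lem:is mart}, using that a martingale has constant expectation. Fix $\state\in\supp{\bel{b}}$ and a strategy $\strategy$. The martingale property gives $\expect{\bel{b}}{\strategy}{\randv{\state}{n+1} \mid \filtr{n}} = \randv{\state}{n}$ for every $n$. Taking expectations on both sides and applying the tower property (the law of total expectation) yields
\[
\expect{\bel{b}}{\strategy}{\randv{\state}{n+1}} = \expect{\bel{b}}{\strategy}{\randv{\state}{n}} .
\]
A straightforward induction on $n$ then gives $\expect{\bel{b}}{\strategy}{\randv{\state}{n}} = \expect{\bel{b}}{\strategy}{\randv{\state}{0}}$ for all $n\in\N$.

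It remains to identify the base case. The $\sigma$-algebra $\filtr{0}$ is trivial: it is generated by the cylinder of the empty observable history, which is the whole of $\Runs(\pdp)$. Hence $\randv{\state}{0}$ is the constant $\pdist{\bel{b}}{\strategy}(\cyl{\state})$, and by the inductive definition of the measure on single-state histories this equals $\bel{b}(\state)$. This is exactly the remark made just before \Cref{lem:is mart}. Alternatively, one can skip the induction and apply the tower property once: $\expect{\bel{b}}{\strategy}{\randv{\state}{n}} = \expect{\bel{b}}{\strategy}{\expect{\bel{b}}{\strategy}{\ind{\cyl{\state}} \mid \filtr{n}}} = \pdist{\bel{b}}{\strategy}(\cyl{\state}) = \bel{b}(\state)$.

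Finally, $\bel{b}(\state)\le \oneNorm{\bel{b}}\le 1$ holds because $\bel{b}$ is a (sub-)belief. There is no real obstacle here. The only point needing care is that $\filtr{0}$ is trivial, so that the conditional expectation at time $0$ is a constant and equals the unconditional probability $\bel{b}(\state)$.
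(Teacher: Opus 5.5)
Your proof is correct and is the argument the paper intends: it states the corollary as an immediate consequence of \Cref{lem:is mart} (a martingale has constant expectation), together with the remark $\randv{\state}{0} = \bel{b}(\state)$, which holds because $\filtr{0}$ is trivial and $\bel{b}$ is a genuine belief (total mass $1$) in that subsection. Your one-step tower-property alternative is the same reasoning the paper uses inside the proof of \Cref{lem:is mart}.
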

Using Doob's convergence theorem~\cite[Theorem 4.6.7]{Durrett19}, given that the martingales $(\randv{\state}{n})_{n\in\N}$ are bounded and thus uniformly integrable, we get the following.
\begin{corollary}\label{cor:mart convg}
For every $\state\in\supp{\bel{b}}$,
there exists a random variable \emphdef{$\randv{\state}{\infty}$} such that the sequence $(\randv{\state}{n})_{n\in\N}$ converges almost surely and in $L^1$ to $\randv{\state}{\infty}$.
\end{corollary}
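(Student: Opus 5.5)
This corollary is a direct application of Doob's martingale convergence theorem to the martingales of \Cref{lem:is mart}. The plan is to verify the hypotheses and then upgrade almost-sure convergence to $L^1$ convergence.

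Fix $\state\in\supp{\bel{b}}$. By \Cref{lem:is mart}, $(\randv{\state}{n})_{n\in\N}$ is a martingale with respect to $(\filtr{n})_{n\in\N}$ under $\pdist{\bel{b}}{\strategy}$. Since each $\randv{\state}{n}$ is a conditional probability, it takes values in $[0,1]$. Hence $\sup_n \expect{\bel{b}}{\strategy}{|\randv{\state}{n}|}\le 1<\infty$, which is exactly the $L^1$-boundedness hypothesis of Doob's forward convergence theorem. That theorem yields an integrable random variable $\randv{\state}{\infty}$ with $\randv{\state}{n}\to\randv{\state}{\infty}$ almost surely. Redefining it on a null set if needed, we may assume $\randv{\state}{\infty}$ takes values in $[0,1]$.

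For $L^1$ convergence, the simplest route is the dominated convergence theorem. We have $|\randv{\state}{n}-\randv{\state}{\infty}|\le 2$ and this quantity tends to $0$ almost surely, so its expectation tends to $0$. Alternatively, uniform boundedness implies uniform integrability, and the $L^1$ form of Doob's theorem applies directly.

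There is also a more concrete identification of the limit. We have $\randv{\state}{n}=\expect{\bel{b}}{\strategy}{\ind{\cyl{\state}}\mid\filtr{n}}$, so L\'evy's upward theorem gives $\randv{\state}{\infty}=\expect{\bel{b}}{\strategy}{\ind{\cyl{\state}}\mid\filtr{\infty}}$, where $\filtr{\infty}=\sigma(\bigcup_n\filtr{n})$. This yields both modes of convergence at once and may be useful later. No step is a genuine obstacle. The only technical point is that the $\filtr{n}$ as literally written are sets of cylinders, so I would read them as the $\sigma$-algebras those cylinders generate, ensuring we have a genuine filtration.
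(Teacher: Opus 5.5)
Your proposal is correct and takes essentially the paper's approach: the paper likewise notes that the martingales of \Cref{lem:is mart} are bounded, hence uniformly integrable, and applies Doob's convergence theorem to obtain almost-sure and $L^1$ convergence. Your dominated-convergence upgrade and the L\'evy upward identification of the limit are valid minor variants of the same argument.
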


\subsection{Distinguishing \adsccs} \label{sec:distinguishing}
\Cref{thm:dis adscc} is our main result about distinguishing \adsccs; it shows that in a distinguishing \adscc, there is a way to distinguish between states that are distinguishable in practice, by following a strategy staying in the \adscc that separates at the limit the belief into disjoint parts.
\distSECs*
We fix for this section a \deterministic POMDP $\pdp = \pdpFull$, a distinguishing maximal \adscc $\sccFunc$, and a sub-belief~$\bel{b}$ such that $\bsupp = \supp{\bel{b}}$ belongs to $\dom{\sccFunc}$.

We first prove the ``$\le$'' inequality of \Cref{thm:dis adscc}, which follows directly from the definition of $\preach$.

\begin{lemma}\label{lem:dis adscc leq}
\begin{equation*}
\preach(\bel{b}) \le
\sum_{C \in \bsupp/{\indistinguishable}}
\preach(\res{\bel{b}}{C}) \ .
\end{equation*}
\end{lemma}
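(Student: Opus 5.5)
The plan is to decompose the probability measure along the partition $\bsupp/{\indistinguishable}$ of the initial support, and use subadditivity of the supremum. The key observation is that the probability measure $\pdist{\bel{b}}{\strategy}$ on runs is linear in the initial (sub-)belief. Indeed, by the inductive definition of $\pdist{\bel{b}}{\strategy}$ on cylinders, the probability of a history starting in state $\state$ is $\bel{b}(\state)$ times a factor that depends only on the history, the strategy and $\ptran$. Writing $\bel{b} = \sum_{C \in \bsupp/{\indistinguishable}} \res{\bel{b}}{C}$, which holds because the classes partition $\bsupp = \supp{\bel{b}}$, we will therefore obtain, for every cylinder and hence by uniqueness of the extension for every measurable set,
\[
\pdist{\bel{b}}{\strategy}(\reach{\target}) = \sum_{C \in \bsupp/{\indistinguishable}} \pdist{\res{\bel{b}}{C}}{\strategy}(\reach{\target}) \ .
\]
Here each $\pdist{\res{\bel{b}}{C}}{\strategy}$ is the (sub-probability) measure obtained from the sub-belief, as permitted by the definitions in \Cref{sec:prelim}.

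Next I will note that $\strategy$ is an observation-based strategy, i.e.\ a function $(\ac\times\sig)^* \to \Dist(\ac)$, defined independently of the initial belief. It is therefore also a legitimate strategy when evaluated from each $\res{\bel{b}}{C}$. Hence $\pdist{\res{\bel{b}}{C}}{\strategy}(\reach{\target}) \le \preach(\res{\bel{b}}{C})$ for each class $C$. Summing these bounds gives $\pdist{\bel{b}}{\strategy}(\reach{\target}) \le \sum_{C} \preach(\res{\bel{b}}{C})$ for every $\strategy$, and taking the supremum over $\strategy$ yields the claim.

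The only point requiring care is the linearity step. I will verify it on cylinders $\cyl{\history}$ by induction on the length of $\history$: the base case is $\bel{b}(\state) = \sum_C \res{\bel{b}}{C}(\state)$, since exactly one class contains $\state$, and the inductive step multiplies both sides by the same factor $\strategy(\projObs(\history'))(\action)\cdot\ptran(\last(\history'),\action)(\obs,\state)$. I then extend to $\reach{\target}$, either as a countable disjoint union of cylinders (histories visiting $\target$ for the first time at their last state, as in the proof of \Cref{lem:single target}) or via uniqueness in the Ionescu-Tulcea extension. Nothing about distinguishability or posterior-determinism is used here; this inequality holds for any partition of the support, and the structural hypotheses matter only for the reverse inequality.
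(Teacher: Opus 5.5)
Your proof is correct and is essentially the paper's argument: the paper also writes $\pdist{\bel{b}}{\strategy}(\reach{\target})$ as the sum over classes $C$ of $\pdist{\res{\bel{b}}{C}}{\strategy}(\reach{\target})$, then concludes because the supremum of a sum is at most the sum of the suprema. Your explicit check of linearity in the initial sub-belief (induction on cylinders, then extension to $\reach{\target}$ via the countable disjoint union of first-visit cylinders) fills in a step the paper states without justification, and your remark that the inequality holds for any partition of the support is accurate.
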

\begin{proof}
    By definition,
    $
    \preach(\bel{b}) =
    \sup_{\strategy} \pdist{\bel{b}}{\strategy}(\reach{\target})
    $.
    For a fixed strategy $\strategy$, we have
    \[
        \pdist{\bel{b}}{\strategy}(\reach{\target}) =
        \sum_{C \in \bsupp/{\indistinguishable}}
        \pdist{\res{\bel{b}}{C}}{\strategy}(\reach{\target}) \ .
    \]
    We conclude that
    \begin{align*}
        \preach(\bel{b})
        &= \sup_{\strategy}
             \sum_{C \in \bsupp/{\indistinguishable}}
             \pdist{\res{\bel{b}}{C}}{\strategy}(\reach{\target})\\
        &\le \sum_{C \in \bsupp/{\indistinguishable}}
             \sup_{\strategy}
             \pdist{\res{\bel{b}}{C}}{\strategy}(\reach{\target}) \\
        &= \sum_{C \in \bsupp/{\indistinguishable}}
           \preach(\res{\bel{b}}{C}) \ ,
    \end{align*}
    where the inequality follows from the fact that the supremum of a sum is at most the sum of the suprema, since the strategies maximizing each term may differ.
\end{proof}

The rest of \Cref{sec:distinguishing} is dedicated to the more involved proof of the ``$\ge$'' inequality of \Cref{thm:dis adscc}, which will be stated in \Cref{lem:dis adscc geq}.
We will study the properties of the martingale $(\randv{\state}{n})_{n\in\N}$ (from Section~\ref{subsec:aPosteriori}) for \deterministic POMDPs.
Since~$\pdp$ is \deterministic, for an observable history $\history$ inside an \adscc, we have
\begin{equation} \label{eq:dist randv}
    \randv{\state}{n}(\history\run) =
    \transitionIter(\bel{b},\history)(\transitionIter(\state,\history))
    \quad
    \text{for all $\run\in(\ac\times\sig)^{\omega}$}\ .
\end{equation}
Indeed, in a \deterministic POMDP, there is only one possible state $\transitionIter(\state,\history)$ after observing $\history$ starting from~$\state$.
Moreover, inside an \adscc, the action of $\history$ on $\supp{\bel{b}}$ is injective (\Cref{lem:adscc same size}).
Therefore, we just have to compute the probability of that state in the belief $\transitionIter(\bel{b},\history)$ obtained after observing $\history$ starting from~$\bel{b}$.
We deduce the following.
\begin{lemma}\label{lem:randv non-zero}
    For all $\state\in\supp{\bel{b}}$, for all $n$, for all $\run\in(\ac\times\sig)^{\omega}$ that stays inside~$\sccFunc$, $\randv{\state}{n}(\run) \neq 0$.
\end{lemma}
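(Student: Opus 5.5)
The plan is to combine the explicit formula \eqref{eq:dist randv} with an induction on the length of the observed prefix. The claim then reduces to showing that, along a history inside $\sccFunc$ that has positive probability, the belief never assigns zero mass to the image $\transitionIter(\state,\history)$ of any initial state $\state\in\supp{\bel{b}}$.

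Fix $\state\in\supp{\bel{b}}$ and a run $\run$ staying inside $\sccFunc$, and let $\history = \partialRun{n}$. By \eqref{eq:dist randv}, $\randv{\state}{n}(\run) = \transitionIter(\bel{b},\history)(\transitionIter(\state,\history))$. It therefore suffices to prove, by induction on $|\history|$, that $\supp{\transitionIter(\bel{b},\history)} = \transitionIter(\supp{\bel{b}},\history)$ and that $\transitionIter(\state,\history)$ is defined and lies in this support. The base case is immediate because $\bel{b}(\state)>0$.

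For the inductive step, write $\history' = \history\action\obs$ with $\action\in\sccFunc(\transitionIter(\bsupp,\history))$. Since every support in $\dom{\sccFunc}$ has the same cardinality (\Cref{lem:adscc same size}), the map $\state''\mapsto\transition(\state'',\action,\obs)$ is injective on the current support and total on it. Otherwise $|\transition(\bsupp_\history,\action,\obs)|<|\bsupp_\history|$, which contradicts condition 2 of \Cref{def:sec} together with \Cref{lem:adscc same size}.

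The one step that needs care is to rule out the case $\transition(\bsupp_\history,\action,\obs)=\emptyset$. This case cannot occur on a run, since the observation was actually received with positive probability, so the resulting support is a genuine element of $\dom{\sccFunc}$. Hence every state of $\bsupp_\history$, in particular $\state_\history$, has $\ptran(\obs\mid\state_\history,\action)>0$. Applying \Cref{lem:bel up aodscc} then gives
\[
\transition(\bel{b}_\history,\action,\obs)(\transition(\state_\history,\action,\obs)) = \frac{\ptran(\obs\mid\state_\history,\action)\cdot\bel{b}_\history(\state_\history)}{\ptran(\obs\mid\bel{b}_\history,\action)} > 0,
\]
using the induction hypothesis $\bel{b}_\history(\state_\history)>0$. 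This completes the induction.

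I expect the main obstacle to be the justification that each state in the current support can emit $\obs$ under $\action$, i.e.\ that the posterior-deterministic transition is total on the support. This follows from the equal-cardinality property of supports in an SEC.
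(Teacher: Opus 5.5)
Your proof is correct and takes essentially the paper's route. The paper deduces the lemma directly from \eqref{eq:dist randv} together with the injectivity of the support update inside an SEC (\Cref{lem:adscc same size}). Your induction via \Cref{lem:bel up aodscc} simply spells out that deduction, including two points the paper leaves implicit: that $\transition(\cdot,\action,\obs)$ is total on the current support, and that along a run the successor support is never empty.
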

\begin{lemma}\label{lem:ratio mart}
For every $\state,\state'\in\supp{\bel{b}}$, $n\in\N$ and run $\run\in(\ac\times\sig)^{\omega}$ that stays inside $\sccFunc$,
\[
\frac{\randv{\state}{n+1}(\run)}{\randv{\state'}{n+1}(\run)}
=
\frac{\randv{\state}{n}(\run)}{\randv{\state'}{n}(\run)}
\cdot
\frac{\pgiven{\obs}{\transitionIter(\state,\partialRun{n}),\action}}{\pgiven{\obs}{\transitionIter(\state',\partialRun{n}),\action}} \ .
\]
\end{lemma}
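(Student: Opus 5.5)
We will derive the ratio identity from the explicit description of the martingale given in \eqref{eq:dist randv} together with \Cref{lem:bel up aodscc}. Fix $\state,\state'\in\supp{\bel{b}}$, $n\in\N$, and a run $\run$ staying inside $\sccFunc$. Write $\partialRun{n+1} = \partialRun{n}\action\obs$ and $\bel{b}_n = \transitionIter(\bel{b},\partialRun{n})$. Also write $\state_n = \transitionIter(\state,\partialRun{n})$ and $\state'_n = \transitionIter(\state',\partialRun{n})$. Since $\run$ stays inside $\sccFunc$, the support $\supp{\bel{b}_n}$ lies in $\dom{\sccFunc}$ and $\action\in\sccFunc(\supp{\bel{b}_n})$.

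First, by \eqref{eq:dist randv} we have $\randv{\state}{n}(\run) = \bel{b}_n(\state_n)$, and similarly at step $n+1$ we have $\randv{\state}{n+1}(\run) = \transition(\bel{b}_n,\action,\obs)(\transition(\state_n,\action,\obs))$. The same formulas hold for $\state'$. By \Cref{lem:randv non-zero}, all these quantities are non-zero, so the ratios in the statement are well defined. In particular $\pgiven{\obs}{\state_n,\action}$ and $\pgiven{\obs}{\state'_n,\action}$ are positive. Indeed, if $\pgiven{\obs}{\state_n,\action}$ were zero, \Cref{lem:bel up aodscc} would make $\randv{\state}{n+1}(\run)$ vanish.

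Next, apply \Cref{lem:bel up aodscc} to the belief $\bel{b}_n$, the action $\action$, the observation $\obs$, and each of the states $\state_n,\state'_n\in\supp{\bel{b}_n}$. This gives
\[
\randv{\state}{n+1}(\run) = \frac{\pgiven{\obs}{\state_n,\action}\cdot\bel{b}_n(\state_n)}{\ptran(\obs\mid\bel{b}_n,\action)}
\]
and the analogous formula for $\state'$. Taking the quotient, the common normalising factor $\ptran(\obs\mid\bel{b}_n,\action)$ cancels. Substituting $\bel{b}_n(\state_n) = \randv{\state}{n}(\run)$ and $\bel{b}_n(\state'_n) = \randv{\state'}{n}(\run)$ then yields exactly the claimed identity.

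The only delicate point is the applicability of \Cref{lem:bel up aodscc}. That lemma needs a genuine belief, whereas $\bel{b}$ is allowed to be a sub-belief here. However, the belief update \eqref{eq:belief update} normalises, so $\bel{b}_n$ is a belief for $n\ge 1$; for $n=0$ the ratio is invariant under rescaling $\bel{b}$. The lemma also relies on the injectivity of $\state\mapsto\transition(\state,\action,\obs)$ on supports within $\sccFunc$, which holds by \Cref{lem:adscc same size}. I expect no real obstacle beyond this bookkeeping.
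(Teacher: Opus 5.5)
Your proposal is correct and follows the paper's proof: both write $\randv{\state}{n}(\run)$ and $\randv{\state}{n+1}(\run)$ via \eqref{eq:dist randv}, apply \Cref{lem:bel up aodscc} to $\bel{b}_n$ with the in-SEC action $\action$, and cancel the common normaliser $\ptran(\obs\mid\bel{b}_n,\action)$, with \Cref{lem:randv non-zero} ensuring the ratios are well defined. Your extra remarks on sub-beliefs and on the positivity of $\pgiven{\obs}{\state_n,\action}$ are bookkeeping the paper leaves implicit, and the sub-belief concern does not arise anyway, since the formula in \Cref{lem:bel up aodscc} is unchanged when $\bel{b}$ is rescaled.
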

\begin{proof}
Let $\state, \state'\in\supp{\bel{b}}$ and $\run\in(\ac\times\sig)^{\omega}$ be a run from $\bel{b}$ inside $\sccFunc$.
Let $\history = \partialRun{n}$ and $(\action,\obs)$ be the pair following $\history$ in $\run$.
Let $\bel{b}_\history = \transitionIter(\bel{b},\history)$, $\state_\history = \transitionIter(\state,\history)$, and $\state'_\history = \transitionIter(\state',\history)$.
Since $\run$ is inside $\sccFunc$, $\history$ is also inside $\sccFunc$ and $\action\in \sccFunc(\supp{\bel{b}_\history})$.
Then, $\randv{\state}{n}(\run) = \bel{b}_\history(\state_\history)$ and $\randv{\state'}{n}(\run) = \bel{b}_\history(\state'_\history)$ by~\eqref{eq:dist randv}.
Also,
\begin{align*}
\randv{\state}{n+1}(\run)
\quad &=\quad 
\transition(\bel{b}_\history, \action, \obs)(\transition(\state_\history, \action, \obs)) & \text{by~\eqref{eq:dist randv}}\\
\quad &=\quad
\frac{\ptran(\obs\mid \state_\history,\action)\cdot \bel{b}_\history(\state_\history)}{\ptran(\obs\mid \bel{b}_\history,\action)}
& \text{(\Cref{lem:bel up aodscc})}\\
\quad &=\quad
\frac{\ptran(\obs\mid \state_\history,\action)\cdot \randv{\state}{n}(\run)}{\ptran(\obs\mid \bel{b}_\history,\action)} & \text{by~\eqref{eq:dist randv}}.
\end{align*}
Similarly
\[
\randv{\state'}{n+1}(\run)
\quad =\quad
\frac{\ptran(\obs\mid \state'_\history,\action)\cdot \randv{\state'}{n}(\run)}{\ptran(\obs\mid \bel{b}_\history,\action)} \ .
\]
The equality in the statement of the lemma follows from the two above equalities and \Cref{lem:randv non-zero}.
\end{proof}
\begin{corollary}\label{cor:same ratio}
    Let $\state\indistinguishable \state'$ be two non-distinguishable states in $\bsupp = \supp{\bel{b}}$.
    Let $\run\in(\actions\times\sig)^{\omega}$ be an observable run inside $\sccFunc$ for which $(\randv{\state}{n})_{n\in\N}$ converges to $\randv{\state}{\infty}$ and $(\randv{\state'}{n})_{n\in\N}$ converges to $\randv{\state'}{\infty}$.
    Then, $\bel{b}(\state)\cdot\randv{\state'}{\infty}(\run) = \bel{b}(\state')\cdot\randv{\state}{\infty}(\run)$.   
\end{corollary}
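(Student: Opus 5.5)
The plan is to show that the ratio $\randv{\state}{n}(\run)/\randv{\state'}{n}(\run)$ does not depend on $n$ along $\run$, and then pass to the limit. The starting point is \Cref{lem:ratio mart}: for every $n$, with $(\action,\obs)$ the $(n+1)$-th action-observation pair of $\run$,
\[
\frac{\randv{\state}{n+1}(\run)}{\randv{\state'}{n+1}(\run)}
=
\frac{\randv{\state}{n}(\run)}{\randv{\state'}{n}(\run)}
\cdot
\frac{\pgiven{\obs}{\transitionIter(\state,\partialRun{n}),\action}}{\pgiven{\obs}{\transitionIter(\state',\partialRun{n}),\action}} \ .
\]
The denominators are nonzero by \Cref{lem:randv non-zero}.

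Since $\run$ stays inside $\sccFunc$, the prefix $\partialRun{n}$ is an observable history inside $\sccFunc$ from $\bsupp$, and $\action\in\sccFunc(\transitionIter(\bsupp,\partialRun{n}))$. By \Cref{def:indistinguishable}, applied with $\history=\partialRun{n}$ to the $(\sccFunc,\bsupp)$-indistinguishable states $\state,\state'$, the two observation probabilities in the correction factor are equal. So the factor is $1$ whenever it is defined. It is defined along $\run$: the observation $\obs$ actually occurs, so by \Cref{lem:randv non-zero} both posteriors at step $n+1$ are positive, which forces both probabilities to be positive.

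Induction on $n$, together with $\randv{\state}{0}=\bel{b}(\state)$ and $\randv{\state'}{0}=\bel{b}(\state')$, then gives $\randv{\state}{n}(\run)\cdot\bel{b}(\state') = \randv{\state'}{n}(\run)\cdot\bel{b}(\state)$ for all $n$. I will state it in this cross-multiplied form, which avoids dividing by limits that could be $0$.

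Finally I let $n\to\infty$. By assumption both sequences converge along $\run$, to $\randv{\state}{\infty}(\run)$ and $\randv{\state'}{\infty}(\run)$, and an equality between products of convergent sequences with constants passes to the limit. This yields $\bel{b}(\state)\cdot\randv{\state'}{\infty}(\run) = \bel{b}(\state')\cdot\randv{\state}{\infty}(\run)$.

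The only delicate step is checking that the indistinguishability hypothesis really applies at every step, i.e. that the relevant history is inside $\sccFunc$ from $\bsupp$. This holds directly because $\run$ stays in $\sccFunc$, so I do not expect a genuine obstacle.
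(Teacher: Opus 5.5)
Your proposal is correct and follows essentially the same argument as the paper. The paper likewise applies \Cref{lem:ratio mart} with indistinguishability to show that the ratio $\randv{\state'}{n}(\run)/\randv{\state}{n}(\run)$ stays equal to $\bel{b}(\state')/\bel{b}(\state)$ for all $n$, then passes to the limit. Your extra checks (positivity of the observation probabilities, and the cross-multiplied form) only add rigour.
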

\begin{proof}
    Let $r = \frac{\bel{b}(\state')}{\bel{b}(\state)}$.
    Recall that $\randv{\state}{0}(\run) = \bel{b}(\state)$ and $\randv{\state'}{0}(\run) = \bel{b}(\state')$.
    Thus, using \Cref{lem:ratio mart} and the fact that $\state$ and $\state'$ are not $(\sccFunc, \supp{\bel{b}})$-distinguishable,
    we get that
    \[
    \randv{\state'}{n}(\run) = r\cdot\randv{\state}{n}(\run)
    \]
    for every $n\in\N$.
    Since $(\randv{\state}{n}(\run))_{n\in\N}$ converges to $\randv{\state}{\infty}(\run)$ (\Cref{cor:mart convg}), this implies
    \[
    \randv{\state'}{\infty}(\run) = r\cdot\randv{\state}{\infty}(\run)\ .
    \qedhere
    \]
\end{proof}
\begin{lemma}\label{lem:dist => zero}
    Consider two distinguishable states $\state\distinguishable \state'$ in the support $\bsupp = \supp{\bel{b}}$.
    Let $\run\in(\actions\times \sig)^{\omega}$ be an observable run inside $f$ from $\bel{b}$.
    If $\run$ distinguishes $\state$ and $\state'$ infinitely often,
    then at least one of the values $\randv{\state}{\infty}(\run)$ and $\randv{\state'}{\infty}(\run)$ is equal to $0$ almost surely.
\end{lemma}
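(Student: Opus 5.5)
We will argue by contradiction using the ratio martingale of \Cref{lem:ratio mart}. Fix $\state \distinguishable \state'$ and consider the event $E$ of observable runs $\run$ inside $\sccFunc$ that distinguish $\state$ and $\state'$ infinitely often, on which both $(\randv{\state}{n})$ and $(\randv{\state'}{n})$ converge (almost sure by \Cref{cor:mart convg}), and on which $\randv{\state}{\infty}(\run)>0$ and $\randv{\state'}{\infty}(\run)>0$. We aim to show $\pdist{\bel{b}}{\strategy}(E)=0$. On $E$, the ratio $R_n = \randv{\state}{n}/\randv{\state'}{n}$ is well defined by \Cref{lem:randv non-zero} and converges to the finite positive limit $\randv{\state}{\infty}/\randv{\state'}{\infty}$. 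By \Cref{lem:ratio mart}, $R_{n+1} = R_n \cdot \ptran(\obs_{n+1}\mid \state_n,\action_{n+1})/\ptran(\obs_{n+1}\mid \state'_n,\action_{n+1})$, where $\state_n=\transitionIter(\state,\partialRun{n})$ and similarly for $\state'_n$. Convergence of $R_n$ to a positive limit forces this multiplicative factor to tend to $1$.

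The key finiteness observation is that there are only finitely many pairs of states and actions, so the quantities $\ptran(\obs\mid p,\action)$ take finitely many values. Consequently, whenever the factor differs from $1$, it is bounded away from $1$ by a constant $\eta>0$ depending only on $\pdp$. At each index where the prefix distinguishes $\state$ and $\state'$, the factor is by definition different from $1$. Hence, on a run distinguishing them infinitely often, $|\ln R_{n+1}-\ln R_n|\ge \ln(1+\eta')$ infinitely often, for some $\eta'>0$. This contradicts convergence of $\ln R_n$ to a finite limit, so $E$ must be empty up to the null set where the martingales fail to converge.

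This approach requires care on one point, which I expect to be the main subtlety. Convergence of $R_n$ only forces the factor $\ptran(\obs\mid\state_n,\action)/\ptran(\obs\mid\state'_n,\action)$ to approach $1$. The factor can equal $1$ even when $\ptran(\cdot\mid \state_n,\action)\neq\ptran(\cdot\mid\state'_n,\action)$ as distributions, namely if the particular observation drawn happens to have equal probabilities. This is why the hypothesis is phrased in terms of histories that actually distinguish, i.e.\ those whose observed $\obs$ has unequal probabilities. With that definition, the discreteness argument above goes through directly, giving the statement pathwise for almost every such run, without needing any Borel--Cantelli step.

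If instead one wants the stronger conclusion that runs merely passing infinitely often through distinguishing configurations are a.s.\ distinguished, I would add a conditional Borel--Cantelli (Lévy) argument. Each time such a configuration occurs, a distinguishing observation has conditional probability at least $\pmin$ (under the current strategy restricted to $\sccFunc$), so it occurs infinitely often almost surely. Finally, I would check that the zero-limit case is consistent: if one of $\randv{\state}{\infty},\randv{\state'}{\infty}$ is $0$, the ratio may diverge, which is exactly the allowed conclusion.
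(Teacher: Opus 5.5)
Your proposal is correct and is essentially the paper's proof. On runs where both limits are positive, the ratio $\randv{\state}{n}/\randv{\state'}{n}$ converges to a finite positive limit, so by \Cref{lem:ratio mart} the one-step factor tends to $1$; since it ranges over a finite set, it eventually equals $1$, so only finitely many prefixes distinguish $\state$ and $\state'$. Your closing Borel--Cantelli remark is not needed for this lemma: the paper proves separately, in \Cref{lem:all compl dist}, that distinguishing happens infinitely often almost surely under $\unif{\sccFunc}$.
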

\begin{proof}
    Let $\run = \action_1\obs_1\action_2\obs_2\ldots$\
    We prove the contrapositive.
    Assume $\randv{\state}{\infty}(\run)$ and $\randv{\state'}{\infty}(\run)$ are both non-zero---we will show that $\run$ distinguishes $\state$ and $\state'$ only finitely many times.

    Let
    \[
    r = \frac{\randv{\state}{\infty}(\run)}{\randv{\state'}{\infty}(\run)}
    \quad\text{and, for $n\in\N$,}\quad
    r_n = \frac{\randv{\state}{n}(\run)}{\randv{\state'}{n}(\run)}
    \ .
    \]
    Note that $r_n$ is well-defined due to \Cref{lem:randv non-zero}.
    Using \Cref{cor:mart convg}, 
    we get that the sequence $(r_n)_{n\in\N}$ converges to $r$ almost surely.
    Let
    \[
    w_n = \frac{\pgiven{\obs_n}{\transitionIter(\state,\partialRun{n}),\action_n}}{\pgiven{\obs_n}{\transitionIter(\state',\partialRun{n}),\action_n}}
    \text{ for $n\ge 1$}\ .
    \]
    Then, \Cref{lem:ratio mart} implies $\frac{r_{n}}{r_{n-1}} = w_n$ for every $n\ge 1$.
    Since $(r_n)_{n\in\N}$ converges almost surely, we get that the sequence $(w_n)_{n\ge 1}$ converges to~$1$ almost surely.
    Since the sets $\states$, $\actions$, and $\sig$ are finite,
    so is the set $\setof{w_n}{n\ge 1}$.
    Since $(w_n)_{n\ge 1}$ converges to $1$, this implies
    $w_n = 1$ for all but finitely many $n\ge 1$.
    Observe that $w_n \neq 1$ if and only if $\partialRun{n}$ distinguishes $\state$ and $\state'$.
    Hence, $\run$ distinguishes $\state$ and $\state'$ only finitely many times.
\end{proof}
\begin{lemma}\label{lem:small dist hist}
    There exists $N\in\N$ and $\smallProb > 0$ such that, for all beliefs $\bel{c}$ such that $\supp{\bel{c}} \in \dom{\sccFunc}$, for all $\state\distinguishableSupp{\supp{\bel{c}}} \state'\in \supp{\bel{c}}$, there exists an observable history $\history$ inside $\sccFunc$ from~$\bel{c}$ of length at most $N$ that distinguishes $\state$ and $\state'$ and such that
    \[
        \pdist{\bel{c}}{\unif{\sccFunc}}(\cyl{\history}) \geq \smallProb \ .
    \]
\end{lemma}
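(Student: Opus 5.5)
The plan is to exploit the fact that the bounds $N$ and $\smallProb$ need only depend on the finitely many supports in $\dom{\sccFunc}$ and pairs of states in them, not on the (infinitely many) beliefs $\bel{c}$. For the probability estimate I will use the fact that inside an SEC every observable history is possible from \emph{every} state of the current support.

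\textbf{Choosing the histories.} Fix a support $\bsupp' \in \dom{\sccFunc}$ and two states $\state \distinguishableSupp{\bsupp'} \state'$ in $\bsupp'$. By negating \Cref{def:indistinguishable}, there is an observable history $\history'$ inside $\sccFunc$ from $\bsupp'$, an action $\action \in \sccFunc(\transitionIter(\bsupp',\history'))$ and an observation $\obs$ such that $\ptran(\obs \mid \state_{\history'},\action) \neq \ptran(\obs \mid \state'_{\history'},\action)$. Then $\history = \history'\action\obs$ distinguishes $\state$ and $\state'$. One of these two probabilities is positive, so $\transition(\transitionIter(\bsupp',\history'),\action,\obs)$ is non-empty. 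Since $\action$ is an action of $\sccFunc$, this support lies in $\dom{\sccFunc}$, so $\history$ stays inside $\sccFunc$.

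Choose such a $\history$ of minimal length, denoted $\history_{\bsupp',\state,\state'}$. There are only finitely many triples $(\bsupp',\state,\state')$, so $N = \max \length{\history_{\bsupp',\state,\state'}}$ is finite. If an explicit bound is wanted, a pigeonhole argument on the triples $(\transitionIter(\bsupp',\cdot),\state_\cdot,\state'_\cdot)$ along a minimal history gives $N \le 2^{|\states|}\cdot|\states|^2 + 1$, but finiteness is all that is needed. Crucially, $\history_{\bsupp',\state,\state'}$ depends only on $\supp{\bel{c}}$, since distinguishability is defined relative to supports.

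\textbf{Lower bound on the probability.} The key observation uses \Cref{lem:adscc same size} and posterior-determinism. Along any history $\history$ inside $\sccFunc$ from $\bsupp'$, all supports $\transitionIter(\bsupp',\partialRun{k})$ lie in $\dom{\sccFunc}$ and have cardinality $|\bsupp'|$. Each state of $\bsupp'$ has at most one successor at each step, so every state $p \in \bsupp'$ must have a successor at every step. Hence $\ptran(\obs_k \mid \transitionIter(p,\partialRun{k-1}),\action_k) \ge \pmin$ for every step $k$ and every $p \in \bsupp'$.

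The strategy $\unif{\sccFunc}$ plays each prescribed action with probability at least $1/|\actions|$. Therefore
\[
\pdist{\bel{c}}{\unif{\sccFunc}}(\cyl{\history}) \ \ge\ \sum_{p\in\supp{\bel{c}}} \bel{c}(p)\cdot\Big(\frac{\pmin}{|\actions|}\Big)^{\length{\history}} \ \ge\ \Big(\frac{\pmin}{|\actions|}\Big)^{N},
\]
using $\oneNorm{\bel{c}} = 1$. So $\smallProb = (\pmin/|\actions|)^N$ works.

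\textbf{Main obstacle.} The delicate point is uniformity in $\bel{c}$. Weighting the cylinder only by $\bel{c}(\state)$ or $\bel{c}(\state')$, which may be arbitrarily small, would fail. This is why the plan relies on the injectivity/equal-size argument: it makes the chosen history feasible from all states simultaneously, so the full mass $\oneNorm{\bel{c}}$ contributes.

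The bound as stated uses that $\bel{c}$ is a genuine belief. For a sub-belief one would obtain $\oneNorm{\bel{c}}\cdot\smallProb$ instead.
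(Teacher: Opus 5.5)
Your proof is correct and is essentially the paper's argument. $N$ comes from choosing one distinguishing history for each of the finitely many (support, pair) choices, and constant support cardinality inside the SEC makes every step of the history have probability at least $\pmin$ from every state of the support, which gives $\smallProb = (\pmin/|\actions|)^N$. The only differences are cosmetic: you sum path probabilities over initial states where the paper multiplies belief-level factors $\ptran(\obs_i \mid \bel{c}_{i-1}, \action_i) \geq \pmin$, and you explicitly check that the distinguishing history stays inside the SEC, which the paper leaves implicit.
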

\begin{proof}
    By definition of distinguishability, for every $\state\distinguishableSupp{\bsupp'} \state'\in \bsupp'\in \dom{\ascc}$, there exists a history $\history_{\state,\state'}$ that distinguishes $\state$ and $\state'$.
    Since there are finitely many pairs of states in the finitely many belief supports of $\dom{\ascc}$, we can take $N$ to be the maximum length of all histories $\history_{\state,\state'}$.

    Now, let $\pmin = \min \set{\ptran(\obs, \state' \mid \state, \action) \mid \ptran(\obs, \state' \mid \state, \action) > 0}$ be the smallest non-zero probability occurring in the syntactic description of~$\pdp$.
    We take $\smallProb = \left(\frac{\pmin}{|\actions|}\right)^N$.

    Let $\bel{c}$ be a belief such that $\supp{\bel{c}} \in \dom{\sccFunc}$, and let $\state\distinguishableSupp{\supp{\bel{c}}} \state'$ be two states in $\supp{\bel{c}}$.
    We take $\history = (a_1,\obs_1)(a_2,\obs_2)\ldots (a_k,\obs_k)$ to be any history inside $\sccFunc$ from $\bel{c}$ of length at most $N$ that distinguishes $\state$ and $\state'$.
    We give a lower bound on $\pdist{\bel{c}}{\unif{\sccFunc}}(\cyl{\history})$ by considering every step of $\history$.
    When an action is chosen at step~$i$, it has probability at least $\frac{1}{|\actions|}$ to be $\action_i$ since $\unif{\sccFunc}$ chooses actions uniformly from a subset of $\actions$.
    The probability to observe $\obs_i$ at step~$i$ given that action $\action_i$ was chosen is then given by $\ptran(\obs_i \mid \bel{c}_{i-1}, \action_i)$ where $\bel{c}_{i-1}$ is the belief after observing the prefix of $\history$ of length $i-1$ (in particular, $\bel{c}_0 = \bel{c}$).
    Thus,
    \[
    \pdist{\bel{c}}{\unif{\sccFunc}}(\cyl{\history})
    \quad \ge\quad
    \frac{1}{|\actions|^k}\cdot\ptran(\obs_1 \mid \bel{c}_0, a_1)\cdot\ptran(\obs_2 \mid \bel{c}_1, a_2)\cdot \ldots\cdot \ptran(\obs_k \mid \bel{c}_{k-1}, a_k) \ .
    \]
    Since $\history$ is inside $\sccFunc$ from $\bel{c}$, and the cardinality of the belief supports along the way is constant (\Cref{lem:adscc same size}), we have that for all $1\le i \le k$, for all $\state''\in\supp{\bel{c}_{i-1}}$, $\ptran(\obs_i \mid \state'', a_i) > 0$, and thus $\ptran(\obs_i \mid \state'', a_i) \geq \pmin$.
    Hence,
    \begin{align*}
    \ptran(\obs_i \mid \bel{c}_{i-1}, a_i)
    \quad &=\quad
    \sum_{\state''\in\supp{\bel{c}_{i-1}}}
    \bel{c}_{i-1}(\state'')\cdot \ptran(\obs_i \mid \state'', a_i)\\
    \quad &\ge\quad
    \pmin \cdot \sum_{\state''\in\supp{\bel{c}_{i-1}}} \bel{c}_{i-1}(\state'')\\
    \quad &=\quad
    \pmin \ .
    \end{align*}
    Therefore,
    \[
    \pdist{\bel{c}}{\unif{\sccFunc}}(\cyl{\history})
    \quad \ge\quad
    \frac{1}{|\actions|^k}\cdot (\pmin)^k
    \quad \ge\quad
    \left(\frac{\pmin}{|\actions|}\right)^N
    \quad =\quad
    \smallProb \ .
    \qedhere
    \]
\end{proof}
\begin{lemma}\label{lem:all compl dist}
With respect to the probability measure $\pdist{\bel{b}}{\unif{\sccFunc}}$, for all pairs $\state\distinguishable \state'$,
almost all observable runs distinguish $\state$ and $\state'$ infinitely often.
\end{lemma}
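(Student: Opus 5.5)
Fix a pair $\state\distinguishable \state'$ of $(\sccFunc,\bsupp)$-distinguishable states of $\bsupp=\supp{\bel{b}}$. The plan is a Borel--Cantelli / Lévy-type argument, using the uniform lower bound of \Cref{lem:small dist hist} repeatedly along the run. Under $\unif{\sccFunc}$ from $\bel{b}$, almost every observable run stays inside $\sccFunc$, since $\unif{\sccFunc}$ only plays actions of $\sccFunc$ and these keep the support in $\dom{\sccFunc}$. Fix such a run and any time $n$, and write $\history=\partialRun{n}$. By \Cref{lem:dist preserve}\eqref{item:dist preserve}, applied inductively along $\history$, the states $\state_\history=\transitionIter(\state,\history)$ and $\state'_\history=\transitionIter(\state',\history)$ are $(\sccFunc,\bsupp_\history)$-distinguishable. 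They are distinct because, by \Cref{lem:adscc same size}, the action of $\history$ on $\bsupp$ is injective.

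Next, apply \Cref{lem:small dist hist} to the current belief $\bel{c}=\transitionIter(\bel{b},\history)$, which is the conditional belief given $\history$. This gives a continuation $\history'$ of length at most $N$ inside $\sccFunc$ that distinguishes $\state_\history$ and $\state'_\history$ and has $\pdist{\bel{c}}{\unif{\sccFunc}}(\cyl{\history'})\ge\smallProb$. Two facts are needed here.

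First, $\unif{\sccFunc}$ depends only on the current support. Hence its continuation after $\history$ is again $\unif{\sccFunc}$ started from $\bel{c}$, and the conditional probability of $\history\history'$ given $\history$ equals $\pdist{\bel{c}}{\unif{\sccFunc}}(\cyl{\history'})\ge\smallProb$.

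Second, distinguishing composes with prefixes. The states reached from $\state,\state'$ via $\history\history'$ are the states reached from $\state_\history,\state'_\history$ via $\history'$. So if $\history'$ distinguishes $\state_\history$ and $\state'_\history$ from $\bsupp_\history$, then the prefix $\history\history'$ of the run distinguishes $\state$ and $\state'$ from $\bsupp$, and that prefix has length in $(n,n+N]$.

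Let $E_k$ be the event that some prefix of length in $(kN,(k+1)N]$ distinguishes $\state$ and $\state'$. The previous paragraph gives $\pdist{\bel{b}}{\unif{\sccFunc}}(E_k\mid\filtr{kN})\ge\smallProb$ almost surely. It follows that the probability that none of $E_k,\dots,E_{k+m}$ occurs is at most $(1-\smallProb)^{m+1}$: condition successively on $\filtr{jN}$ and use the tower property. Letting $m\to\infty$, almost surely some $E_j$ with $j\ge k$ occurs. Intersecting over the countably many $k$ shows that almost surely infinitely many $E_k$ occur, so almost every run distinguishes $\state$ and $\state'$ infinitely often. Finally, take the finite union of null sets over all distinguishable pairs.

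The main obstacle is the conditioning step. One must justify that the conditional law of the continuation given $\partialRun{n}$ is exactly $\pdist{\bel{c}}{\unif{\sccFunc}}$ with $\bel{c}$ the updated belief. This requires the standard Markov property of the belief process together with the fact that $\unif{\sccFunc}$ is support-memoryless. The identification of distinguishing continuations with distinguishing prefixes is definitional but should be checked against \Cref{def:indistinguishable}.
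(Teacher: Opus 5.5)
Your proposal is correct and follows essentially the same route as the paper: you carry distinguishability along the run with \Cref{lem:dist preserve}, restart \Cref{lem:small dist hist} from the current belief on consecutive blocks of length $N$, and conclude with a geometric bound over the blocks. Your explicit estimate $\pdist{\bel{b}}{\unif{\sccFunc}}(E_k\mid\filtr{kN})\ge\smallProb$, combined with the tower property, is in fact what the paper tacitly relies on when it passes from the unconditional bound $\pdist{\bel{b}}{\unif{\sccFunc}}(E_k)\ge\smallProb$ to the product bound $(1-\smallProb)^{k-j+1}$.
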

\begin{proof}
Let $\state, \state'\in\bsupp$ such that $\state\distinguishable \state'$, and let $D$ be the set of observable runs inside $\sccFunc$ from $\bel{b}$ that distinguish $\state$ and $\state'$ infinitely often.
We show that
$
\pdist{\bel{b}}{\unif{\sccFunc}}\left(D\right) = 1
$.

Let $N\in\N$, $\smallProb > 0$ be the numbers we get from \Cref{lem:small dist hist}.
For $n\in\N$, let $E_n$ be the set of all $\run\in(\actions\times\sig)^{\omega}$ that distinguish $\state$ and $\state'$ at some $k\in\set{n\cdot N,n\cdot N + 1,\dots,(n+1)\cdot N - 1}$.
Let $E_{\geq j} = \bigcup_{k \geq j} E_k$ for $j\in\N$.
Then,
\[
D = \bigcap_{j\in\N} E_{\geq j} \ .
\]
We show that $\pdist{\bel{b}}{\unif{\sccFunc}}(E_{\geq j}) = 1$ for every $j$,
which implies $\pdist{\bel{b}}{\unif{\sccFunc}}(D) = 1$.

Let $j\in\N$.
\Cref{lem:dist preserve,lem:small dist hist} imply that for every observable history $\history$ of length $k\cdot N$, there exists $z_\history\in (\actions\times\sig)^{\le N}$ such that $\history z_\history$ distinguishes $\state$ and $\state'$ and
\[
\pdist{\bel{b}}{\unif{\sccFunc}}(\cyl{\history z_\history} \mid \cyl{\history}) = \pdist{\bel{b}_\history}{\unif{\sccFunc}}(\cyl{z_\history}) \geq \smallProb\ .
\]
Hence, $\pdist{\bel{b}}{\unif{\sccFunc}}(E_k) \geq \smallProb$.
We obtain that
\[
\pdist{\bel{b}}{\unif{\sccFunc}}(E_{\ge j}^c) = \pdist{\bel{b}}{\unif{\sccFunc}}\left(\bigcap_{k \geq j} E_k^c\right) \leq \lim_{k\to\infty}
(1 - \smallProb)^{k - j + 1} = 0 \ .
\]
Thus, $\pdist{\bel{b}}{\unif{\sccFunc}}(E_{\ge j}) = 1$, as required.
\end{proof}

For an equivalence class $C$ of $\indistinguishable$ and $n\in\N$,
let \emphdef{$\randv{C}{n}$} denote the sum of the random variables
$
\sum_{\state\in C} \randv{\state}{n}
$.%
%
%
%
%
%
\begin{corollary}\label{cor:0 1 a.e.}
    With respect to the probability measure $\pdist{\bel{b}}{\unif{\sccFunc}}$, 
    for almost all observable runs $\run\in(\actions\times\sig)^{\omega}$ inside $\sccFunc$ from $\bel{b}$,
    $\randv{C}{\infty}(\run) = 1$ for exactly one equivalence class $C$ of $\indistinguishable$,
    and for the remaining equivalence classes $D$ of $\indistinguishable$, $\randv{D}{\infty}(\run) = 0$. 
\end{corollary}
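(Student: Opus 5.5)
The plan is to combine three facts: the martingales $(\randv{\state}{n})_{n\in\N}$ sum to $1$; two distinguishable states cannot both keep positive limiting posterior mass; and inside one equivalence class the limiting posteriors are proportional. Throughout, I work with the measure $\pdist{\bel{b}}{\unif{\sccFunc}}$. Under this measure, almost every observable run stays inside $\sccFunc$ from $\bel{b}$, since $\unif{\sccFunc}$ only plays actions in $\sccFunc$ of the current support and condition~2 of \Cref{def:sec} keeps the support in $\dom{\sccFunc}$. If $\bel{b}$ is a genuine sub-belief, I first normalise it to $\bel{b}/\oneNorm{\bel{b}}$. This changes neither the conditional probabilities $\randv{\state}{n}$ nor the null sets, so I can assume $\oneNorm{\bel{b}}=1$.

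\emph{Step 1 (total mass).} The cylinders $\cyl{\state}$ for $\state\in\bsupp$ partition the runs up to a null set. Hence $\sum_{\state\in\bsupp}\randv{\state}{n}=1$ for every $n$, and so $\sum_{C\in\bsupp/\indistinguishable}\randv{C}{n}=1$. By \Cref{cor:mart convg} there is an almost-sure event $A_0$ on which every $\randv{\state}{n}$ converges. On $A_0$ we get $\sum_{C}\randv{C}{\infty}=1$, with all $\randv{C}{\infty}\in[0,1]$.

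\emph{Step 2 (classes vanish as a whole).} Fix a class $C$ and $\state,\state'\in C$. On $A_0$, \Cref{cor:same ratio} gives $\randv{\state'}{\infty}=\frac{\bel{b}(\state')}{\bel{b}(\state)}\randv{\state}{\infty}$, and the factor is strictly positive. So on $A_0$, for each class $C$, either $\randv{\state}{\infty}=0$ for all $\state\in C$ (equivalently $\randv{C}{\infty}=0$), or $\randv{\state}{\infty}>0$ for all $\state\in C$.

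\emph{Step 3 (distinct classes exclude each other).} For each of the finitely many pairs $\state\distinguishable\state'$, \Cref{lem:all compl dist} says that almost every run distinguishes $\state$ and $\state'$ infinitely often. \Cref{lem:dist => zero} then gives that almost surely $\randv{\state}{\infty}=0$ or $\randv{\state'}{\infty}=0$. Let $A$ be the intersection of $A_0$ with these finitely many almost-sure events; it still has probability $1$.

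Now take a run in $A$ and two distinct classes $C\neq D$, and pick $\state\in C$, $\state'\in D$. These are distinguishable, so one of their limits vanishes. By Step~2, this means $\randv{C}{\infty}=0$ or $\randv{D}{\infty}=0$. Hence at most one class has a nonzero limit. Since the limits sum to $1$ by Step~1, exactly one class $C$ satisfies $\randv{C}{\infty}=1$, and all other classes $D$ satisfy $\randv{D}{\infty}=0$.

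The main obstacle is the measure-theoretic bookkeeping rather than any new idea. One must make sure \Cref{lem:dist => zero} is applied on the intersection of the convergence event with the infinitely-distinguishing event, since its conclusion is stated only almost surely. One must also make sure that normalising a sub-belief, or conditioning on starting in $\bsupp$, does not break the identity $\sum_\state\randv{\state}{n}=1$. Both points are handled by intersecting finitely many probability-one events, as in Step~3.
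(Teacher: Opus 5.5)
Your proof is correct and takes essentially the paper's route. Both combine three facts: $\sum_{\state}\randv{\state}{n}=1$, \Cref{cor:same ratio} within a class, and \Cref{lem:all compl dist} together with \Cref{lem:dist => zero} across classes.

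The differences are cosmetic. The paper fixes one state with nonzero limit and eliminates everything distinguishable from it, whereas you argue pairwise that at most one class survives. You also spell out the intersection of the finitely many almost-sure events and the normalisation of sub-beliefs, which the paper leaves implicit.
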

\begin{proof} 
    Let $\run\in(\actions\times\sig)^{\omega}$ be an observable run inside $\sccFunc$ from $\bel{b}$ that distinguishes every pair of distinguishable states (for $\indistinguishable$) infinitely often.
    Such runs happen almost surely under $\pdist{\bel{b}}{\unif{\sccFunc}}$ by \Cref{lem:all compl dist}.

    First, observe that for all $n\in\N$ and runs $\run\in(\actions\times\sig)^{\omega}$ inside $\sccFunc$ from $\bel{b}$, $\sum_{\state\in\states} \randv{\state}{n}(\run) = 1$.
    By \Cref{cor:mart convg}, this implies that for almost surely, 
    \begin{equation} \label{eq:sum randv inf 1}
        \sum_{\state\in\states} \randv{\state}{\infty} = 1\ .
    \end{equation}

    Hence, for almost all observable runs $\run$, we have that $\randv{\state}{\infty}(\run)$ is non-zero for some $\state\in\bsupp$.
    \Cref{lem:dist => zero} implies $\randv{\state'}{\infty}(\run) = 0$ for every $\state'\distinguishable \state$, and \Cref{cor:same ratio} implies $\randv{\state'}{\infty}(\run) \neq 0$ for every $\state' \indistinguishable \state$.
    Now, applying~\eqref{eq:sum randv inf 1}, we get that $\randv{[\state]_{\indistinguishable}}{\infty}(\run) = 1$ and $\randv{[\state']_{\indistinguishable}}{\infty}(\run) = 0$ for every $\state'\distinguishable \state$.
\end{proof}
Define a sequence of random variables \emphdef{$(\valrand{n})_{n\in\N}$} adapted to the filtration $(\filtr{n})_{n\in\N}$ as:
\[
\valrand{n}(\run) = \preach(\transitionIter(\bel{b},\partialRun{n}))
\qquad
\text{for $\run \in (\actions \times \sig)^\omega$ an observable run inside $\sccFunc$ from $\bel{b}$.}
\]

\begin{lemma}
    The sequence $(\valrand{n})_{n\in\N}$ defines a supermartingale.
    Moreover,
    $
    \expect{\bel{b}}{\unif{\ascc}}{\valrand{n}} \leq 1
    $
    for all $n$.
\end{lemma}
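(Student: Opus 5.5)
Since every $\valrand{n}$ takes values in $[0,1]$, integrability and the bound $\expect{\bel{b}}{\unif{\ascc}}{\valrand{n}} \leq 1$ are immediate. Adaptedness holds by construction: $\valrand{n}(\run)$ depends only on $\partialRun{n}$, so it is constant on the cylinders of $\filtr{n}$. (Note that $\bel{b}$ may be a sub-belief, but $\preach$ is still bounded by $\oneNorm{\bel{b}} \le 1$.) It remains to show the supermartingale inequality
\[
\expect{\bel{b}}{\unif{\ascc}}{\valrand{n+1} \mid \filtr{n}} \leq \valrand{n} .
\]

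Because $\filtr{n}$ is generated by the finitely many cylinders $\cyl{\history}$ with $\length{\history} = n$, I would check this inequality on each cylinder of positive probability. Fix such a $\history$ inside $\sccFunc$ from $\bel{b}$, and let $\bel{c} = \transitionIter(\bel{b},\history)$, normalised as a belief. Conditioned on $\cyl{\history}$, the strategy $\unif{\sccFunc}$ picks $\action \in \sccFunc(\supp{\bel{c}})$ uniformly at random. Given $\action$, the next observation is $\obs$ with probability $\ptran(\obs \mid \bel{c},\action)$, and the next belief is $\bel{c}_{\action,\obs}$. Hence
\[
\expect{\bel{b}}{\unif{\ascc}}{\valrand{n+1} \mid \cyl{\history}} = \frac{1}{|\sccFunc(\supp{\bel{c}})|}\sum_{\action\in\sccFunc(\supp{\bel{c}})} \sum_{\obs\in\sig} \ptran(\obs \mid \bel{c},\action)\cdot \preach(\bel{c}_{\action,\obs}) .
\]
By \eqref{eq:val bel act}, the inner sum equals $\preach(\bel{c},\action)$. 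By \eqref{eq:val bel upd}, each such term is at most $\max_{\action'}\preach(\bel{c},\action') = \preach(\bel{c}) = \valrand{n}$ on $\cyl{\history}$. A uniform average of quantities each bounded by $\valrand{n}$ is itself bounded by $\valrand{n}$, which gives the inequality.

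The only delicate point is the normalisation when $\bel{b}$ is a sub-belief. Here I would interpret $\transitionIter(\bel{b},\history)$ via the normalised update \eqref{eq:belief update}, and use that $\preach$ is positively homogeneous, so scaling by $\oneNorm{\bel{b}}$ commutes with the argument above. A second point to state explicitly is that runs leaving $\sccFunc$ have probability zero under $\unif{\sccFunc}$, so $\valrand{n}$ is defined almost surely. With these settled, the proof is a one-step Bellman inequality and I expect no real obstacle.
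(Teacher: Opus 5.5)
Your proposal is correct and follows essentially the same route as the paper: condition on the length-$n$ observable history, note that $\unif{\sccFunc}$ averages uniformly over the actions in $\sccFunc(\supp{\bel{c}})$, bound each $\preach(\bel{c},\action)$ by $\preach(\bel{c})$ via \eqref{eq:val bel upd}, and get the second claim from boundedness of the value. Your remarks on adaptedness, integrability, almost-sure definedness, and the sub-belief normalisation (which the paper's proof does not discuss) are sound additions to the same argument rather than a different approach.
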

\begin{proof}
To show that $(\valrand{n})_{n \in \mathbb{N}}$ is a supermartingale, we must verify that $\expect{\bel{b}}{\unif{\ascc}}{\valrand{n+1} \mid \filtr{n}} \leq \valrand{n}$.

Let $\run \in (\actions \times \sig)^\omega$ be an observable run inside $\sccFunc$ from $\bel{b}$.
Let $\bel{b}_n = \transitionIter(\bel{b},\partialRun{n})$ be the belief at step $n$ along $\run$.
By definition of $\valrand{n}$, we have $\valrand{n}(\run) = \preach(\bel{b}_n)$.
Given that the filtration~$\filtr{n}$ fixes the history up to step $n$, and that the strategy is $\unif{\ascc}$, the expectation of the next value is
\[
\expect{\bel{b}}{\unif{\ascc}}{\valrand{n+1} \mid \filtr{n}}(\run) = \sum_{\action\in\sccFunc(\supp{\bel{b}_n})} \frac{1}{|\sccFunc(\supp{\bel{b}_n})|}\cdot \preach(\bel{b}_n, \action) \ .
\]
Given $\preach(\bel{b}_n) = \max_{\action\in\actions} \preach(\bel{b}_n, \action)$ by~\eqref{eq:belief update}, we have
\[
\expect{\bel{b}}{\unif{\ascc}}{\valrand{n+1} \mid \filtr{n}}(\run)
\le \preach(\bel{b}_n)\cdot \sum_{\action\in\sccFunc(\supp{\bel{b}_n})} \frac{1}{|\sccFunc(\supp{\bel{b}_n})|} = \preach(\bel{b}_n) = \valrand{n}(\run) \ .
\]

For the second part of the lemma, simply observe that the value function $\preach$ is bounded above by $1$ for any belief $\bel{b}$, since it represents a probability. Therefore, for all $n$, we have $\expect{\bel{b}}{\unif{\ascc}}{\valrand{n}} \le 1$.
\end{proof}

By Doob's convergence theorem~\cite[Theorem 4.6.4]{Durrett19}, using that the random variables $\valrand{n}$ are bounded and thus uniformly integrable, we obtain the following.
\begin{lemma}
    The sequence $\valrand{n}$ converges almost surely and in $L^1$ to some integrable random variable \emphdef{$\valrand{\infty}$}.
\end{lemma}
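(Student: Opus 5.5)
This is a direct application of Doob's supermartingale convergence theorem (the version cited, \cite[Theorem 4.6.4]{Durrett19}), using the supermartingale property established in the preceding lemma.

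\emph{Integrability and boundedness.} Each $\valrand{n}$ is $\filtr{n}$-measurable, since it depends only on $\partialRun{n}$ through the belief $\transitionIter(\bel{b},\partialRun{n})$. It is also non-negative and bounded above by $\oneNorm{\bel{b}}\le 1$, because the reachability value of a (sub-)belief never exceeds its mass. Hence $\sup_n \expect{\bel{b}}{\unif{\ascc}}{|\valrand{n}|}\le 1<\infty$. This is the $L^1$-boundedness hypothesis of the theorem. (Equivalently, $(-\valrand{n})$ is a submartingale with $\sup_n \expect{\bel{b}}{\unif{\ascc}}{(-\valrand{n})^+}=0$.)

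\emph{Almost sure convergence.} Doob's theorem then yields an integrable random variable $\valrand{\infty}$ with $\valrand{n}\to\valrand{\infty}$ almost surely. The limit also lies in $[0,1]$, since pointwise limits preserve these bounds.

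\emph{Convergence in $L^1$.} Because $|\valrand{n}|\le 1$ uniformly, the family is uniformly integrable. I would invoke either the dominated convergence theorem with the constant $1$ as dominating function, or Vitali's theorem (\cite[Theorem 4.6.3]{Durrett19}); both upgrade almost sure convergence to $L^1$ convergence.

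\emph{Main obstacle.} There is essentially none. The only subtlety is that $\valrand{n}$ is defined only on runs staying inside $\sccFunc$ from $\bel{b}$. Under $\unif{\sccFunc}$ these runs have probability $1$, and one may extend $\valrand{n}$ arbitrarily (say by $0$) on the null complement without affecting any of the statements.
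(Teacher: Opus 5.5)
Your proposal is correct and follows the paper's argument: the paper likewise applies Doob's convergence theorem to the supermartingale $(\valrand{n})_{n\in\N}$ and uses that these random variables are bounded, hence uniformly integrable, to obtain both almost sure and $L^1$ convergence. Your remark about the null set of runs leaving $\sccFunc$ makes explicit a detail the paper leaves implicit; it does not change the argument.
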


For a non-zero sub-belief $\bel{b}$,
we use \emphdef{$\normalised{b}$} to denote the normalised belief $\frac{1}{\oneNorm{\bel{b}}}\cdot\bel{b}$.
\begin{lemma}\label{lem:lim val}
    Let $\run\in(\actions\times\sig)^{\omega}$ inside $\sccFunc$ from $\bel{b}$ and $C$ be an equivalence class of $\indistinguishable$.
    If $\randv{C}{n}(\run)$ converges to $\randv{C}{\infty}(\run)$ and $\randv{C}{\infty}(\run) = 1$,
    then $\valrand{\infty}(\run) = \preach(\normalised{\res{\bel{b}}{C}})$.
\end{lemma}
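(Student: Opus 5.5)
The plan is to compare, along the run $\run$, the current belief $\bel{b}_n = \transitionIter(\bel{b},\partialRun{n})$ with its restriction to the image of the class $C$. Write $C_n = \transitionIter(C,\partialRun{n})$. Since $\run$ stays inside $\sccFunc$ and supports in $\dom{\sccFunc}$ have constant size (\Cref{lem:adscc same size}), $\partialRun{n}$ acts injectively on $\bsupp$. Hence, by \eqref{eq:dist randv}, the mass of $\bel{b}_n$ on $C_n$ is exactly $\randv{C}{n}(\run)$, which tends to $1$ by hypothesis. Let $\bel{c}_n = \normalised{\res{\bel{b}_n}{C_n}}$. The argument has two steps: first show that $\valrand{n}(\run) - \preach(\bel{c}_n) \to 0$, then show that $\preach(\bel{c}_n) = \preach(\normalised{\res{\bel{b}}{C}})$ for every $n$.

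For the first step, I would use two elementary facts about the value on sub-beliefs, both of which follow from the linearity of $\pdist{\bel{d}}{\strategy}(\reach{\target})$ in the initial sub-belief $\bel{d}$ for each fixed strategy $\strategy$. The first is homogeneity: $\preach(\lambda\bel{d}) = \lambda\,\preach(\bel{d})$. The second is the sandwich
\[
\preach(\res{\bel{d}}{X}) \le \preach(\bel{d}) \le \preach(\res{\bel{d}}{X}) + \oneNorm{\bel{d}} - \oneNorm{\res{\bel{d}}{X}} ,
\]
where the upper bound uses that the mass outside $X$ contributes at most its weight. Applying this with $X = C_n$ gives
\[
\randv{C}{n}(\run)\cdot\preach(\bel{c}_n) \le \valrand{n}(\run) \le \randv{C}{n}(\run)\cdot\preach(\bel{c}_n) + 1 - \randv{C}{n}(\run) .
\]
Therefore $\valrand{n}(\run) - \preach(\bel{c}_n) \to 0$.

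For the second step, I would invoke \Cref{lem:ndist form scc}, which yields a non-distinguishing SEC $\sccFunc_{\state,\bsupp}$ for any $\state \in C$. Its domain contains $C$ (take the empty history) and every $C_n$. I then need to check that $\partialRun{n}$ is a history inside $\sccFunc_{\state,\bsupp}$ from $C$. This holds because at each step $k$ the action taken lies in $\sccFunc(\bsupp_k)$, and the definition of $\sccFunc_{\state,\bsupp}(C_k)$ includes $\sccFunc(\bsupp_k)$ via the same history prefix. Next, \Cref{lem:non-dist reach} applied to $\sccFunc_{\state,\bsupp}$ and the belief $\normalised{\res{\bel{b}}{C}}$ gives
\[
\transitionIter(\normalised{\res{\bel{b}}{C}},\partialRun{n}) = \textstyle\sum_{\state\in C}\frac{\bel{b}(\state)}{\oneNorm{\res{\bel{b}}{C}}}\,\transitionIter(\state,\partialRun{n}) .
\]
Separately, \Cref{cor:same ratio} (more precisely its finite-$n$ version from the proof, $\randv{\state'}{n} = \frac{\bel{b}(\state')}{\bel{b}(\state)}\randv{\state}{n}$) together with \eqref{eq:dist randv} shows that $\bel{c}_n$ is this same belief. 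So $\bel{c}_n \in \reach{\sccFunc_{\state,\bsupp}}(\normalised{\res{\bel{b}}{C}})$. By \Cref{lem:val in non-dist}, $\preach(\bel{c}_n) = \preach(\normalised{\res{\bel{b}}{C}})$ for all $n$.

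Combining the two steps, $\valrand{n}(\run) \to \preach(\normalised{\res{\bel{b}}{C}})$, and since $\valrand{n}(\run)$ also converges to $\valrand{\infty}(\run)$, the two limits coincide. I expect the main obstacle to be the second step: identifying $\bel{c}_n$ exactly with the belief obtained by updating $\normalised{\res{\bel{b}}{C}}$ along $\partialRun{n}$. This requires checking carefully that the run's actions are admissible for $\sccFunc_{\state,\bsupp}$, and that the non-distinguishing ratio argument of \Cref{lem:non-dist reach} is legitimately applied to the $C$-part even though the full belief $\bel{b}_n$ evolves inside a distinguishing SEC.
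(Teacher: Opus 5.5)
Your proposal is correct and follows the paper's proof essentially step for step. The paper also compares $\bel{b}_n$ with $\res{\bel{b}_n}{C_n}$, using a general 1-Lipschitz bound on $\preach$ where you use the equivalent linearity sandwich together with homogeneity. It then identifies $\normalised{\res{\bel{b}_n}{C_n}}$ with $\transitionIter(\normalised{\res{\bel{b}}{C}},\partialRun{n})$ via \Cref{cor:same ratio} and concludes with \Cref{lem:ndist form scc,lem:val in non-dist}, as you do; your explicit check that $\partialRun{n}$ is a history inside $\sccFunc_{\state,\bsupp}$ from $C$ fills in a detail the paper leaves implicit.
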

\begin{proof}
    First, we prove that the value function $\preach$ is $1$-Lipschitz with respect to the $1$-norm on beliefs, i.e., for all beliefs $\bel{b}, \bel{b}'$,
    \begin{equation} \label{eq:lip preach}
        |\preach(\bel{b}) - \preach(\bel{b}')|
        \le
        \oneNorm{\bel{b} - \bel{b}'} \ .
    \end{equation}
    To see this, observe that the value of a strategy $\strategy$ is the scalar product of the belief and the vector $(\pdist{\dirac{\state}}{\strategy}(\reach{\target}))_{\state\in\states}$ where $\dirac{\state}$ is the Dirac belief on state $\state$, i.e., $\pdist{\bel{b}}{\strategy}(\reach{\target}) = \bel{b} \cdot \pdist{\cdot}{\strategy}(\reach{\target})$.
    Let $\strategy$ be an $\epsilon$-optimal strategy from $\bel{b}$, thus $\preach(\bel{b}) \leq \pdist{\bel{b}}{\strategy}(\reach{\target}) + \epsilon$.
    Also, $\preach(\bel{b}') \geq \pdist{\bel{b}'}{\strategy}(\reach{\target})$ by definition.
    We therefore have
    \begin{align*}
        \preach(\bel{b}) - \preach(\bel{b}')
        &\le \pdist{\bel{b}}{\strategy}(\reach{\target}) - \pdist{\bel{b}'}{\strategy}(\reach{\target}) + \epsilon\\
        &\le (\bel{b} - \bel{b}') \cdot \pdist{\cdot}{\strategy}(\reach{\target}) + \epsilon\ .        
    \end{align*}
    Since $\inftyNorm{\pdist{\cdot}{\strategy}(\reach{\target})} \leq 1$, we get by Hölder's inequality that
    \[
        \preach(\bel{b}) - \preach(\bel{b}')
        \le \oneNorm{\bel{b} - \bel{b}'}\cdot \inftyNorm{\pdist{\cdot}{\strategy}(\reach{\target})} + \epsilon
        \le \oneNorm{\bel{b} - \bel{b}'} + \epsilon \ .
    \]
    Since this is true for every $\epsilon > 0$, and by symmetry of the roles of $\bel{b}$ and $\bel{b}'$, we obtain~\eqref{eq:lip preach}.

    We now focus on the statement of the lemma.
    Assume $\randv{C}{n}(\run)$ converges to $\randv{C}{\infty}(\run)$ and $\randv{C}{\infty}(\run) = 1$.

    For every $n\in\N$, let $\bel{b}_n = \transitionIter(\bel{b},\partialRun{n})$ and
    $C_n = \transitionIter(C,\partialRun{n})$.
    Note that $\valrand{n} = \preach(\bel{b}_n)$.
    For every $\varepsilon > 0$, there exists $N_{\varepsilon}\in\N$ such that
    \[
    \randv{C}{n}(\partialRun{n}) > 1 - \varepsilon
    \]
    for every $n\geq N_{\varepsilon}$.
    By definition of $\randv{C}{n}$, this implies
    \begin{equation} \label{eq:epsilons}
        \oneNorm{\bel{b}_n - \res{\bel{b}_n}{C_n}} < \varepsilon \quad\text{and}\quad
        \oneNorm{\res{\bel{b}_n}{C_n}} > 1 - \varepsilon
    \end{equation}
    for every $n\geq N_{\varepsilon}$.
    Using \Cref{cor:same ratio}, we get
    \[
    \normalised{\res{\bel{b}_n}{C_n}} =
    \transitionIter(\normalised{\res{\bel{b}}{C}},\partialRun{n}) \ .
    \] 
    \Cref{lem:ndist form scc,lem:val in non-dist} together imply that
    \[
    \preach(\normalised{\res{\bel{b}_n}{C_n}}) =
    \preach(\normalised{\res{\bel{b}}{C}}) \ .
    \]
    Now,~\eqref{eq:lip preach} and~\eqref{eq:epsilons} imply
    \[
     \preach(\res{\bel{b}_n}{C_n}) \leq \preach(\bel{b}_n) = \valrand{n}(\run) \leq
     \preach(\res{\bel{b}_n}{C_n}) + \varepsilon
    \]
    when $n\geq N_{\varepsilon}$.
    Also, for $n\geq N_{\varepsilon}$,
    \[
    \preach(\normalised{\res{\bel{b}}{C}})
    =
    \preach(\normalised{\res{\bel{b}_n}{C_n}})
    \geq
    \preach(\res{\bel{b}_n}{C_n})
    \geq
    (1 - \varepsilon)\cdot
    \preach(\normalised{\res{\bel{b}_n}{C_n}})
    = (1 - \varepsilon)\cdot
    \preach(\normalised{\res{\bel{b}}{C}})\ .
    \]
    Hence, for $n\geq N_{\varepsilon}$,
    \[
    (1 - \varepsilon)\cdot
     \preach(\normalised{\res{\bel{b}}{C}})
     \leq \valrand{n}(\run) \leq
     \preach(\normalised{\res{\bel{b}}{C}}) + \varepsilon \ .
    \]
    Since $(\valrand{n}(\run))_{n\in\N}$ converges to $\valrand{\infty}(\run)$,
    the above inequalities imply
    \[
    \valrand{\infty}(\run) = \preach(\normalised{\res{\bel{b}}{C}}) \ .
    \qedhere
    \]
\end{proof}
\begin{lemma}\label{lem:dis adscc geq}
\[
\preach(\bel{b})
\quad\geq\quad
\sum_{C \in \bsupp/{\indistinguishable}} \preach(\res{\bel{b}}{C}) \ .
\]
\end{lemma}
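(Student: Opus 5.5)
The plan is to use the supermartingale $(\valrand{n})_{n\in\N}$ under the measure $\pdist{\bel{b}}{\unif{\sccFunc}}$, together with the almost-sure limit identified in \Cref{lem:lim val}. Assume first that $\bel{b}$ is a genuine belief ($\oneNorm{\bel{b}}=1$). The general sub-belief case then follows by scaling, since both sides are positively homogeneous: $\preach(\lambda\bel{c}) = \lambda\preach(\bel{c})$, because $\pdist{\lambda\bel{c}}{\strategy} = \lambda \pdist{\bel{c}}{\strategy}$.

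\textbf{Step 1.} I will show $\preach(\bel{b}) \ge \expect{\bel{b}}{\unif{\sccFunc}}{\valrand{n}}$ for every $n$. Concretely, consider the strategy that follows $\unif{\sccFunc}$ for $n$ steps and then plays an $\eta$-optimal strategy from the current belief $\transitionIter(\bel{b},\partialRun{n})$. There are only finitely many observable histories of length $n$, so one such strategy can be chosen per history. This strategy reaches $\target$ with probability at least $\expect{\bel{b}}{\unif{\sccFunc}}{\valrand{n}} - \eta$. Letting $\eta\to 0$ gives the claim. This is just the usual argument that the value dominates the value obtained by delaying optimisation.

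\textbf{Step 2.} I will pass to the limit. By the $L^1$ convergence of $\valrand{n}$ to $\valrand{\infty}$, we get $\preach(\bel{b}) \ge \expect{\bel{b}}{\unif{\sccFunc}}{\valrand{\infty}}$. By \Cref{cor:0 1 a.e.}, almost every run inside $\sccFunc$ has $\randv{C}{\infty}=1$ for exactly one class $C$; call this event $A_C$. By \Cref{cor:mart convg}, the convergence $\randv{C}{n}\to\randv{C}{\infty}$ holds almost surely. So \Cref{lem:lim val} gives $\valrand{\infty} = \preach(\normalised{\res{\bel{b}}{C}})$ almost surely on $A_C$. Hence
\[
\expect{\bel{b}}{\unif{\sccFunc}}{\valrand{\infty}} = \sum_{C\in\bsupp/{\indistinguishable}} \pdist{\bel{b}}{\unif{\sccFunc}}(A_C)\cdot \preach(\normalised{\res{\bel{b}}{C}}) \ .
\]

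\textbf{Step 3.} I will compute $\pdist{\bel{b}}{\unif{\sccFunc}}(A_C)$. Since $\randv{C}{\infty}$ is almost surely the indicator of $A_C$, and the $\randv{C}{n}$ are martingales converging in $L^1$, \Cref{cor:exp leq b(q)} gives
\[
\pdist{\bel{b}}{\unif{\sccFunc}}(A_C) = \expect{\bel{b}}{\unif{\sccFunc}}{\randv{C}{\infty}} = \sum_{\state\in C}\bel{b}(\state) = \oneNorm{\res{\bel{b}}{C}} \ .
\]
Substituting, each term becomes $\oneNorm{\res{\bel{b}}{C}}\cdot\preach(\normalised{\res{\bel{b}}{C}}) = \preach(\res{\bel{b}}{C})$ by homogeneity, which yields the claimed inequality.

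\textbf{Main obstacle.} I expect the main difficulty to be making Step 1 and the measure-theoretic bookkeeping rigorous. One point is that $\valrand{n}$ is defined only on runs inside $\sccFunc$; this is harmless because $\unif{\sccFunc}$ never leaves $\sccFunc$ from $\bel{b}$. The other is making sure the almost-sure statements of \Cref{cor:0 1 a.e.} and \Cref{lem:lim val} hold simultaneously, which is fine since we intersect only finitely many almost-sure events. The identification of $\pdist{\bel{b}}{\unif{\sccFunc}}(A_C)$ through the martingale limit is the key conceptual point, and it relies on $L^1$ convergence rather than mere almost-sure convergence.
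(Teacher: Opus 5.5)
Your proposal is correct and follows essentially the same route as the paper: bound $\preach(\bel{b})$ below by $\lim_n \expect{\bel{b}}{\unif{\sccFunc}}{\valrand{n}}$, identify the limit via \Cref{cor:0 1 a.e.} and \Cref{lem:lim val}, and obtain the weights $\oneNorm{\res{\bel{b}}{C}}$ from the $L^1$ convergence of the posterior martingales. The differences are minor: you get $\expect{\bel{b}}{\unif{\sccFunc}}{\valrand{n}} \le \preach(\bel{b})$ by composing $\unif{\sccFunc}$ with $\eta$-optimal continuations rather than from the supermartingale property, and you spell out the reduction from sub-beliefs to beliefs by homogeneity, which the paper leaves implicit.
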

\begin{proof}
    We write $\expectSolo[\cdot]$ for $\expect{\bel{b}}{\unif{f}}{\cdot}$ for brevity.
    Recall from \Cref{cor:exp leq b(q)} that $\expectSolo[\randv{\state}{n}] = \bel{b}(\state)$ for all $\state\in\supp{\bel{b}}$ and $n\in\N$.
    Hence, $\expectSolo[\randv{C}{n}] = \bel{b}(C) = \oneNorm{\res{\bel{b}}{C}}$ for every equivalence class $C$ of $\indistinguishable$ and $n\in\N$.
    By the triangle inequality, we have
    \[
    |\expectSolo[\randv{C}{n}] - \expectSolo[\randv{C}{\infty}]|
    \leq
    \expectSolo[|\randv{C}{n} - \randv{C}{\infty}|]\ .
    \]
    Since $(\randv{C}{n})_{n\in\N}$ converges to $\randv{C}{\infty}$ in $L^1$ (by \Cref{cor:mart convg} and the linearity of expectations),
    $(\expectSolo[|\randv{C}{n} - \randv{C}{\infty}|])_{n\in\N}$ converges to $0$.
    Therefore,
    $
    \expectSolo[\randv{C}{\infty}]
    =
    \oneNorm{\res{\bel{b}}{C}}
    $.

    \Cref{cor:0 1 a.e.,lem:lim val} together imply that
    \begin{align*}
    \expectSolo[\valrand{\infty}]
    &=
    \sum_{C \in \bsupp/{\indistinguishable}}
    \expectSolo[\randv{C}{\infty}]\cdot
    \preach(\normalised{\res{\bel{b}}{C}})\\
    &=
    \sum_{C \in \bsupp/{\indistinguishable}}
    \oneNorm{\res{\bel{b}}{C}}\cdot
    \preach(\normalised{\res{\bel{b}}{C}})\\
    &=
    \sum_{C \in \bsupp/{\indistinguishable}}
    \preach(\res{\bel{b}}{C})\ .
    \end{align*}
    Since $(\valrand{n})_{n\in\N}$ is a supermartingale that converges to $\valrand{\infty}$ in~$L^1$,
    \[
    \preach(\bel{b}) =
    \expectSolo[\valrand{0}] \geq
    \lim_{n \to \infty} \expectSolo[\valrand{n}] =
    \expectSolo[\valrand{\infty}] =
    \sum_{C \in \bsupp/{\indistinguishable}}
    \preach(\res{\bel{b}}{C}) \ .
    \qedhere
    \]
\end{proof}

Along with \Cref{lem:dis adscc leq}, this concludes the proof of \Cref{thm:dis adscc}.

\section{Conclusion}
We introduced posterior-deterministic POMDPs and showed that, for this class, the reachability value can be approximated up to any prescribed precision.
The proof combines a finite unfolding scheme on beliefs with a structural analysis based on support end components, yielding a decision procedure for value approximation.

This result identifies a broad natural subclass of POMDPs where reachability approximation becomes algorithmically tractable, while preserving genuine partial observability.
Natural directions for future work include sharpening the complexity bounds, establishing matching lower bounds, and extending the approach to richer objective classes.

\bibliography{main.bib}
\bibliographystyle{alphaurl}

\end{document}